\pgfplotsset{compat=newest}
\newtheorem{theorem}{Theorem}
\newtheorem{lemma}{Lemma}
\newtheorem{assumption}{Assumption}
\newtheorem{proposition}{Proposition}
\newtheorem{definition}{Definition}
\title{Improved Bounds on Neural Complexity for Representing Piecewise Linear Functions}
\author{%
  Kuan-Lin Chen \\
  Department of Electrical and Computer Engineering\\
  University of California, San Diego\\
  La Jolla, CA 92093, USA \\
  \texttt{kuc029@ucsd.edu} \\
  \And
  Harinath Garudadri \\
  Qualcomm Institute\\
  University of California, San Diego\\
  La Jolla, CA 92093, USA \\
  \texttt{hgarudadri@ucsd.edu} \\
  \AND
  Bhaskar D. Rao \\
  Department of Electrical and Computer Engineering\\
  University of California, San Diego\\
  La Jolla, CA 92093, USA \\
  \texttt{brao@ucsd.edu} \\
}
\begin{document}

\maketitle

\begin{abstract}
A deep neural network using rectified linear units represents a continuous piecewise linear (CPWL) function and vice versa. Recent results in the literature estimated that the number of neurons needed to exactly represent any CPWL function grows exponentially with the number of pieces or exponentially in terms of the factorial of the number of distinct linear components. Moreover, such growth is amplified linearly with the input dimension. These existing results seem to indicate that the cost of representing a CPWL function is expensive. In this paper, we propose much tighter bounds and establish a polynomial time algorithm to find a network satisfying these bounds for any given CPWL function. We prove that the number of hidden neurons required to exactly represent any CPWL function is at most a quadratic function of the number of pieces. In contrast to all previous results, this upper bound is invariant to the input dimension. Besides the number of pieces, we also study the number of distinct linear components in CPWL functions. When such a number is also given, we prove that the quadratic complexity turns into bilinear, which implies a lower neural complexity because the number of distinct linear components is always not greater than the minimum number of pieces in a CPWL function. When the number of pieces is unknown, we prove that, in terms of the number of distinct linear components, the neural complexities of any CPWL function are at most polynomial growth for low-dimensional inputs and factorial growth for the worst-case scenario, which are significantly better than existing results in the literature.
\end{abstract}

\section{Introduction}
The rectified linear unit (ReLU) \citep{fukushima1980neocognitron,nair2010rectified} activation has been by far the most widely used nonlinearity and successful building block in deep neural networks (DNNs). Numerous architectures based on ReLU DNNs have achieved remarkable performance or state-of-the-art accuracy in speech processing \citep{zeiler2013rectified,maas2013rectifier}, computer vision \citep{krizhevsky2012imagenet,simonyan2015very,he2016deep}, medical image segmentation \citep{ronneberger2015u}, game playing \citep{mnih2015human,silver2016mastering}, and natural language processing \citep{vaswani2017attention}, just to name a few.
Besides such unprecedented empirical success, ReLU DNNs are also probably the most understandable nonlinear deep learning models due to their ability to be ``un-rectified'' \citep{hwang2019rectifying}.

The ability to demystify ReLU DNNs via ``un-rectifying ReLUs'' dates back to a seminal work by \citeauthor{pascanu2013number} in \citeyear{pascanu2013number}. Because each of ReLUs in a hidden layer divides the space of the preceding layer's output into two half spaces whose ReLU response is affine in one half space and exactly zero in the other, the layer of ReLUs can be replaced by an input-dependent diagonal matrix whose diagonal elements are ones for firing ReLUs and zeros for non-firing ReLUs. Based on this rationale, \cite{pascanu2013number} proved that a neural network using ReLUs divides the input space into many linear regions such that the network itself is an affine function within every region. Two excellent visualizations are shown in Figure 2 in \citep{hanin2019complexity} and \citep{hanin2019deep}. At this point, it is quite evident that any ReLU network exactly represents a CPWL function. \citeauthor{pascanu2013number} also proved that the maximum number of linear regions for any ReLU network with a single hidden layer is equivalent to the number of connected components induced by arrangements of hyperplanes in general position where each hyperplane corresponds to a ReLU in the hidden layer. Such a number can be computed in a closed form by Zaslavsky's Theorem \citep{zaslavsky1975facing}. Furthermore, they showed that the maximum number of linear regions can be bounded from below by exponential growth in terms of the number of hidden layers, leading to a conclusion that ReLU DNNs can generate more linear regions than their shallow counterparts. In the same year, \citeauthor{montufar2014number} improved such a lower bound and gave the first upper bound for the maximum number of linear regions. These bounds and their assumptions were later improved by \citep{montufar2017notes,raghu2017expressive,arora2018understanding,serra2018bounding,hinz2019framework}, just to name a few. We refer readers to \citeauthor{hinz2021analysis}'s doctoral thesis for a thorough discussion on the upper bound of the number of linear regions. Because a CPWL function with more pieces can better approximate any given continuous function and a ReLU DNN exactly represents a CPWL function \citep{arora2018understanding}, a ReLU DNN with more linear regions in general exhibits stronger expressivity. In summary, this ``un-rectifying'' perspective provides us a new angle to understand ReLU DNNs, and the results in some ways align with advances in approximation theory demonstrating the expressivity.\footnote{The approximation viewpoint is not the focus of this paper. The literature on approximation is vast and we refer readers to \citep{vardi2021size,lu2017expressive,eldan2016power,telgarsky2016benefits,hornik1989multilayer,cybenko1989approximation}, just to name a few.}

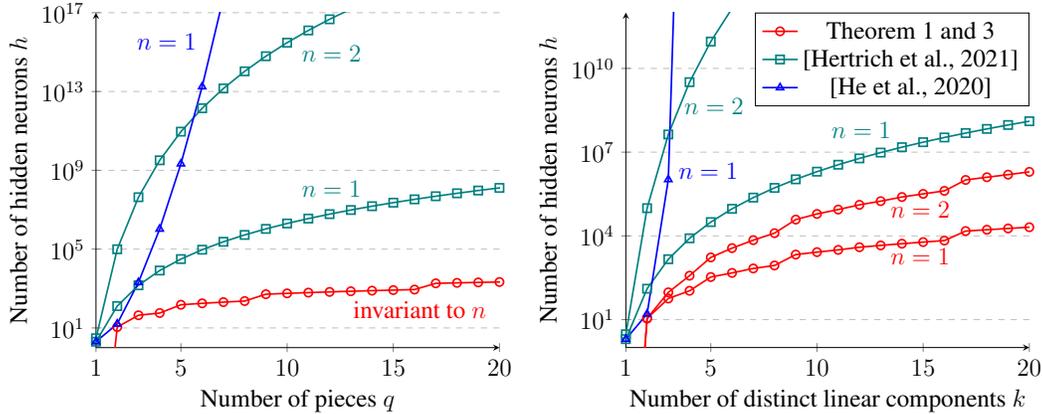
\begin{figure}[tb]
\centering
\resizebox{\columnwidth}{!}
{
\begin{tikzpicture}
\draw[very thick,teal] (4.0,2.7) node {\large $n=1$};
\draw[very thick,teal] (4.0,5.0) node {\large $n=2$};
\draw[very thick,blue] (1.2,5.2) node {\large $n=1$};
\draw[very thick,red] (5.5,0.65) node {\large invariant to $n$};
\begin{axis}[
    font=\large,
    axis lines = left,
    xlabel = Number of pieces $q$,
    ylabel = Number of hidden neurons $h$,
    xtick={1,5,10,15,20},
    xmax = 20,
    ymajorgrids=true,
    grid style=dashed,
    ymin = 1,
    ymax= 100000000000000000,
    ymode=log
]
\addplot [
    color=red,
    mark=o,
    thick,
]
coordinates {(1,0.000000001)
(2,11)
(3,44)
(4,57)
(5,150)
(6,177)
(7,204)
(8,231)
(9,514)
(10,567)
(11,620)
(12,673)
(13,726)
(14,779)
(15,832)
(16,885)
(17,1834)
(18,1937)
(19,2040)
(20,2143)
(21,2246)
(22,2349)
(23,2452)
(24,2555)
(25,2658)
(26,2761)
(27,2864)
(28,2967)
(29,3070)
(30,3173)
(31,3276)
(32,3379)
(33,6810)
(34,7011)
(35,7212)
(36,7413)
(37,7614)
(38,7815)
(39,8016)
(40,8217)};

\addplot [
    color=teal,
    mark=square,
    thick
    ]
    coordinates {(1,2)
(2,128)
(3,1458)
(4,8192)
(5,31250)
(6,93312)
(7,235298)
(8,524288)
(9,1062882)
(10,2000000)
(11,3543122)
(12,5971968)
(13,9653618)
(14,15059072)
(15,22781250)
(16,33554432)
(17,48275138)
(18,68024448)
(19,94091762)
(20,128000000)
(21,171532242)
(22,226759808)
(23,296071778)
(24,382205952)
(25,488281250)
(26,617831552)
(27,774840978)
(28,963780608)
(29,1189646642)
(30,1458000000)
(31,1775007362)
(32,2147483648)
(33,2582935938)
(34,3089608832)
(35,3676531250)
(36,4353564672)
(37,5131452818)
(38,6021872768)
(39,7037487522)
(40,8192000000)};

\addplot [
    color=teal,
    mark=square,
    thick
    ]
    coordinates {(1,3)
(2,98304)
(3,43046721)
(4,3221225472)
(5,91552734375)
(6,1410554953728)
(7,14242684529829)
(8,105553116266496)
(9,617673396283947)
(10,3000000000000000)
(11,12531744508246952)
(12,46221064723759104)
(13,153557679042272256)
(14,466704286673436672)
(15,1313681671142578176)
(16,3458764513820540928)
(17,8587269154529447936)
(18,20239921849432375296)
(19,45543381089624391680)
(20,98304000000000000000)
(21,204366955748855054336)
(22,410640204046236123136)
(23,799905706393173753856)
(24,1514571848868138319872)
(25,2793967723846435864576)
(26,5031778026857177284608)
(27,8862938119652501356544)
(28,15292966065715172868096)
(29,25887566242794551902208)
(30,43046721000000001671168)
(31,70395785975534054277120)
(32,113336795588871485128704)
(33,179816836496596251181056)
(34,281387635655620949966848)
(35,434652237848785367072768)
(36,663221759162200073699328)
(37,1000338803855445989523456)
(38,1492365511544812066570240)
(39,2203384855713413867765760)
(40,3221225472000000000000000)};

\addplot [
    color=blue,
    mark=triangle,
    thick
    ]
    coordinates
    {(1,2)
(2,16)
(3,2048)
(4,1048576)
(5,2147483648)
(6,17592186044416)
(7,576460752303423488)
(8,75557863725914323419136)
(9,39614081257132168796771975168)
(10,83076749736557242056487941267521536)
(11,696898287454081973172991196020261297061888)
(12,23384026197294446691258957323460528314494920687616)
(13,3138550867693340381917894711603833208051177722232017256448)
(14,1684996666696914987166688442938726917102321526408785780068975640576)
(15,3618502788666131106986593281521497120414687020801267626233049500247285301248)
(16,31082702275611665134711390509176302506278509424834232340028998555822468563283335970816)
(17,1067993517960455041197510853084776057301352261178326384973520803911109862890320275011481043468288)};
\end{axis}
\draw[very thick,blue] (10.4,3.0) node {\large $n=1$};
\draw[very thick,teal] (13.0,3.7) node {\large $n=1$};
\draw[very thick,teal] (10.5,4.1) node {\large $n=2$};
\draw[very thick,red] (14.0,1.55) node {\large $n=1$};
\draw[very thick,red] (14.0,2.35) node {\large $n=2$};
\begin{axis}[
    font=\large,
    at={(9cm,0cm)},
    axis lines = left,
    xlabel = Number of distinct linear components $k$,
    ylabel = Number of hidden neurons $h$,
    legend style={at={(1.0,1.0)},anchor=north east},
    ymode=log,
    xmax = 20,
    xtick={1,5,10,15,20},
    ymajorgrids=true,
    grid style=dashed,
    ymin = 1,
    ymax= 1000000000000
]

\addplot [
    color=red,
    mark=o,
    thick,
    forget plot
]
coordinates {(1,0.000000001)
(2,11)
(3,57)
(4,108)
(5,336)
(6,471)
(7,681)
(8,870)
(9,2142)
(10,2619)
(11,3149)
(12,3924)
(13,4560)
(14,5249)
(15,5991)
(16,6786)
(17,14866)
(18,16617)
(19,18471)
(20,20428)
(21,22488)};

\addplot [
    color=red,
    mark=o,
    thick
]
coordinates {(1,0.000000001)
(2,11)
(3,95)
(4,375)
(5,1695)
(6,3642)
(7,7023)
(8,12516)
(9,38412)
(10,61041)
(11,87806)
(12,129513)
(13,175623)
(14,246476)
(15,319563)
(16,409398)
(17,1008790)
(18,1262685)
(19,1563960)
(20,1967329)
(21,2380359)};

\addlegendentry{Theorem \ref{theorem:ReLu_network_upper_bound_q} and \ref{theorem:ReLu_network_upper_bound_k}}

\addplot [
    color=teal,
    mark=square,
    thick,
    forget plot
    ]
    coordinates {
    (1,2)
(2,128)
(3,1458)
(4,8192)
(5,31250)
(6,93312)
(7,235298)
(8,524288)
(9,1062882)
(10,2000000)
(11,3543122)
(12,5971968)
(13,9653618)
(14,15059072)
(15,22781250)
(16,33554432)
(17,48275138)
(18,68024448)
(19,94091762)
(20,128000000)
(21,171532242)
(22,226759808)
(23,296071778)
(24,382205952)
(25,488281250)
(26,617831552)
(27,774840978)
(28,963780608)
(29,1189646642)
(30,1458000000)
(31,1775007362)
(32,2147483648)
(33,2582935938)
(34,3089608832)
(35,3676531250)
(36,4353564672)
(37,5131452818)
(38,6021872768)
(39,7037487522)
(40,8192000000)};

\addplot [
    color=teal,
    mark=square,
    thick
    ]
    coordinates {
    (1,3)
(2,98304)
(3,43046721)
(4,3221225472)
(5,91552734375)
(6,1410554953728)
(7,14242684529829)
(8,105553116266496)
(9,617673396283947)
(10,3000000000000000)
(11,12531744508246952)
(12,46221064723759104)
(13,153557679042272256)
(14,466704286673436672)
(15,1313681671142578176)
(16,3458764513820540928)
(17,8587269154529447936)
(18,20239921849432375296)
(19,45543381089624391680)
(20,98304000000000000000)
(21,204366955748855054336)
(22,410640204046236123136)
(23,799905706393173753856)
(24,1514571848868138319872)
(25,2793967723846435864576)
(26,5031778026857177284608)
(27,8862938119652501356544)
(28,15292966065715172868096)
(29,25887566242794551902208)
(30,43046721000000001671168)
(31,70395785975534054277120)
(32,113336795588871485128704)
(33,179816836496596251181056)
(34,281387635655620949966848)
(35,434652237848785367072768)
(36,663221759162200073699328)
(37,1000338803855445989523456)
(38,1492365511544812066570240)
(39,2203384855713413867765760)
(40,3221225472000000000000000)};

\addlegendentry{\citep{hertrich2021towards}}

\addplot [
    color=blue,
    mark=triangle,
    thick
    ]
    coordinates {
    (1,2)
(2,16)
(3,1048576)
(4,1267650600228229401496703205376)};
\addlegendentry{\citep{he2020relu}}

\end{axis}
\end{tikzpicture}}
\caption{Any CPWL function $\mathbb{R}^n\to\mathbb{R}$ with $q$ pieces or $k$ distinct linear components can be exactly represented by a ReLU network with at most $h$ hidden neurons. In Theorem \ref{theorem:ReLu_network_upper_bound_q} and \ref{theorem:ReLu_network_upper_bound_k}, $h=0$ when $q=1$ or $k=1$. The bounds in Theorem \ref{theorem:ReLu_network_upper_bound_q} and the worst-case bounds in Theorem \ref{theorem:ReLu_network_upper_bound_k} are invariant to $n$.  (\ref{eq:inferred_upper_bound}) is used to infer $h$ based on the depth and width given by \cite{hertrich2021towards}.
The upper bounds given by Theorem \ref{theorem:ReLu_network_upper_bound_q} and \ref{theorem:ReLu_network_upper_bound_k} are substantially lower than existing bounds in the literature, implying that any CPWL function can be exactly realized by a ReLU network at a much lower cost.
}
\label{fig:bounds}
\end{figure}

Despite these advancements in linear regions, the complexity of a ReLU DNN that exactly represents a given CPWL function remains largely unexplored. One can find that this question is the opposite direction of the above-mentioned line of research. Although \cite{arora2018understanding} proved that any CPWL function can be exactly represented by a ReLU DNN with a bounded depth, any estimates regarding the width or number of neurons of such a network were not given. The resources required for a ReLU neural network to exactly represent a CPWL function remained unknown until \cite{he2020relu} provided a bound to the complexity of a ReLU network that realizes any given CPWL function. They proved that the number of neurons is bounded from above by exponential growth in terms of the product between the number of pieces and the number of distinct linear components of a given CPWL function. Such an exponential bound also grows linearly with the input dimension. Because the number of pieces is an upper bound of the number of distinct linear components for any CPWL function \citep{tarela1999region,he2020relu}, the bound grows exponentially with the quadratic number of pieces, which seems to imply that the cost for representing a CPWL function by a ReLU DNN is exceedingly high.

The most recent upper bound can be inferred from a recent work by \cite{hertrich2021towards} although the number of hidden neurons was not directly given. \cite{hertrich2021towards} proved a width bound in terms of the number of distinct linear components under the same depth used by \cite{arora2018understanding} and \cite{he2020relu}.\footnote{The number of ``affine pieces'' used by Theorem 4.4 in \citep{hertrich2021towards} should be interpreted as the number of distinct linear components to best reflect the upper bound for the maximum width. Such an interpretation of ``affine pieces'' is different from the convention used by \cite{pascanu2013number,montufar2014number,arora2018understanding}, \cite{hanin2019complexity}, and this work.}
In particular, they proved that the maximum width of a ReLU network that represents any given CPWL function can be polynomially bounded from above in terms of the number of distinct linear components. However, the order of such a polynomial is a quadratic function of the input dimension, which can be immensely large for a small number of pieces or linear components when the input dimension is large. This bound grows larger with the input dimension even though the underlying CPWL function is just a one-hidden-layer ReLU network using only one ReLU (see Figure \ref{fig:bounds} for the difference between $n=1$ and $n=2$ when $q=2$ or $k=2$).

In this paper, we provide improved bounds showing that any CPWL function can be represented by a ReLU DNN whose neural complexity is bounded from above by functions with much slower growth (see Figure \ref{fig:bounds}). Our results imply that one can exactly realize any given CPWL function by a ReLU network at a much lower cost. On the other hand, in addition to guaranteeing the existence of such a network, we also give a \textit{polynomial time} algorithm to exactly find a network satisfying our bounds. To the best of our knowledge, our results regarding the computational resource for a ReLU network, i.e., the number of hidden neurons, are the lowest upper bounds in the existing literature and the algorithm is the first tailored procedure to find a network representation from any given CPWL function. Key results and main contributions of this paper are highlighted below.

\subsection{Key results and contributions} \label{contributions}
    \paragraph{Quadratic bounds.} We prove that any CPWL function with $q$ pieces can be represented by a ReLU network whose number of hidden neurons is bounded from above by a quadratic function of $q$. We also give the corresponding upper bounds for the maximum width, i.e., the maximum number of neurons per hidden layer, and the number of layers for such a network. The maximum width is bounded from above by $\mathcal{O}(q^2)$ and the number of layers is bounded from above by a logarithmic function of $q$, i.e., $\mathcal{O}(\log_2 q)$. \textit{These bounds are invariant to the input dimension.} For any affine function, the upper bounds for the maximum width and the number of hidden neurons are zero.
    
    \paragraph{Further improvements on neural complexity.} When the number of distinct linear components $k$ of any CPWL function is given along with the number of pieces $q$, the quadratic bounds $\mathcal{O}(q^2)$ for the number of hidden neurons and the maximum width turn into \textit{bilinear} bounds of $k$ and $q$, i.e., $\mathcal{O}(kq)$. Such a change reduces the neural complexity because $k\leq q$, and $q$ can be much larger than $k$. Still, these bounds are independent of the input dimension.
    
    \paragraph{Finding a network satisfying bilinear bounds.} We establish a polynomial time algorithm that finds a ReLU network representing any given CPWL function. The network found by the algorithm satisfies the bilinear bounds on the number of hidden neurons and the maximum width, and the logarithmic bound on the number of layers. Note that such an algorithm also guarantees that one can always reverse-engineer at least one ReLU network from the function it computes. Compared to the general-purpose reverse-engineering algorithm proposed by \cite{rolnick2020reverse}, our algorithm specializes in the situation when pieces of a CPWL function are given.
    
    \paragraph{Improved bounds from a perspective of linear components.} When the number of pieces of a CPWL function is unknown and only the number of linear components $k$ is available, we prove that the number of hidden neurons and maximum width are bounded from above by factorial growth. More precisely, $\mathcal{O}\left(k\cdot k!\right)$. The number of layers is bounded from above by linearithmic growth, or $\mathcal{O}(k\log_2k)$. However, when the input dimension $n$ grows sufficiently slower than $k$, e.g., $\mathcal{O}\left(\sqrt{k}\right)$, then bounds for the number of hidden neurons and maximum width reduce to \textit{polynomial} growth functions of order $2n+1$; and the linearithmic growth reduces to $\mathcal{O}\left(n\log_2k\right)$ for the depth.
    
    \paragraph{A new approach to choosing the depth.} Instead of scaling the depth of a ReLU network with the input dimension \citep{arora2018understanding,he2020relu,hertrich2021towards}, we reveal that constructing a ReLU network whose depth is scaled with the number of pieces of the given CPWL function is more advantageous. Such a scaling turns out to be the key to deriving better upper bounds. This insight is provided by the max-min representation of CPWL functions \citep{tarela1990representation}. The importance of this scaling on the depth in ReLU networks has not been well recognized by existing bounds in the literature. We discuss implications of different representations in Section \ref{representations_CPWL_implications}.
\section{Preliminaries} \label{preliminaries}
Notation and definitions used in this paper are set up and clarified in this section. The set $\{1,2,\cdots,m\}$ is denoted by $[m]$. $\mathbb{I}\left[condition\right]$ is an indicator function that gives $1$ if the \textit{condition} is true, and $0$ otherwise. The CPWL function is defined by Definition \ref{def:continuous_piecewise_linear_fn} below.
\begin{definition} \label{def:continuous_piecewise_linear_fn}
A function $p\colon\mathbb{R}^n\to\mathbb{R}$ is said to be CPWL if there exists a finite number of closed subsets of $\mathbb{R}^n$, say $\{\mathcal{U}_i\}_{i\in[m]}$, such that (a) $\mathbb{R}^n=\bigcup_{i\in[m]}\mathcal{U}_i$; (b) $p$ is affine on $\mathcal{U}_i, \forall i\in[m]$.
\end{definition}
A family of closed \textit{convex} subsets, say $\{\mathcal{X}_i\}_{i\in[q]}$, satisfying Definition \ref{def:continuous_piecewise_linear_fn} is also referred to as a family of convex regions, affine pieces or simply \textit{pieces} for a CPWL function in this paper. Definition \ref{def:continuous_piecewise_linear_fn} follows the definition of CPWL functions by \cite{ovchinnikov2002max}. Notice that there are different definitions in the literature. For example, \cite{chua1988canonical} and \cite{arora2018understanding} defined a CPWL function on a finite number of polyhedral regions. However, their definitions are essentially the same as Definition \ref{def:continuous_piecewise_linear_fn} because any family of closed subsets satisfying Definition \ref{def:continuous_piecewise_linear_fn} can be decomposed into polyhedral regions. It is possible that some of the closed subsets satisfying Definition \ref{def:continuous_piecewise_linear_fn} are non-convex even though the number of them reaches the minimum (see Figure 2 in \citep{wang2005generalization}). The continuity is implied by Definition \ref{def:continuous_piecewise_linear_fn} due to the subsets being closed.

Because the goal of this paper is to bound the complexity of a ReLU DNN that exactly represents any given CPWL function, it is necessary to be able to measure the complexity of a CPWL function. The complexity of a CPWL function can be described using two different perspectives. One is the number of pieces $q$, which is the number of closed convex subsets satisfying Definition \ref{def:continuous_piecewise_linear_fn}.
Because this number has a minimum and any finite number above the minimum can be a valid $m$ in Definition \ref{def:continuous_piecewise_linear_fn}, the bounds become obviously loose when the number of pieces is not the minimum. Without loss of generality, we are interested in the number $q$ when it is the minimum.
The other is the number of distinct linear components $k$. A linear component of a CPWL function is defined in Definition \ref{def:linear_components}.
\begin{definition} \label{def:linear_components}
An affine function $f$ is said to be a linear component of a CPWL function $p$ if there exists a nonempty subset $\mathcal{M}\subseteq[m]$ such that $f(\mathbf{x})=p(\mathbf{x}), \forall \mathbf{x}\in\bigcup_{i\in\mathcal{M}}\mathcal{U}_i$ where $\{\mathcal{U}_i\}_{i\in[m]}$ is a family of the minimum number of closed subsets satisfying Definition \ref{def:continuous_piecewise_linear_fn}.
\end{definition}

A greater $q$ or $k$ gives a CPWL function more degrees of freedom because a CPWL function allowed to use $q+1$ pieces or $k+1$ arbitrary linear components can represent any CPWL function with $q$ pieces or $k$ distinct linear components and still have the flexibility to modify existing affine maps or increase the number of distinct affine maps of the CPWL function. Although increasing them both leads to a CPWL function with greater flexibility, the speed of upgrading degrees of freedom is different from each other. Note that a CPWL function with $q$ pieces can never have more than $q$ distinct linear components and a CPWL function with $k$ distinct linear components can easily have more than $k$ minimum number of pieces. Such a difference in a $1$-dimensional case can be clearly observed from Figure 1 in \citep{tarela1999region}. Note that it is possible for two disjoint subsets from a minimum number of closed subsets satisfying Definition \ref{def:continuous_piecewise_linear_fn} to share the same linear component. In other words, a linear component can be reused by multiple pieces. Hence, increasing $k$ gives faster growth than increasing $q$ for the complexity and expressivity of CPWL functions.

We define the ReLU activation function in Definition \ref{def:ReLU}. The ReLU network defined in Definition \ref{def:plain_ReLU_network} is a simple architecture which is usually referred to as a \textit{ReLU multi-layer perceptron}. Definition \ref{def:num_hidden_neurons_width} defines the corresponding number of hidden neurons, depth, and maximum width.
\begin{definition} \label{def:ReLU}
The rectified linear unit (ReLU) activation function $\sigma$ is defined as
$
        \sigma(x)=\max(0,x).
$
The ReLU layer or vector-valued rectified linear activation function $\sigma_k$ is defined as
$
    \sigma_k(\mathbf{x})=\begin{bmatrix}\sigma(x_1)&\sigma(x_2)&\cdots&\sigma(x_k)\end{bmatrix}^\mathsf{T}
$
where $\mathbf{x}=\begin{bmatrix}x_1&x_2&\cdots&x_k\end{bmatrix}^\mathsf{T}$.
\end{definition}
\begin{definition} \label{def:plain_ReLU_network}
    Let $l$ be any positive integer. A function $g\colon\mathbb{R}^{k_0}\to\mathbb{R}^{k_l}$ is said to be an $l$-layer ReLU network if there exist weights $\mathbf{W}_i\in\mathbb{R}^{k_{i}\times k_{i-1}}$ and $\mathbf{b}_i\in\mathbb{R}^{k_i}$ for $i\in[l]$ such that the input-output relationship of the network satisfies
    $
        g(\mathbf{x})=h_l(\mathbf{x})
    $
    where $h_1(\mathbf{x})=\mathbf{W}_1\mathbf{x}+\mathbf{b}_1$ and
    $
        h_i(\mathbf{x})=\mathbf{W}_{i}\sigma_{k_{i-1}}\left(h_{i-1}(\mathbf{x})\right)+\mathbf{b}_{i}
    $
    for every $i\in[l]\setminus [1]$.
\end{definition}
\begin{definition} \label{def:num_hidden_neurons_width}
    The sum $\sum_{l=1}^{L-1}k_l$ and the maximum $\max_{l\in[L-1]}k_l$ for $L>1$ are referred to as the number of hidden neurons and the maximum width of an $L$-layer ReLU network, respectively. Any $1$-layer ReLU network is said to have $0$ hidden neurons and a maximum width of $0$. An $l$-layer ReLU network is said to have depth $l$ and $l-1$ hidden layers.
\end{definition}

\section{Upper bounds on neural complexity for representing CPWL functions} \label{upper_bounds_on_neural_complexity}
The correspondence between CPWL functions and ReLU networks was first clearly confirmed by Theorem 2.1 in \citep{arora2018understanding} although a weaker version of the correspondence can be inferred from Proposition 4.1 in \citep{goodfellow2013maxout}. \cite{arora2018understanding} proved that every ReLU network $\mathbb{R}^n\to\mathbb{R}$ exactly represents a CPWL function, and the converse is also true, i.e., every CPWL function can be exactly represented by a ReLU network. One of the key steps used by \cite{arora2018understanding} to construct a ReLU network from any given CPWL function relies on an important representation result by \cite{wang2005generalization}, stating that any CPWL function can be represented by a sum of a finite number of \textit{max-$\eta$-affine} functions \citep{magnani2009convex} whose signs may be flipped and $\eta$ is bounded from above by $n+1$ where $\eta$ is the number of affine functions in the \textit{max-$\eta$-affine} function. The implication of using this representation is later discussed in Section \ref{constrained_depth} and its \textit{max-$\eta$-affine} functions are given therein. The bound $\eta\leq n+1$ in the representation allowed \cite{arora2018understanding} to further prove that there exists a ReLU DNN with at most
\begin{equation} \label{eq:prior_bound_on_depth}
    \left\lceil\log_2(n+1)\right\rceil
\end{equation}
hidden layers to exactly realize any given CPWL function. However, the computational resource required for a ReLU network to exactly represent any CPWL function had not been available in the literature until the work by \cite{he2020relu}.
\subsection{Upper bounds in prior work}
\cite{he2020relu} proved that a CPWL function $\mathbb{R}^n\to\mathbb{R}$ with $q$ pieces and $k$ linear components can be represented by a ReLU network whose number of neurons is given by
\begin{equation} \label{eq:he_bound}
    \begin{cases}
    \mathcal{O}\left(n2^{kq+(n+1)(k-n-1)}\right), &\text{if } k\geq n+1,\\
    \mathcal{O}\left(n2^{kq}\right), &\text{if } k<n+1.
    \end{cases}
\end{equation}
The number of hidden layers in such a ReLU DNN is also bounded from above by $\left\lceil\log_2(n+1)\right\rceil$, which is the same as the bound derived by \cite{arora2018understanding}. One of their significant contributions in our view is that they utilize the number of pieces and linear components of a CPWL function to bound the complexity of the equivalent ReLU network. \cite{he2020relu} also proved the relationship
\begin{equation} \label{eq:he_bound_k_q}
    k\leq q \leq k!
\end{equation}
for any CPWL function. Note that the bounds in (\ref{eq:he_bound_k_q}) on the number of pieces $q$ and linear components $k$ were first mentioned by \cite{tarela1999region} who developed the lattice representation of CPWL functions.
Asymptotically, the bounds in (\ref{eq:he_bound}) for $k\geq n+1$ and $k<n+1$ are amplified linearly with the input dimension $n$ for any fixed $k$.
Due to (\ref{eq:he_bound_k_q}), they can be further bounded from above by $\mathcal{O}\left(n2^{q^2+(n+1)(q-n-1)}\right)$ and $\mathcal{O}\left(n2^{q^2}\right)$ in terms of $q$ and $n$. On the other hand, in terms of $k$ and $n$, they can be further bounded from above by $\mathcal{O}\left(n2^{k\cdot k!+(n+1)(k-n-1)}\right)$ and $\mathcal{O}\left(n2^{k\cdot k!}\right)$. Because these bounds grow much faster than exponential growth, they seem to suggest that the cost of computing a CPWL function via a ReLU network is exceptionally high.

\cite{hertrich2021towards} proved that any CPWL function $\mathbb{R}^n\to\mathbb{R}$ with $k$ distinct linear components can be represented by a ReLU network whose maximum width is
$
    \mathcal{O}\left(k^{2n^2+3n+1}\right)
$
under the same number of hidden layers $\left\lceil\log_2(n+1)\right\rceil$. Hence, the number of hidden neurons must be bounded from above by
\begin{equation} \label{eq:inferred_upper_bound}
    \mathcal{O}\left(k^{2n^2+3n+1}\log_2\left(n+1\right)\right).
\end{equation}
Note that we infer this bound by taking the product of the depth and the maximum width. Using $k\leq q$,  the bound in (\ref{eq:inferred_upper_bound}) can be expressed in terms of $q$, leading to $\mathcal{O}\left(q^{2n^2+3n+1}\log_2\left(n+1\right)\right)$. Such a bound can grow slower than $\mathcal{O}\left(n2^{q^2}\right)$, but it grows faster than $\mathcal{O}\left(n2^{q^2}\right)$ if the input dimension $n$ grows sufficiently faster than the number of pieces $q$. Also,  $\mathcal{O}\left(k^{2n^2+3n+1}\log_2\left(n+1\right)\right)$ grows faster than $\mathcal{O}\left(n2^{k\cdot k!}\right)$ when the input dimension $n$ grows sufficiently faster than the number of distinct linear components $k$.
\subsection{Improved upper bounds} \label{improved_upper_bounds}
We show that any CPWL function can be represented by a ReLU network whose number of hidden neurons is bounded by much slower growth functions.
We state our main results in Theorem \ref{theorem:ReLu_network_upper_bound_q}, \ref{theorem:ReLu_network_upper_bound_k_q} and \ref{theorem:ReLu_network_upper_bound_k}, and focus on their impact in this subsection. Each one of them is tailored to a specific complexity measure of the CPWL function. Their proof sketches are deferred to Section \ref{proof_sketches}. We first focus on the case when the number of linear components is unknown and the complexity of the CPWL is only measured by the number of pieces $q$.

\begin{theorem} \label{theorem:ReLu_network_upper_bound_q}
Any CPWL function $p\colon\mathbb{R}^n\to\mathbb{R}$ with $q$ pieces can be represented by
a ReLU network whose number of layers $l$, maximum width $w$, and number of hidden neurons $h$ satisfy
\begin{equation}
    l\leq2\left\lceil\log_2q\right\rceil+1,
\end{equation}
\begin{equation} \label{eq:h_w}
    w\leq \mathbb{I}\left[q>1\right]\left\lceil\frac{3q}{2}\right\rceil q,
\end{equation}
and
\begin{equation} \label{eq:h_q}
    h\leq \left(3\cdot 2^{\left\lceil\log_2q\right\rceil}+2\left\lceil\log_2q\right\rceil-3\right)q+3\cdot 2^{\left\lceil\log_2q\right\rceil}-2\left\lceil\log_2q\right\rceil-3.
\end{equation}
Furthermore, Algorithm \ref{alg:find_a_network_for_a_piecewise_linear_fn} finds such a network in $\mathsf{poly}\left(n,q,L\right)$ time where $L$ is the number of bits required to represent every entry of the rational matrix $\mathbf{A}_i$ in the polyhedron representation $\{\mathbf{x}\in\mathbb{R}^n|\mathbf{A}_i\mathbf{x}\leq\mathbf{b}_i\}$ of the piece $\mathcal{X}_i$ for every $i\in[q]$.
\end{theorem}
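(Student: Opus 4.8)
The plan is to exploit the max-min representation of CPWL functions due to \cite{tarela1990representation}: any CPWL function $p$ with $q$ pieces can be written as $p(\mathbf{x})=\max_{i}\min_{j\in S_i}\ell_j(\mathbf{x})$, where the $\ell_j$ are (at most $q$) affine functions and each index set $S_i$ is built from the pieces $\mathcal{X}_i$. Concretely, for each piece $\mathcal{X}_i$ with associated affine component $\ell_i$, one takes $S_i=\{\,j : \ell_j(\mathbf{x})\ge \ell_i(\mathbf{x})\ \forall\mathbf{x}\in\mathcal{X}_i\,\}$; checking membership in $S_i$ is a linear feasibility test (equivalently an LP over the polyhedron $\{\mathbf{x}:\mathbf{A}_i\mathbf{x}\le\mathbf{b}_i\}$), and there are at most $q^2$ such tests, each solvable in $\mathsf{poly}(n,L)$ time. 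This produces, in polynomial time, a representation $p=\max_{i\in[q]}\min_{j\in S_i}\ell_j$ with at most $q$ affine functions and at most $q$ inner min-terms — and crucially this representation is completely dimension-free, which is the source of the dimension-invariance in the bounds. The first step of the proof is therefore to invoke (or re-derive) this representation and bound its size, together with the LP complexity of computing it from the polyhedral data $(\mathbf{A}_i,\mathbf{b}_i)$.

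The second step is to realize scalar $\max$ and $\min$ of $q$ numbers by a ReLU network, using the identities $\max(a,b)=a+\sigma(b-a)$ and $\min(a,b)=a-\sigma(a-b)$, and then a balanced binary tree to combine $q$ inputs in $\lceil\log_2 q\rceil$ levels. A $\min$ (or $\max$) of $q$ affine functions $\mathbb{R}^n\to\mathbb{R}$ can thus be implemented with depth roughly $\lceil\log_2 q\rceil+1$ and width $O(q)$ — one should carry the affine inputs forward through identity channels ($x=\sigma(x)-\sigma(-x)$) so that the first affine layer absorbing $\mathbf{x}$ is counted once. Stacking the inner layer of $q$ $\min$-blocks (run in parallel, giving width $O(q^2)$ since each block is width $O(q)$) followed by an outer $\max$ over $q$ values gives total depth $\approx 2\lceil\log_2 q\rceil+1$ and max width $\le\lceil 3q/2\rceil q$. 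The hidden-neuron count is then obtained by summing a geometric-type series over the tree levels — this is where the explicit constants $3\cdot 2^{\lceil\log_2 q\rceil}$, the $2\lceil\log_2 q\rceil$ term, and the $-3$ correction in \eqref{eq:h_q} come from; it is a routine but bookkeeping-heavy computation of the per-layer widths $k_l$ and their sum, which I would relegate to a lemma.

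The third step is to assemble Algorithm \ref{alg:find_a_network_for_a_piecewise_linear_fn}: (i) from each piece read off its affine component $\ell_i$ by solving a linear system on $\mathcal{X}_i$'s affine hull; (ii) for each ordered pair $(i,j)$ solve the LP deciding whether $\ell_j\ge\ell_i$ on $\mathcal{X}_i$ to assemble the sets $S_i$; (iii) emit the weight matrices of the $\min$/$\max$ tree described above. Every sub-step is either a linear-algebra solve or an LP in $n$ variables with $\mathsf{poly}(q)$ constraints having $L$-bit rational data, so by the polynomial-time solvability of LP (Khachiyan) the total running time is $\mathsf{poly}(n,q,L)$; the bit-size of the produced weights is polynomial as well, which is needed for the claim to be meaningful. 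Finally one checks correctness: the constructed network computes exactly $\max_i\min_{j\in S_i}\ell_j$, which equals $p$ by the max-min representation theorem.

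I expect the main obstacle to be \textbf{two-fold}. First, verifying that the index sets $S_i$ genuinely yield $p=\max_i\min_{j\in S_i}\ell_j$ on \emph{all} of $\mathbb{R}^n$ (not merely on the pieces) — this requires the standard but slightly delicate argument that for every $\mathbf{x}$, the term $\min_{j\in S_{i(\mathbf{x})}}\ell_j(\mathbf{x})=\ell_{i(\mathbf{x})}(\mathbf{x})=p(\mathbf{x})$ achieves the outer max, using that each $\ell_j\ge\ell_{i(\mathbf{x})}$ on the piece containing $\mathbf{x}$ and that $p$ is the pointwise structure glued from affine pieces. Second, pinning down the \emph{exact} constants in \eqref{eq:h_w} and \eqref{eq:h_q} rather than mere $O(\cdot)$ rates: this forces a careful, non-asymptotic accounting of how many identity (skip) channels are needed at each tree level and how the $q$ parallel inner blocks share (or do not share) the first affine layer — the difference between a clean $\lceil 3q/2\rceil q$ bound and a messier one lives entirely in these low-level width choices.
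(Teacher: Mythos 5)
Your proposal matches the paper's proof essentially step for step: the same Tarela max--min representation with index sets $\mathcal{A}(\mathcal{X}_i)=\{j: f_j(\mathbf{x})\ge p(\mathbf{x})\ \forall\mathbf{x}\in\mathcal{X}_i\}$ decided by linear programs over the polyhedral pieces, the same binary-tree $\max$/$\min$ gadgets with identity channels (three ReLUs per pairwise extremum, hence the $\lceil 3q/2\rceil$ widths), and the same parallel-concatenation-then-composition assembly. The only cosmetic difference is that the paper first proves the refined bilinear bound of Theorem~\ref{theorem:ReLu_network_upper_bound_k_q} in terms of the number of distinct linear components $k$ and then substitutes $k\le q$, whereas you work directly with $q$ affine functions.
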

\begin{algorithm}[H]
\caption{Find a ReLU network that computes a given continuous piecewise linear function}\label{alg:find_a_network_for_a_piecewise_linear_fn}
\begin{algorithmic}[1]
\Require A CPWL function $p$ with pieces $\{\mathcal{X}_i\}_{i\in[q]}$ of $\mathbb{R}^n$ satisfying Definition \ref{def:continuous_piecewise_linear_fn}.
\Ensure A ReLU network $g$ computing $g(\mathbf{x})=p(\mathbf{x}),\forall \mathbf{x}\in\mathbb{R}^n$.
\State $f_1,f_2,\cdots,f_k \gets$ run Algorithm \ref{alg:find_all_distinct_linear_components} to find all distinct linear components of $p$
\For{$i=1,2,\cdots,q$}
    \State $\mathcal{A}_i \gets \emptyset$
    \For{$j=1,2\cdots,k$}
        \If{$f_j(\mathbf{x})\geq p(\mathbf{x}),\forall \mathbf{x}\in\mathcal{X}_i$}
            \State $\mathcal{A}_i \gets \mathcal{A}_i\bigcup \{j\}$
        \EndIf
    \EndFor
    \State $v_i \gets $ run Algorithm \ref{alg:find_a_extremum_ReLU_network} with $\{f_m\}_{m\in\mathcal{A}_i}$ using the minimum type
\EndFor
\State $v \gets $ run Algorithm \ref{alg:find_a_concat_ReLU_network} with $v_1,v_2,\cdots,v_q$ \Comment{Combine $q$ ReLU networks in parallel}
\State $u \gets $ run Algorithm \ref{alg:find_a_extremum_ReLU_network} with $\left\{\begin{bmatrix}s_1&s_2&\cdots&s_q\end{bmatrix}^\mathsf{T}\mapsto s_m\right\}_{m\in[q]}$ using the maximum type
\State $g \gets $ run Algorithm \ref{alg:find_a_composited_ReLU_network} with $v$ and $u$ \Comment{Find a ReLU network for the composition $u\circ v$}
\end{algorithmic}
\end{algorithm}
The proof of Theorem \ref{theorem:ReLu_network_upper_bound_q} is deferred to Appendix \ref{proof:ReLu_network_upper_bound_q} in the supplementary material. Algorithm \ref{alg:find_all_distinct_linear_components}, \ref{alg:find_a_extremum_ReLU_network}, \ref{alg:find_a_concat_ReLU_network}, and \ref{alg:find_a_composited_ReLU_network} used by Algorithm \ref{alg:find_a_network_for_a_piecewise_linear_fn} are deferred to Appendix \ref{algorithm} in the supplementary material and will be discussed soon after the discussion on bounds. Because $2^{\left\lceil\log_2q\right\rceil}<2q$, the upper bound in (\ref{eq:h_q}) can be further bounded from above by
$
    6q^2+2\left\lceil\log_2q\right\rceil q+3q-2\left\lceil\log_2q\right\rceil-3,
$
leading to the asymptotic bound $h=\mathcal{O}(q^2)$. Obviously, $l=\mathcal{O}(\log_2 q)$ and $w=\mathcal{O}(q^2)$. Since the bound given by Theorem 5.2 in \cite{he2020relu} can be lower bounded by $\mathcal{O}\left(n2^{q^2}\right)$, it grows exponentially faster than our bound of $h$ given in Theorem \ref{theorem:ReLu_network_upper_bound_q}.
On the other hand, the upper bound given by (\ref{eq:inferred_upper_bound}) is at least polynomially larger than our bound of $h$ and the order of this polynomial grows quadratically with the input dimension $n$. Note that such a polynomial becomes an exponential function when the growth in $n$ is not slower than $q$.
Such differences are illustrated by the figure on the left-hand side of Figure \ref{fig:bounds}.
The bounds in Theorem \ref{theorem:ReLu_network_upper_bound_q} are independent of the input dimension $n$. Hence, one can realize any given CPWL function using a relatively small ReLU network even though $n$ is huge.

In terms of the maximum width, the upper bound given by (\ref{eq:h_w}) is at least polynomially smaller than the one given by \cite{hertrich2021towards}.
In contrast to the bound for the number of layers in \citep{arora2018understanding,he2020relu,hertrich2021towards} that grows logarithmically with the input dimension $n$, our bound in Theorem \ref{theorem:ReLu_network_upper_bound_q} grows logarithmically with the number of pieces $q$. Therefore, the ReLU network found by Algorithm \ref{alg:find_a_network_for_a_piecewise_linear_fn} in general becomes deeper when the CPWL becomes more complex for a fixed input dimension. On the other hand, the network remains the same depth even for an arbitrarily larger $n$ as long as $q$ is fixed.
Taking an affine function for example, a $1$-layer ReLU network with $0$ hidden neurons is the solution given by Theorem \ref{theorem:ReLu_network_upper_bound_q}. However, the bound given by \citep{arora2018understanding,he2020relu,hertrich2021towards} keeps increasing the depth for a larger $n$.

We briefly explain algorithms used by Algorithm \ref{alg:find_a_network_for_a_piecewise_linear_fn}. Algorithm \ref{alg:find_a_extremum_ReLU_network} finds a ReLU network that computes a \textit{max-affine} or \textit{min-affine} function \citep{magnani2009convex}. Algorithm \ref{alg:find_a_concat_ReLU_network} concatenates two given ReLU networks in parallel and returns another ReLU network computing the concatenation of two outputs. Algorithm \ref{alg:find_a_composited_ReLU_network} finds a ReLU network that represents a composition of two given ReLU networks. These algorithms are basic manipulations of ReLU networks. Algorithm \ref{alg:find_a_network_for_a_piecewise_linear_fn} is a polynomial time algorithm, following from the proof of Theorem \ref{theorem:ReLu_network_upper_bound_k_q}. Table \ref{tab:find_a_network_for_a_piecewise_linear_fn} in Appendix \ref{algorithm} in the supplementary material gives a complexity analysis for Algorithm \ref{alg:find_a_network_for_a_piecewise_linear_fn}.

Notice that Algorithm \ref{alg:find_a_network_for_a_piecewise_linear_fn} does not need to be given any linear components or completely know the CPWL function because every distinct linear component can be found by Algorithm \ref{alg:find_all_distinct_linear_components}, which only needs to be given a closed $\epsilon$-ball in the interior of every piece of a CPWL function $p$ and observe the output of $p$ when feeding an input. Algorithm \ref{alg:find_all_distinct_linear_components} solves a system of linear equations for every piece of $p$ to find the corresponding linear component. Every system of linear equations here has a unique solution because the interior of each of the pieces is nonempty. The nonemptyness is guaranteed by Lemma \ref{lemma:merge2}\ref{lemma:nonempty_interior} in Appendix \ref{lemmas} in the supplementary material.

The 5th step of Algorithm $\ref{alg:find_a_network_for_a_piecewise_linear_fn}$ can be executed by checking the optimization result of the following linear programming problem
\begin{equation} \label{eq:linear_programming_problem}
\begin{split}
    \text{minimize} & \ \ \  f_j(\mathbf{x})-p(\mathbf{x}),\\
    \text{subject to} & \ \ \ \mathbf{x}\in\mathcal{X}_i.
\end{split}
\end{equation}
The condition in the 5th step can only be true when the optimal value is nonnegative. Because every piece of $p$ is given to Algorithm $\ref{alg:find_a_network_for_a_piecewise_linear_fn}$, the piece $\mathcal{X}_i$ is available for the linear program as a system of linear inequalities. The objective function is also available since $p$ is affine on $\mathcal{X}_i$ and all distinct linear components are available from Algorithm \ref{alg:find_all_distinct_linear_components}. The corresponding linear component of $p$ on $\mathcal{X}_i$ can be found by first feeding at most $n+1$ affinely independent points from the closed $\epsilon$-ball to $p$ and every candidate linear component, and then matching their output values.

The ellipsoid method \citep{khachiyan1979polynomial}, the interior-point method \citep{karmarkar1984new}, and the path-following method \citep{renegar1988polynomial} are polynomial time algorithms for the linear programming problem using rational numbers on the Turing machine model of computation. These algorithms are also known to be \textit{weakly polynomial time} algorithms.
The strongly polynomial time algorithm requested by Smale's 9th problem \citep{smale1998mathematical}, i.e., \textit{the linear programming problem}, is still an open question. Given that we run the 5th step of Algorithm \ref{alg:find_a_network_for_a_piecewise_linear_fn} by solving the linear programming problem in (\ref{eq:linear_programming_problem}), Algorithm \ref{alg:find_a_network_for_a_piecewise_linear_fn} is a weakly polynomial time algorithm. The question of whether it is a strongly polynomial time algorithm is not known. The dependency on the number of bits $L$ in the time complexity of Algorithm \ref{alg:find_a_network_for_a_piecewise_linear_fn} directly comes from using (\ref{eq:linear_programming_problem}) to execute the 5th step.
In practice, linear programming problems can be solved very reliably and efficiently \citep{boyd2004convex}. We provide an implementation of Algorithm \ref{alg:find_a_network_for_a_piecewise_linear_fn} and measure its run time on a computer for different numbers of pieces and input dimensions in Appendix \ref{opensource_and_runtime} in the supplementary material.

Theorem \ref{theorem:ReLu_network_upper_bound_k_q} discusses the case when the number of linear components and pieces are both known.

\begin{theorem} \label{theorem:ReLu_network_upper_bound_k_q}
Any CPWL function $p\colon\mathbb{R}^n\to\mathbb{R}$ with $k$ linear components and $q$ pieces can be represented by a ReLU network whose number of layers $l$, maximum width $w$, and number of hidden neurons $h$ satisfy
$
    l\leq\left\lceil\log_2q\right\rceil+\left\lceil\log_2k\right\rceil+1,
$
$
    w\leq\mathbb{I}\left[k>1\right]\left\lceil\frac{3k}{2}\right\rceil q,
$
and
\begin{equation} \label{eq:thm_upper_bound_hidden_neurons}
        h\leq \left(3\cdot 2^{\left\lceil\log_2k\right\rceil}+2\left\lceil\log_2k\right\rceil-3\right)q+3\cdot 2^{\left\lceil\log_2q\right\rceil}-2\left\lceil\log_2k\right\rceil-3.
\end{equation}
Furthermore, Algorithm \ref{alg:find_a_network_for_a_piecewise_linear_fn} finds such a network in $\mathsf{poly}\left(n,k,q,L\right)$ time where $L$ is the number of bits required to represent every entry of the rational matrix $\mathbf{A}_i$ in the polyhedron representation $\{\mathbf{x}\in\mathbb{R}^n|\mathbf{A}_i\mathbf{x}\leq\mathbf{b}_i\}$ of the piece $\mathcal{X}_i$ for every $i\in[q]$.
\end{theorem}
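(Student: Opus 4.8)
The plan rests on the max-min (lattice) representation of CPWL functions. First I would invoke the representation
$p(\mathbf{x})=\max_{i\in[q]}\min_{j\in\mathcal{A}_i}f_j(\mathbf{x})$,
where $f_1,\dots,f_k$ are the $k$ distinct linear components of $p$ (recovered by Algorithm~\ref{alg:find_all_distinct_linear_components}) and $\mathcal{A}_i=\{\,j\in[k]:f_j(\mathbf{x})\ge p(\mathbf{x})\ \forall\mathbf{x}\in\mathcal{X}_i\,\}$ is precisely the index set assembled in the inner loop (step~5) of Algorithm~\ref{alg:find_a_network_for_a_piecewise_linear_fn}. Taking $j$ to be the index of the affine component of $p$ on $\mathcal{X}_i$ shows each $\mathcal{A}_i$ is nonempty, so $1\le|\mathcal{A}_i|\le k$; this last inequality is what lets the final count scale with $k$ rather than with $q$. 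This representation is the one genuinely nontrivial conceptual input: I would cite it from \cite{tarela1990representation} (equivalently the lattice representation of \cite{tarela1999region,he2020relu}), or re-derive it in the appendix by a short polyhedral/continuity argument showing that each $\min$-term never exceeds $p$ globally while on every piece some $\min$-term coincides with $p$, so the outer $\max$ is attained.

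With the target representation fixed, the second step is to realize it as a ReLU network by composing three elementary network-surgery operations, each isolated in a supporting lemma: an \emph{extremum network} (Algorithm~\ref{alg:find_a_extremum_ReLU_network}) that computes $\min$ or $\max$ of $m$ affine maps by a balanced binary tree of two-input gadgets, of depth $\lceil\log_2 m\rceil+1$, width $\le\lceil 3m/2\rceil$, and with an explicit hidden-neuron count that grows by two per layer when padded to a prescribed larger depth; \emph{parallel concatenation} (Algorithm~\ref{alg:find_a_concat_ReLU_network}), which stacks equal-depth networks, adding widths and hidden-neuron counts; and \emph{composition} (Algorithm~\ref{alg:find_a_composited_ReLU_network}), which fuses the output layer of the inner network with the first affine map of the outer one, so depths add minus one. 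Concretely, each $v_i=\min_{j\in\mathcal{A}_i}f_j$ is built as an extremum network on $|\mathcal{A}_i|\le k$ maps, padded to the common depth $\lceil\log_2 k\rceil+1$; $v=(v_1,\dots,v_q)$ is their parallel concatenation; $u$ is an extremum (max) network on $q$ coordinate projections; and $g=u\circ v$. Correctness is then immediate from the representation, and the bounds $l\le\lceil\log_2 q\rceil+\lceil\log_2 k\rceil+1$ and $w\le\mathbb{I}[k>1]\lceil\tfrac{3k}{2}\rceil q$ drop straight out of these three lemmas.

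The third step, which I expect to be the main source of technical friction (apart from the representation itself), is the bookkeeping that produces the exact bound (\ref{eq:thm_upper_bound_hidden_neurons}): sum over the $q$ padded min-trees $v_i$, each contributing at most $3\cdot2^{\lceil\log_2 k\rceil}+2\lceil\log_2 k\rceil-3$ hidden neurons (worst case $|\mathcal{A}_i|=k$), then add the hidden neurons of the max-tree $u$ together with the $\mathbb{R}^q$ interface layer absorbed by the composition, and simplify — one must carefully track which layers the padding inserts, which affine map the composition eliminates, and the corner case $k=1$ (where $p$ is affine, the indicator $\mathbb{I}[k>1]$ vanishes, and $h=0$). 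This is entirely mechanical once the three surgery lemmas are stated with exact constants, but it is where all the arithmetic lives.

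Finally, for the running time: Algorithm~\ref{alg:find_all_distinct_linear_components} recovers $f_1,\dots,f_k$ by solving one $n\times n$ rational linear system per piece (using $n+1$ affinely independent points from a ball in the piece's nonempty interior), producing components of bit-length $\mathsf{poly}(n,L)$ and confirming $k\le q$; the double loop solves $qk$ instances of the linear program (\ref{eq:linear_programming_problem}) over $\{\mathbf{x}:\mathbf{A}_i\mathbf{x}\le\mathbf{b}_i\}$, each in weakly polynomial time by the ellipsoid or interior-point method; and the three surgery routines are purely combinatorial in $n$, $k$, $q$ and the bit-lengths of the $f_j$. Composing these gives the claimed $\mathsf{poly}(n,k,q,L)$ bound. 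Theorem~\ref{theorem:ReLu_network_upper_bound_q} then follows as the special case where $k$ is not supplied: one substitutes $|\mathcal{A}_i|\le k\le q$ throughout, replacing $k$ by $q$ in the depth, width, and neuron bounds to recover (\ref{eq:h_q}).
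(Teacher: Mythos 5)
Your proposal is correct and follows essentially the same route as the paper: the max-min representation of \cite{tarela1999region}, realization via extremum networks combined by parallel concatenation and composition (with identity-network padding to a common depth), the bound $1\le\abs{\mathcal{A}_i}\le k$ to make the inner trees scale with $k$, and the weakly polynomial running time via the $qk$ linear programs. The only cosmetic difference is that you index the outer max by $[q]$ while the paper indexes it by an arbitrary family $\mathcal{Q}$ of convex pieces; the bookkeeping and conclusions are the same.
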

The proof of Theorem \ref{theorem:ReLu_network_upper_bound_k_q} is deferred to Appendix \ref{proof:ReLu_network_upper_bound_k_q} in the supplementary material. The bounds in Theorem \ref{theorem:ReLu_network_upper_bound_k_q} are in general tighter and always no worse than those in Theorem \ref{theorem:ReLu_network_upper_bound_q} because $q$ is never less than $k$ but can be much larger than $k$. 
Asymptotically, $l=\mathcal{O}(\log_2 q)$, $w=\mathcal{O}(kq)$, and $h=\mathcal{O}(kq)$. The bound given by Theorem 5.2 in \cite{he2020relu} increases exponentially faster than the bound of $h$ in Theorem \ref{theorem:ReLu_network_upper_bound_k_q}.

When the number of linear components is the only complexity measure of the CPWL function, we resort to Theorem \ref{theorem:ReLu_network_upper_bound_k} below.
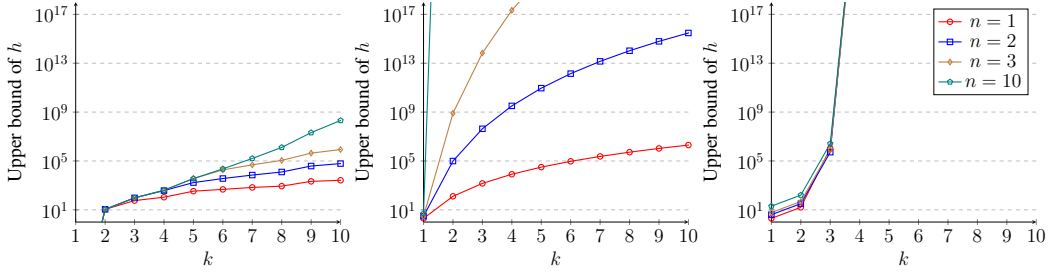
\begin{figure}[tb]
\centering
\resizebox{\columnwidth}{!}
{
    \begin{tikzpicture}
\begin{axis}[
    font=\Large,
    axis lines = left,
    xlabel=$k$,
    ylabel=Upper bound of $h$,
    xmin=1, xmax=20,
    ymin=1,
    xmax=10,
    xtick={1,2,3,4,5,6,7,8,9,10},
    legend pos=north west,
    ymajorgrids=true,
    grid style=dashed,
    ymode=log,
    ymax= 1000000000000000000
]
    
\addplot[
    color=red,
    mark=o,
    thick,
    ]
    coordinates {
(1,0.000000001)
(2,11)
(3,57)
(4,108)
(5,336)
(6,471)
(7,681)
(8,870)
(9,2142)
(10,2619)
(11,3149)
(12,3924)
(13,4560)
(14,5249)
(15,5991)
(16,6786)
(17,14866)
(18,16617)
(19,18471)
(20,20428)
(21,22488)
    };

\addplot[
    color=blue,
    mark=square,
    thick,
    ]
    coordinates {
(1,0.000000001)
(2,11)
(3,95)
(4,375)
(5,1695)
(6,3642)
(7,7023)
(8,12516)
(9,38412)
(10,61041)
(11,87806)
(12,129513)
(13,175623)
(14,246476)
(15,319563)
(16,409398)
(17,1008790)
(18,1262685)
(19,1563960)
(20,1967329)
(21,2380359)
    };

\addplot[
    color=brown,
    mark=diamond,
    thick,
    ]
    coordinates {
(1,0.000000001)
(2,11)
(3,95)
(4,401)
(5,3615)
(6,18615)
(7,48309)
(8,111720)
(9,438336)
(10,856119)
(11,1570421)
(12,2739113)
(13,4588579)
(14,7053821)
(15,11016799)
(16,16842206)
(17,44766622)
(18,64642689)
(19,88997451)
(20,124054021)
(21,165290071)
    };

\addplot[
    color=teal,
    mark=pentagon,
    thick,
    ]
    coordinates {
(1,0.000000001)
(2,11)
(3,95)
(4,401)
(5,3615)
(6,22503)
(7,160647)
(8,1285239)
(9,20805493)
(10,204909301)
(11,2316916981)
(12,26997697525)
(13,355801906165)
(14,5032766294005)
(15,75903811270645)
(16,1214460980330485)
(17,38324654954151928)
(18,165438784065949664)
(19,521384987029657472)
(20,1501219007434727936)
(21,4223067989500496896)
    };
\end{axis}

\begin{axis}[
    font=\Large,
    at={(9cm,0cm)},
    axis lines = left,
    xlabel = $k$,
    ylabel = Upper bound of $h$,
    legend pos=north east,
    ymode=log,
    xmax = 10,
    xtick={1,2,3,4,5,6,7,8,9,10},
    ymajorgrids=true,
    grid style=dashed,
    ymin = 1,
    ymax= 1000000000000000000
]
\addplot [
    color=red,
    mark=o,
    thick
]
coordinates {((1,2)
(2,128)
(3,1458)
(4,8192)
(5,31250)
(6,93312)
(7,235298)
(8,524288)
(9,1062882)
(10,2000000)
(11,3543122)
(12,5971968)
(13,9653618)
(14,15059072)
(15,22781250)
(16,33554432)
(17,48275138)
(18,68024448)
(19,94091762)
(20,128000000)
(21,171532242)
(22,226759808)
(23,296071778)
(24,382205952)
(25,488281250)
(26,617831552)
(27,774840978)
(28,963780608)
(29,1189646642)
(30,1458000000)
(31,1775007362)
(32,2147483648)
(33,2582935938)
(34,3089608832)
(35,3676531250)
(36,4353564672)
(37,5131452818)
(38,6021872768)
(39,7037487522)
(40,8192000000)};

\addplot [
    color=blue,
    mark=square,
    thick
]
coordinates {(1,3)
(2,98304)
(3,43046721)
(4,3221225472)
(5,91552734375)
(6,1410554953728)
(7,14242684529829)
(8,105553116266496)
(9,617673396283947)
(10,3000000000000000)
(11,12531744508246952)
(12,46221064723759104)
(13,153557679042272256)
(14,466704286673436672)
(15,1313681671142578176)
(16,3458764513820540928)
(17,8587269154529447936)
(18,20239921849432375296)
(19,45543381089624391680)
(20,98304000000000000000)
(21,204366955748855054336)
(22,410640204046236123136)
(23,799905706393173753856)
(24,1514571848868138319872)
(25,2793967723846435864576)
(26,5031778026857177284608)
(27,8862938119652501356544)
(28,15292966065715172868096)
(29,25887566242794551902208)
(30,43046721000000001671168)
(31,70395785975534054277120)
(32,113336795588871485128704)
(33,179816836496596251181056)
(34,281387635655620949966848)
(35,434652237848785367072768)
(36,663221759162200073699328)
(37,1000338803855445989523456)
(38,1492365511544812066570240)
(39,2203384855713413867765760)
(40,3221225472000000000000000)};

\addplot [
    color=brown,
    mark=diamond,
    thick
]
coordinates {(1,3)
(2,805306368)
(3,68630377364883)
(4,216172782113783808)
(5,111758708953857425408)
(6,18422826643394446491648)
(7,1379959609634220041830400)
(8,58028439341502200385896448)
(9,1570042899082081632228737024)
(10,30000000000000000948382466048)
(11,432629808319497703798154985472)
(12,4945339870828537631853131071488)
(13,46508798407987186985236420362240)
(14,370430087073743850133082498662400)
(15,2556680789771788017340478669193216)
(16,15576890575604482885591488987660288)
(17,85053277430603096670955741391093760)
(18,421455181554660564376545319341522944)
(19,1915235051777245775703586886125617152)
(20,8053063680000000254579479735999397888)
(21,31569049585831050307726079394448408576)
(22,116133179875436959810010665483849695232)};

\addplot [
    color=teal,
    mark=pentagon,
    thick
    ]
    coordinates {(1,5)
(2,17254365866976409468586889655692563631127772430425966387906310559498240)};

\end{axis}

\begin{axis}[
    font=\Large,
    at={(18cm,0cm)},
    axis lines = left,
    xlabel = $k$,
    ylabel = Upper bound of $h$,
    legend pos=north east,
    ymode=log,
    xmax = 10,
    xtick={1,2,3,4,5,6,7,8,9,10},
    ymajorgrids=true,
    grid style=dashed,
    ymin = 1,
    ymax= 1000000000000000000
]
\addplot [
    color=red,
    mark=o,
    thick
]
coordinates {(1,2)
(2,16)
(3,1048576)
(4,1267650600228229401496703205376)};
\addlegendentry{$n=1$}

\addplot [
    color=blue,
    mark=square,
    thick
]
coordinates {(1,4)
(2,32)
(3,524288)
(4,1267650600228229401496703205376)};
\addlegendentry{$n=2$}

\addplot [
    color=brown,
    mark=diamond,
    thick
]
coordinates {(1,6)
(2,48)
(3,786432)
(4,237684487542793012780631851008)};
\addlegendentry{$n=3$}

\addplot [
    color=teal,
    mark=pentagon,
    thick
    ]
    coordinates {(1,20)
(2,160)
(3,2621440)
(4,792281625142643375935439503360)};
\addlegendentry{$n=10$}

\end{axis}

\end{tikzpicture}}
  \caption{Left: The upper bound of $h$ in Theorem \ref{theorem:ReLu_network_upper_bound_k} grows much slower when $n$ grows sufficiently slower than $k$, leading to a much better upper bound compared to the worst-case asymptotic bound $\mathcal{O}\left(k\cdot k!\right)$ in Theorem \ref{theorem:ReLu_network_upper_bound_k} when $n$ is sufficiently larger. Middle: the bound in (\ref{eq:inferred_upper_bound}) inferred from \citep{hertrich2021towards}. Right: Theorem 5.2 in \citep{he2020relu}.}
  \label{fig:phi}
\end{figure}
\begin{theorem} \label{theorem:ReLu_network_upper_bound_k}
Any CPWL function $p\colon\mathbb{R}^n\to\mathbb{R}$ with $k$ linear components can be represented by
a ReLU network whose number of layers $l$, maximum width $w$, and number of hidden neurons $h$ satisfy
$
    l\leq\left\lceil\log_2\phi(n,k)\right\rceil+\left\lceil\log_2k\right\rceil+1,
$
$
    w\leq\mathbb{I}\left[k>1\right]\left\lceil\frac{3k}{2}\right\rceil \phi(n,k),
$
and
\begin{equation}
    h\leq \left(3\cdot 2^{\left\lceil\log_2k\right\rceil}+2\left\lceil\log_2k\right\rceil-3\right)\phi(n,k)+3\cdot 2^{\left\lceil\log_2\phi(n,k)\right\rceil}-2\left\lceil\log_2k\right\rceil-3
\end{equation}
where
\begin{equation}
    \phi(n,k)=\min\left(\sum_{i=0}^n \binom{\frac{k^2-k}{2}}{i},k!\right).
\end{equation}
\end{theorem}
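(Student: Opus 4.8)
The plan is to deduce Theorem~\ref{theorem:ReLu_network_upper_bound_k} from Theorem~\ref{theorem:ReLu_network_upper_bound_k_q} by showing that the minimum number of pieces $q$ of any CPWL function $p\colon\mathbb{R}^n\to\mathbb{R}$ with $k$ distinct linear components already satisfies $q\le\phi(n,k)$, and then exploiting that the right-hand sides of the bounds in Theorem~\ref{theorem:ReLu_network_upper_bound_k_q} are nondecreasing in the number of pieces. To bound $q$ I would combine two independent estimates: the inequality $q\le k!$ recorded in (\ref{eq:he_bound_k_q}), and a hyperplane-arrangement estimate $q\le\sum_{i=0}^n\binom{\frac{k^2-k}{2}}{i}$ explained next; together these give $q\le\min\!\big(\sum_{i=0}^n\binom{\frac{k^2-k}{2}}{i},\,k!\big)=\phi(n,k)$.

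For the arrangement estimate, let $f_1,\dots,f_k$ be the distinct linear components of $p$, and for every pair $a<b$ for which $f_a-f_b$ is nonconstant form the hyperplane $H_{ab}=\{\mathbf{x}\in\mathbb{R}^n:f_a(\mathbf{x})=f_b(\mathbf{x})\}$; there are at most $\binom{k}{2}=\frac{k^2-k}{2}$ of them, giving an arrangement $\mathcal{H}$. I would then argue that $p$ is affine on (the closure of) every full-dimensional cell of $\mathcal{H}$. One route is to invoke the max--min representation of CPWL functions \citep{tarela1990representation,ovchinnikov2002max}, writing $p$ as a finite maximum of finite minima of $f_1,\dots,f_k$, so that on any connected set on which the relative order of $f_1(\mathbf{x}),\dots,f_k(\mathbf{x})$ is constant $p$ equals a single $f_j$; a full-dimensional cell of $\mathcal{H}$ is such a set. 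Alternatively, a direct connectedness argument works: inside an open cell $C$, any breakpoint of $p$ along a line segment would force two linear components to agree there, placing that point on some $H_{ab}$ and contradicting $C\cap H_{ab}=\emptyset$, so $p$ has no breakpoints in $C$ and is affine on $C$, hence on $\overline{C}$ by continuity. The closed full-dimensional cells of $\mathcal{H}$ are finitely many closed convex polyhedra covering $\mathbb{R}^n$, so they form a valid family of pieces for $p$ in the sense of Definition~\ref{def:continuous_piecewise_linear_fn}; consequently $q$ is at most their number, which by the classical bound on the number of regions of an arrangement of $m$ hyperplanes in $\mathbb{R}^n$ (the count underlying Zaslavsky's theorem \citep{zaslavsky1975facing}) is at most $\sum_{i=0}^n\binom{m}{i}\le\sum_{i=0}^n\binom{\frac{k^2-k}{2}}{i}$, the last step using monotonicity of $\binom{\cdot}{i}$ in its upper argument.

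With $q\le\phi(n,k)$ in hand, I would apply Theorem~\ref{theorem:ReLu_network_upper_bound_k_q} to $p$ with its minimum number of pieces $q$ and its $k$ linear components, and then replace $q$ by the larger quantity $\phi(n,k)$ throughout the three bounds. This is legitimate because each bound is nondecreasing in $q$: the coefficient $3\cdot2^{\left\lceil\log_2k\right\rceil}+2\left\lceil\log_2k\right\rceil-3$ is nonnegative, and $2^{\left\lceil\log_2q\right\rceil}$, $\left\lceil\log_2q\right\rceil$, and $\left\lceil\frac{3k}{2}\right\rceil q$ are all nondecreasing in $q$; the substitution then yields exactly the claimed inequalities $l\le\left\lceil\log_2\phi(n,k)\right\rceil+\left\lceil\log_2k\right\rceil+1$, $w\le\mathbb{I}\left[k>1\right]\left\lceil\frac{3k}{2}\right\rceil\phi(n,k)$, and the displayed bound on $h$.

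The step I expect to be the main obstacle is the rigorous proof that $p$ is affine on every cell of $\mathcal{H}$: one must fix the precise version of the max--min (or lattice) representation that uses only the $k$ distinct linear components rather than arbitrary affine pieces, handle pairs $f_a,f_b$ that share their linear part (which contribute no hyperplane and no constraint), and verify that lower-dimensional cells need not be listed separately because they lie in the closures of full-dimensional ones. Everything afterward is routine: the region count for hyperplane arrangements is standard, and the reduction to Theorem~\ref{theorem:ReLu_network_upper_bound_k_q} is just a monotone substitution.
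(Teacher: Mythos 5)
Your proposal is correct and follows essentially the same route as the paper: the paper proves Theorem~\ref{theorem:ReLu_network_upper_bound_k} by invoking Lemma~\ref{lemma:upper_bound_minimum_number_regions} to get $q\le\phi(n,k)$ and then monotonically substituting $\phi(n,k)$ for $q$ in Theorem~\ref{theorem:ReLu_network_upper_bound_k_q}, exactly as you do. Your inlined derivation of $q\le\phi(n,k)$ — taking the arrangement of the at most $\binom{k}{2}$ pairwise-equality hyperplanes, showing $p$ is affine on each closed full-dimensional cell so that these cells form a valid convex family of pieces, and counting cells via Zaslavsky together with the $k!$ permutation bound — is a streamlined version of the argument the paper packages into Lemma~\ref{lemma:upper_bound_minimum_number_regions} and Lemma~\ref{lemma:upper_bound_num_feasible_ascending_orders}.
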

The proof of Theorem \ref{theorem:ReLu_network_upper_bound_k} is deferred to Appendix \ref{proof:ReLu_network_upper_bound_k} in the supplementary material. Because $\phi(n,k)\leq k!$, the worst-case asymptotic bounds for $l$, $w$ and $h$ are $l=\mathcal{O}\left(k\log_2k\right)$, $w=\mathcal{O}\left(k\cdot k!\right)$, and $h=\mathcal{O}\left(k\cdot k!\right)$, respectively. However, it holds that
$
    \sum_{i=0}^n \binom{\frac{k^2-k}{2}}{i}\leq k^{2n},
$
so the asymptotic bounds are $l=\mathcal{O}\left(n\log_2k\right)$, $w=\mathcal{O}\left(k^{2n+1}\right)$, and $h=\mathcal{O}\left(k^{2n+1}\right)$ when $n$ grows sufficiently slower than $k$. For example, $n=\mathcal{O}\left(\sqrt{k}\right)$. In this case, $w$ and $h$ are bounded from above by a polynomial of order $2c\sqrt{k}+1$ for some constant $c$, which grows slower than factorial growth. Such an advantage for small $n$ is illustrated by the figure on the left-hand side of Figure \ref{fig:phi}.

Since the bound given by Theorem 5.2 in \citep{he2020relu} can be bounded from below by $\mathcal{O}\left(n2^{k\cdot k!}\right)$, it at the minimum grows exponentially larger than the upper bound of $h$ in Theorem \ref{theorem:ReLu_network_upper_bound_k}. Even for a small $n$, the relative order of growth is gigantic. The figure on the right-hand side of Figure \ref{fig:phi} illustrates such a large difference. For $k=5$, $n2^{k\cdot k!}\approx 7.92\times 10^{28}$ when $n=1$, while our bound of $h$ is at most $3615$ for any $n$. The difference is extremely large even though $k$ is small under $n=1$. The middle plot in Figure \ref{fig:phi} shows that (\ref{eq:inferred_upper_bound}) increases much faster when $n$ becomes larger. For $k=3$, $k^{2n^2+3n+1}\log_2\left(n+1\right)\approx 8.20\times 10^{110}$ when $n=10$, while our bound of $h$ is at most $95$ for any $n$. The upper bound of $h$ in Theorem \ref{theorem:ReLu_network_upper_bound_k} is much better than (\ref{eq:inferred_upper_bound}) for any $n$ and $k$.

\begin{lemma} \label{lemma:upper_bound_minimum_number_regions}
Let $\mathcal{P}_{n,k}$ be the set of all CPWL functions with exactly $k$ distinct linear components such that $p\colon\mathbb{R}^n\to\mathbb{R},\forall p\in\mathcal{P}_{n,k}$.
Let $\mathcal{C}_{n,k}(p)$ be the collection of all families of closed convex subsets satisfying Definition \ref{def:continuous_piecewise_linear_fn} for any $p\in\mathcal{P}_{n,k}$. Then,
$ \label{eq:k_q_new_bound}
    k\leq\min_{\mathcal{Q}\in\mathcal{C}_{n,k}(p)} \abs{\mathcal{Q}}\leq\phi(n,k).
$
\end{lemma}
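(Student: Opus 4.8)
The two inequalities are proved independently. For the lower bound $k\le\min_{\mathcal{Q}}\abs{\mathcal{Q}}$, the plan is to first observe that in a family $\{\mathcal{U}_i\}_{i\in[m]}$ realizing the \emph{minimum} number of closed pieces of $p$, every $\mathcal{U}_i$ has nonempty interior: if some $\mathcal{U}_{i_0}$ did not, then $\bigcup_{i\neq i_0}\mathcal{U}_i$ would be a closed set containing the dense set $\mathbb{R}^n\setminus\mathcal{U}_{i_0}$, hence all of $\mathbb{R}^n$, contradicting minimality (this is the nonempty-interior fact the paper invokes via Lemma~\ref{lemma:merge2}). Since two affine functions agreeing on an open set coincide, each of the $k$ distinct linear components $f_1,\dots,f_k$ then agrees with $p$ on some open ball. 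Now take an arbitrary convex family $\mathcal{Q}=\{\mathcal{X}_i\}_{i\in[q]}\in\mathcal{C}_{n,k}(p)$: each such ball is covered by finitely many $\mathcal{X}_i$, so one of them meets it in a set with nonempty interior, and as $p$ is affine on $\mathcal{X}_i$ and equals $f_j$ on an open subset, $p\equiv f_j$ on $\mathcal{X}_i$. The assignment of such a piece to each $j$ is injective (distinct pieces carry distinct affine functions), so $q\ge k$.

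For the upper bound, the plan is to exhibit a single convex family in $\mathcal{C}_{n,k}(p)$ of size at most $\phi(n,k)$. Starting from the max--min (lattice) representation of $p$ over its distinct linear components, $p=\max_i\min_{j\in S_i}f_j$ with $S_i\subseteq[k]$ \citep{tarela1990representation,ovchinnikov2002max}, I would form the arrangement of the at most $\binom{k}{2}=\frac{k^2-k}{2}$ hyperplanes $H_{ab}=\{\,\mathbf{x}:f_a(\mathbf{x})=f_b(\mathbf{x})\,\}$ (over pairs with $f_a\neq f_b$) and take the family of closures of its full-dimensional cells. Each closure is a polyhedron, hence closed and convex; their union is $\overline{\mathbb{R}^n\setminus\bigcup_{a<b}H_{ab}}=\mathbb{R}^n$; and on each open cell the strict ordering of $f_1(\mathbf{x}),\dots,f_k(\mathbf{x})$ is constant, so each inner $\min$ collapses to a fixed $f_j$ and then the outer $\max$ collapses to a fixed $f_{j^\star}$, whence $p\equiv f_{j^\star}$ on the closure by continuity. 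Thus this family lies in $\mathcal{C}_{n,k}(p)$, and its cardinality is the number of full-dimensional cells of the arrangement, which is at most $\sum_{i=0}^n\binom{\binom{k}{2}}{i}$ by the classical hyperplane-arrangement region count, and also at most $k!$ because each full-dimensional cell is determined by the total order on $[k]$ induced by the strict, pointwise-transitive inequalities $f_a<f_b$; taking the minimum yields $\phi(n,k)$, and combining with the first inequality also recovers $k\le\phi(n,k)$ for free.

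\textbf{Main obstacle.} The delicate point is anchoring the upper-bound argument on a max--min representation whose atoms are \emph{exactly} the $k$ distinct linear components, rather than some larger auxiliary set of affine maps: I would need to invoke the lattice-representation theorem in precisely that form and then verify that the ``collapse to a single $f_{j^\star}$'' on each cell truly depends only on the order type among $f_1,\dots,f_k$. Everything else---the two counting estimates for the arrangement and the whole lower-bound argument---is routine once the nonempty-interior observation is in hand.
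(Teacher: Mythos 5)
Your proposal is correct and takes essentially the same route as the paper: the lower bound rests on the nonempty-interior property of the minimal pieces (the paper's Lemma \ref{lemma:merge2}\ref{lemma:nonempty_interior}, with the paper citing Definition \ref{def:linear_components} a bit more tersely than your direct injective-assignment argument), and the upper bound counts the full-dimensional cells of the arrangement of the at most $\binom{k}{2}$ pairwise-equality hyperplanes, using Zaslavsky's theorem and the order-type injection exactly as in the paper's Lemma \ref{lemma:upper_bound_num_feasible_ascending_orders}. The ``main obstacle'' you flag is not one: the paper invokes Theorem 4.2 of \citep{tarela1999region} precisely in the form $p=\max_{\mathcal{X}}\min_{i\in\mathcal{A}(\mathcal{X})}f_i$ over the $k$ distinct linear components, and the max--min step can even be bypassed, since each open cell avoids every piece boundary (which lies in the union of the equality hyperplanes by Lemma \ref{lemma:merge1}\ref{lamma:intersection_included_in_affine_subspace}) and, being connected, sits inside the interior of a single minimal piece on which $p$ is already affine.
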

The proof of Lemma \ref{lemma:upper_bound_minimum_number_regions} is deferred to Appendix \ref{proof:upper_bound_minimum_number_regions} in the supplementary material. Clearly, $\phi(n,k)$ is a better upper bound of $q$ compared to the bound $q\leq k!$ given by \cite{he2020relu}. When $n$ grows sufficiently slower than $k$, the bound $\phi(n,k)$ can be exponentially smaller than $k!$.

\subsection{Limitations} \label{limitations}
Although these new bounds are significantly better than previous results, it is still possible to find a ReLU network whose hidden neurons are fewer than the bounds in Theorem \ref{theorem:ReLu_network_upper_bound_k_q} to exactly represent a given CPWL function. A tight bound for the case when $n=1$ was first given by Theorem 2.2 in \citep{arora2018understanding}. However, it seems more difficult to bound the size of a network from below for $n>1$. To the best of our knowledge, we are not aware of any tight bounds in the literature for the size of the ReLU network representing a general CPWL function using an arbitrary input dimension.

\section{Representations of CPWL functions have different implications on depth} \label{representations_CPWL_implications}
We reveal implications of using different representations of CPWL functions and their impact on constructing ReLU networks. We first discuss the popular representation used by prior work and the implicit constraint imposed by such a representation. 
\subsection{Constrained depth} \label{constrained_depth}
\cite{arora2018understanding}, \cite{he2020relu}, and \cite{hertrich2021towards} proved the same bound for the number of layers, relying on the following representation of a CPWL function
\begin{equation} \label{eq:convex_pwl_representation}
    p(\mathbf{x})=\sum_{j=1}^J\sigma_j\max_{i\in\eta(j)}f_i(\mathbf{x})
\end{equation}
where $\sigma_j\in\{+1,-1\}$ and $\eta(j)$ is a subset of $[J]$ such that $\abs{\eta(j)}\leq n+1$ for all $j\in[J]$. That is, a sum of a finite number of \textit{max-$\eta$-affine} functions whose signs may be flipped. (\ref{eq:convex_pwl_representation}) was established by Theorem 1 in \citep{wang2005generalization} which is essentially the same as Theorem 1 in \citep{wang2004general} that emphasizes the difference between two convex piecewise linear functions. This result was also used by \cite{goodfellow2013maxout} to prove Proposition 4.1 in the maxout network paper.

The depth given by (\ref{eq:prior_bound_on_depth}) does not scale with the complexity of a CPWL function. This feature directly comes from using a ReLU network to realize each of \textit{max-$\eta$-affine} functions in (\ref{eq:convex_pwl_representation}) and concatenating all of them together. Because the size of $\eta(j)$ is bounded from above by $n+1$, the depth can be made to depend solely on $n$. Such a treatment seems to be the only way if one considers a CPWL function represented by (\ref{eq:convex_pwl_representation}). As a result, the ReLU network is forced to use a depth constrained by the input dimension to represent the given CPWL function, which in turn requires more hidden neurons.
Because we do not use (\ref{eq:convex_pwl_representation}), our networks are not limited by such an implication.

\subsection{Proof sketch for the unconstrained depth} \label{proof_sketches}
We give a proof sketch in this subsection for our main results. By using a different representation, the depth of a ReLU network is able to be scaled with the complexity measure, i.e., the number of pieces, of any given CPWL function to accommodate the high expressivity.

By Theorem 4.2 in \citep{tarela1999region}, any CPWL function $p$ can be represented as
\begin{equation} \label{eq:max_min_representation_main_text}
    p(\mathbf{x}) = \\\max_{\mathcal{X}\in\mathcal{Q}}\min_{i\in\mathcal{A}\left(\mathcal{X}\right)}f_{i}(\mathbf{x})
\end{equation}
for all $\mathbf{x}\in\mathbb{R}^n$ where
$
    \mathcal{A}\left(\mathcal{X}\right)=\left\{i\in[k]\left|\right.f_{i}(\mathbf{x})\geq p(\mathbf{x}),\forall \mathbf{x}\in\mathcal{X}\right\}
$
is the set of indices of linear components that have values greater than or equal to $p(\mathbf{x})$ for all $\mathbf{x}\in\mathcal{X}$, and $\mathcal{Q}$ is any family of closed convex subsets of $\mathbb{R}^n$ satisfying Definition \ref{def:continuous_piecewise_linear_fn}. We have used $f_{1},f_{2},\cdots,f_{k}$ to denote the $k$ distinct linear components of $p$.
Notice that Theorem 4.2 in \citep{tarela1999region} was first stated by Theorem 7 in \citep{tarela1990representation}. Both are essentially the same, but Theorem 4.2 in \citep{tarela1999region} emphasizes the convexity of each of the regions in the domain. Both theorems are also fundamentally equivalent to Theorem 2.1 in \citep{ovchinnikov2002max}. Notice that one of the concluding remarks in \citep{ovchinnikov2002max} pointed out that the convexity of the input space is an essential assumption. The entire space $\mathbb{R}^n$ satisfies such an assumption. In addition, \citeauthor{ovchinnikov2002max} pointed out that the max-min representation also holds for vector-valued CPWL functions. Hence, it is possible to generalize our bounds to vector-valued CPWL functions.

Using the representation in (\ref{eq:max_min_representation_main_text}) and Lemma \ref{lemma:ReLU_network_represent_maximum} below, we are able to prove Theorem \ref{theorem:ReLu_network_upper_bound_k_q} by bounding the size of $\mathcal{Q}$ and $\mathcal{A}\left(\mathcal{X}\right)$. Theorem \ref{theorem:ReLu_network_upper_bound_q} and \ref{theorem:ReLu_network_upper_bound_k} can be proved by applying Lemma \ref{lemma:upper_bound_minimum_number_regions} to Theorem \ref{theorem:ReLu_network_upper_bound_k_q}.
Note that the size $\abs{\mathcal{Q}}$ in (\ref{eq:max_min_representation_main_text}) is the key for the depth of a ReLU network to be able to scale with $q$.
\begin{lemma} \label{lemma:ReLU_network_represent_maximum}
Let $m$ be any positive integer. Define
$
    l(m)=\lceil\log_2m\rceil+1,
$
$
w(m)=\mathbb{I}\left[m>1\right]\left\lceil\frac{3m}{2}\right\rceil,
$
and the following sequence for any positive integer $k$,
\begin{equation} \label{eq:num_neurons_sequence}
    r(k)=\begin{cases}0, &\text{if $k=1$},\\\frac{3k}{2}+r\left(\frac{k}{2}\right), &\text{if $k$ is even},\\2+\frac{3(k-1)}{2}+r\left(\frac{k+1}{2}\right), &\text{if $k\neq 1$ and $k$ is odd}.\end{cases}
\end{equation}
Then, there exists an $l(m)$-layer ReLU network $g\colon\mathbb{R}^n\to\mathbb{R}$ with $r(m)$ hidden neurons and a maximum width of $w(m)$ such that $g$ computes the extremum of $f_1(\mathbf{x}),f_2(\mathbf{x}),\cdots,f_m(\mathbf{x})$, i.e.,
$
    g(\mathbf{x})=\max_{i\in[m]} f_i(\mathbf{x})$ or $g(\mathbf{x})=\min_{i\in[m]} f_i(\mathbf{x})
$
for all $\mathbf{x}\in\mathbb{R}^n$ under any $m$ scalar-valued affine functions $f_1,f_2,\cdots,f_m$. Furthermore, Algorithm \ref{alg:find_a_extremum_ReLU_network} finds such a network in $\mathsf{poly}(m,n)$ time.
\end{lemma}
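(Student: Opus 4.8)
The plan is to prove Lemma~\ref{lemma:ReLU_network_represent_maximum} by strong induction on $m$, using the associativity of $\max$ (resp.\ $\min$) to unfold the extremum over $m$ affine functions into a balanced binary tree of pairwise extrema, and realizing each level of the tree by a single ReLU hidden layer. The base case $m=1$ is the $1$-layer network $g(\mathbf{x})=f_1(\mathbf{x})$ with no hidden units, matching $l(1)=1$, $w(1)=0$, $r(1)=0$. The $\min$ case follows either by passing to $\min_i f_i=-\max_i(-f_i)$ or by running the same construction with $\min(a,b)=b-\sigma(b-a)$ in place of $\max(a,b)=b+\sigma(a-b)$; the neuron, width, and depth counts are identical, so I would carry out the argument for $\max$ only.

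For the inductive step, given affine $f_1,\dots,f_m\colon\mathbb{R}^n\to\mathbb{R}$ with $m\ge 2$, pair them as $(f_1,f_2),(f_3,f_4),\dots$ and build the first hidden layer as follows: for each pair $(f_{2j-1},f_{2j})$ place the three units $\sigma(f_{2j-1}-f_{2j})$, $\sigma(f_{2j})$, $\sigma(-f_{2j})$, so that $\max(f_{2j-1},f_{2j})=\sigma(f_{2j-1}-f_{2j})+\sigma(f_{2j})-\sigma(-f_{2j})$ is a \emph{linear} function of these three post-activations; if $m$ is odd, additionally place the two units $\sigma(f_m),\sigma(-f_m)$ to carry $f_m=\sigma(f_m)-\sigma(-f_m)$ forward. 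The width of this layer is $3m/2$ when $m$ is even and $3(m-1)/2+2$ when $m$ is odd, which in both cases equals $w(m)=\lceil 3m/2\rceil=:d_1$. Writing $\mathbf{y}\in\mathbb{R}^{d_1}$ for the vector of these post-activations, there are $\lceil m/2\rceil$ affine functions $g_1,\dots,g_{\lceil m/2\rceil}$ of $\mathbf{y}$ with $\max_{i\in[m]}f_i(\mathbf{x})=\max_{j}g_j(\mathbf{y})$, so I apply the inductive hypothesis to the $\lceil m/2\rceil$ affine maps $g_j\colon\mathbb{R}^{d_1}\to\mathbb{R}$ (a legitimate instance of the lemma with input dimension $d_1$) and stack the resulting $l(\lceil m/2\rceil)$-layer network on top of the first layer. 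Using the identity $\lceil\log_2\lceil m/2\rceil\rceil=\lceil\log_2 m\rceil-1$ for $m\ge 2$, the total depth is $1+l(\lceil m/2\rceil)=l(m)$; the hidden-neuron count is $d_1+r(\lceil m/2\rceil)$, which equals $r(m)$ by definition of the recurrence in~(\ref{eq:num_neurons_sequence}) once one separates the even and odd cases; and the maximum width is $\max\!\left(d_1,\,w(\lceil m/2\rceil)\right)=d_1=w(m)$ since $w$ is nondecreasing and $\lceil m/2\rceil<m$.

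The construction is entirely explicit: each recursion level only writes down a weight matrix and bias vector whose dimensions are bounded by the widths above, and there are $l(m)-1=O(\log m)$ levels, so the full description has size $\mathsf{poly}(m,n)$ and Algorithm~\ref{alg:find_a_extremum_ReLU_network} produces it in $\mathsf{poly}(m,n)$ time. I expect the only delicate part to be the bookkeeping in the odd case --- the two extra pass-through units $\sigma(f_m),\sigma(-f_m)$ are precisely what makes the recurrence branch into its third case and what makes the number of surviving functions $\lceil m/2\rceil$ rather than $\lfloor m/2\rfloor$ --- together with checking that the first layer is always the widest so that the maximum over all layers collapses to $d_1$. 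The conceptual point underlying everything is that a pairwise $\max$ can be written as \emph{one} new ReLU plus a running affine part, and that affine part is itself recovered from the layer outputs as a difference of two ReLUs, which keeps each layer $O(m)$ wide and the network $O(\log m)$ deep; the rest is routine verification that the explicit counts coincide with $l(m)$, $w(m)$, and $r(m)$.
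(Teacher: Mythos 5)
Your proposal is correct and follows essentially the same route as the paper's proof: the pairwise identity $\max(a,b)=\sigma(a-b)+\sigma(b)-\sigma(-b)$ realized with three ReLUs per pair (plus two pass-through ReLUs for an odd leftover), recursive halving to get $\lceil\log_2 m\rceil$ hidden layers, the recurrence for $r(m)$, and the sign-flip reduction for $\min$. The only cosmetic difference is that you obtain the maximum-width claim from monotonicity of $w$ inside the induction, whereas the paper proves separately that $r$ is strictly increasing and concludes the first hidden layer is the widest.
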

The proof of Lemma \ref{lemma:ReLU_network_represent_maximum} is deferred to Appendix \ref{proof:ReLU_network_represent_maximum} in the supplementary material. One can also view $l(m)$, $w(m)$, and $r(m)$ as upper bounds for the number of layers, maximum width, and the number of hidden neurons. Because 
$
    r(m)<6m-3
$
by Lemma \ref{lemma:r_upper_bound_formula}, the bound for the number of hidden neurons $r(m)$ is tighter than the bound $8m-4$ given by Lemma D.3 in \citep{arora2018understanding} or Lemma 5.4 in \citep{he2020relu} (these two lemmas are essentially the same). The bound for the number of layers remains the same as the one given by Lemma D.3 in \citep{arora2018understanding}. By combining Lemma \ref{lemma:ReLU_network_represent_maximum} with Lemma \ref{lemma:ReLU_network_represent_identity}, Lemma \ref{lemma:composition_ReLU_networks}, and Lemma \ref{lemma:concat_kReLU_networks}, we can easily perform the same job on computing the extremum of multiple scalar-valued ReLU networks as Lemma D.3 does in \citep{arora2018understanding}. Lemma \ref{lemma:r_upper_bound_formula}, \ref{lemma:ReLU_network_represent_identity}, \ref{lemma:composition_ReLU_networks}, and \ref{lemma:concat_kReLU_networks} are given in Appendix \ref{lemmas} in the supplementary material.
\section{Broader impact}
Our results guarantee that any CPWL function can be exactly computed by a ReLU neural network at a more manageable cost. This assurance is crucial because CPWL functions are important tools in many applications. Such an assurance also relates DNNs closer to CPWL functions and allows researchers and engineers to understand the expressivity of DNNs from a different perspective. We focus on simple ReLU networks (ReLU multi-layer perceptrons) in this paper, but it may be possible to derive bounds for other activation functions and advanced neural network architectures such as maxout networks \citep{goodfellow2013maxout}, residual networks \citep{he2016deep}, densely connected networks \citep{huang2017densely}, and other nonlinear networks \citep{chen2021resnests}, by making some (possibly mild) assumptions. Our contributions advance the fundamental understanding of the link between ReLU networks and CPWL functions.

\begin{ack}
We would like to thank the anonymous reviewers for their constructive comments, Tai-Hsuan Chung for answering our mathematical questions, and Christoph Hertrich for his thoughtful comments on the time complexity of Algorithm \ref{alg:find_a_network_for_a_piecewise_linear_fn} and for clarifying Theorem 4.4 in \citep{hertrich2021towards}. This work was supported in part by NSF under Grant CCF-2225617, Grant CCF-2124929, and Grant IIS-1838897, in part by NIH/NIDCD under Grant R01DC015436, and in part by KIBM Innovative Research Grant Award.
\end{ack}

\bibliography{ref}

\section*{Checklist}

\begin{enumerate}

\item For all authors...
\begin{enumerate}
  \item Do the main claims made in the abstract and introduction accurately reflect the paper's contributions and scope?
    \answerYes{The main claims are summarized in Section \ref{contributions}.}
  \item Did you describe the limitations of your work?
    \answerYes{See Section \ref{limitations}.}
  \item Did you discuss any potential negative societal impacts of your work?
    \answerNo{This is a theoretical paper and we are not aware of any negative societal impacts.}
  \item Have you read the ethics review guidelines and ensured that your paper conforms to them?
    \answerYes{}
\end{enumerate}

\item If you are including theoretical results...
\begin{enumerate}
  \item Did you state the full set of assumptions of all theoretical results?
    \answerYes{All definitions and assumptions are stated or referenced before the results.}
        \item Did you include complete proofs of all theoretical results?
    \answerYes{Appendix \ref{proof} in the supplementary material.}
\end{enumerate}

\item If you ran experiments...
\begin{enumerate}
  \item Did you include the code, data, and instructions needed to reproduce the main experimental results (either in the supplemental material or as a URL)?
    \answerNA{}
  \item Did you specify all the training details (e.g., data splits, hyperparameters, how they were chosen)?
    \answerNA{}
        \item Did you report error bars (e.g., with respect to the random seed after running experiments multiple times)?
    \answerNA{}
        \item Did you include the total amount of compute and the type of resources used (e.g., type of GPUs, internal cluster, or cloud provider)?
    \answerNA{}
\end{enumerate}

\item If you are using existing assets (e.g., code, data, models) or curating/releasing new assets...
\begin{enumerate}
  \item If your work uses existing assets, did you cite the creators?
    \answerNA{}
  \item Did you mention the license of the assets?
    \answerNA{}
  \item Did you include any new assets either in the supplemental material or as a URL?
    \answerNA{}
  \item Did you discuss whether and how consent was obtained from people whose data you're using/curating?
    \answerNA{}
  \item Did you discuss whether the data you are using/curating contains personally identifiable information or offensive content?
    \answerNA{}
\end{enumerate}

\item If you used crowdsourcing or conducted research with human subjects...
\begin{enumerate}
  \item Did you include the full text of instructions given to participants and screenshots, if applicable?
    \answerNA{}
  \item Did you describe any potential participant risks, with links to Institutional Review Board (IRB) approvals, if applicable?
    \answerNA{}
  \item Did you include the estimated hourly wage paid to participants and the total amount spent on participant compensation?
    \answerNA{}
\end{enumerate}

\end{enumerate}


\clearpage
\appendix

\section{Lemmas} \label{lemmas}
\begin{lemma} \label{lemma:ReLU_network_represent_identity}
Let $l$ be any positive integer. There exists an $l$-layer ReLU network $g$ with $2n(l-1)$ hidden neurons and a maximum width of $2n$ such that $g(\mathbf{x})=\mathbf{x}$ for all $\mathbf{x}\in\mathbb{R}^n$. Furthermore, Algorithm \ref{alg:find_an_identity_ReLU_network} finds such a network in $\mathsf{poly}(n,l)$ time.
\end{lemma}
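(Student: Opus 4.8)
The plan is to construct the $l$-layer identity network explicitly and verify the claimed resource counts. The key observation is the identity $x = \sigma(x) - \sigma(-x)$, which holds for all $x \in \mathbb{R}$. First I would handle a single layer of the identity gadget: map $\mathbf{x} \in \mathbb{R}^n$ to the $2n$-dimensional vector $\begin{bmatrix}\sigma(x_1) & \sigma(-x_1) & \cdots & \sigma(x_n) & \sigma(-x_n)\end{bmatrix}^\mathsf{T}$ and then linearly recombine via $x_i = \sigma(x_i) - \sigma(-x_i)$ to recover $\mathbf{x}$. Concretely, this corresponds to weight matrices whose rows are $\pm$ copies of the standard basis vectors: the "expanding" matrix in $\mathbb{R}^{2n \times n}$ with rows $\mathbf{e}_i^\mathsf{T}$ and $-\mathbf{e}_i^\mathsf{T}$, and the "contracting" matrix in $\mathbb{R}^{n \times 2n}$ implementing the subtraction. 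All bias vectors are zero.

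Next I would chain $l-1$ such gadgets to obtain an $l$-layer network. Using the notation of Definition \ref{def:plain_ReLU_network}, I would set $k_0 = k_l = n$ and $k_1 = \cdots = k_{l-1} = 2n$. Take $\mathbf{W}_1 \in \mathbb{R}^{2n \times n}$ to be the expanding matrix, $\mathbf{W}_i \in \mathbb{R}^{2n \times 2n}$ for $2 \le i \le l-1$ to be the composition of contracting-then-expanding (so that after applying $\sigma_{2n}$ to layer $i-1$'s preactivation one recovers $\mathbf{x}$ and then re-expands it), and $\mathbf{W}_l \in \mathbb{R}^{n \times 2n}$ the contracting matrix; all $\mathbf{b}_i = \mathbf{0}$. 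An induction on $i$ shows that the post-activation vector at hidden layer $i$ equals $\begin{bmatrix}\sigma(x_1) & \sigma(-x_1) & \cdots\end{bmatrix}^\mathsf{T}$, hence $h_l(\mathbf{x}) = \mathbf{x}$. The number of hidden neurons is $\sum_{i=1}^{l-1} k_i = 2n(l-1)$ and the maximum width is $\max_{i \in [l-1]} k_i = 2n$, as claimed. (When $l = 1$ the network is the trivial $\mathbf{x} \mapsto \mathbf{x}$ with $0$ hidden neurons, consistent with $2n(l-1) = 0$.)

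For the algorithmic claim, Algorithm \ref{alg:find_an_identity_ReLU_network} simply writes down the matrices $\mathbf{W}_1, \ldots, \mathbf{W}_l$ and zero biases described above; each has at most $O(n^2)$ entries drawn from $\{-1, 0, 1\}$, and there are $l$ of them, so the total work is $\mathsf{poly}(n, l)$. There is no real obstacle here — the only thing requiring a little care is the bookkeeping for the intermediate matrices $\mathbf{W}_i$ with $2 \le i \le l-1$, making sure the contraction and re-expansion compose correctly so that the invariant "layer $i$'s activations encode $\mathbf{x}$" is maintained; once the invariant is set up, the verification is a routine matrix computation.
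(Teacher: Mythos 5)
Your proposal is correct and follows essentially the same route as the paper: the identity $x=\sigma(x)-\sigma(-x)$ applied coordinatewise gives a $2n$-wide hidden layer realizing the identity, and stacking $l-1$ such layers (with the contract-then-expand matrices in between) yields the $l$-layer network with $2n(l-1)$ hidden neurons; the explicit matrices you describe are exactly those written down in the paper's Algorithm~\ref{alg:find_an_identity_ReLU_network}. No gaps.
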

\begin{proof}
Appendix \ref{proof:ReLU_network_represent_identity}.
\end{proof}

\begin{definition}
Let $g_{(l,n,w)}$ denote an $l$-layer ReLU network with $n$ hidden neurons and a maximum width bounded from above by $w$.
\end{definition}

\begin{lemma} \label{lemma:composition_ReLU_networks}
There exists $g_{\left(l_1+l_2-1,n_1+n_2,\max(w_1,w_2)\right)}$ that represents any composition of $g_{(l_1,n_1,w_1)}$ and $g_{(l_2,n_2,w_2)}$. Algorithm \ref{alg:find_a_composited_ReLU_network} finds such a network computing the composition in $\mathsf{poly}\left(\max(w_1,w_2),\max(l_1,l_2)\right)$ time.
\end{lemma}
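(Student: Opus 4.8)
The plan is to build the composed network by stacking the two given networks and fusing the single affine map that sits at their interface. Let $g_1 := g_{(l_1,n_1,w_1)}$ have weights $(\mathbf{W}_i^{(1)},\mathbf{b}_i^{(1)})$ for $i\in[l_1]$ as in Definition \ref{def:plain_ReLU_network}, and let $g_2 := g_{(l_2,n_2,w_2)}$ have weights $(\mathbf{W}_i^{(2)},\mathbf{b}_i^{(2)})$ for $i\in[l_2]$; for the composition, say $g_2\circ g_1$ (the other order is symmetric), to be well-defined the output dimension of $g_1$ must equal the input dimension of $g_2$. The key observation is that the last layer of $g_1$ is an affine map producing $g_1(\mathbf{x})$ with no ReLU applied afterward, while the first layer of $g_2$ is the affine map $\mathbf{y}\mapsto\mathbf{W}_1^{(2)}\mathbf{y}+\mathbf{b}_1^{(2)}$ applied directly to its input; substituting $\mathbf{y}=g_1(\mathbf{x})$ collapses these two affine maps into a single affine map with weight $\mathbf{W}_1^{(2)}\mathbf{W}_{l_1}^{(1)}$ and bias $\mathbf{W}_1^{(2)}\mathbf{b}_{l_1}^{(1)}+\mathbf{b}_1^{(2)}$ that takes the same argument the last layer of $g_1$ took.

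Accordingly I would define the composed network to have $l_1+l_2-1$ layers: layers $1,\dots,l_1-1$ are those of $g_1$ verbatim, layer $l_1$ is the fused affine map above, and layers $l_1+1,\dots,l_1+l_2-1$ are layers $2,\dots,l_2$ of $g_2$ verbatim. A short induction on the layer index, unwinding the recursion $h_i(\mathbf{x})=\mathbf{W}_i\sigma_{k_{i-1}}(h_{i-1}(\mathbf{x}))+\mathbf{b}_i$ of Definition \ref{def:plain_ReLU_network}, then shows that the output of this network at any $\mathbf{x}$ equals $g_2(g_1(\mathbf{x}))$.

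For the resource counts I would appeal to Definition \ref{def:num_hidden_neurons_width}. The hidden layers of the composed network are its layers $1,\dots,l_1+l_2-2$: the first $l_1-1$ contribute exactly $n_1$ neurons (the hidden layers of $g_1$), while the fused layer $l_1$ outputs a vector whose dimension is the width of the first layer of $g_2$, so together with layers $2,\dots,l_2-1$ of $g_2$ it contributes the sum of the widths of layers $1,\dots,l_2-1$ of $g_2$, namely $n_2$; the total is $n_1+n_2$. Every hidden-layer width of the composed network is a hidden-layer width of either $g_1$ (at most $w_1$) or $g_2$ (at most $w_2$), so the maximum width is at most $\max(w_1,w_2)$. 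The degenerate cases $l_1=1$ or $l_2=1$, in which one network is a lone affine map with $0$ hidden neurons, are absorbed by the same fusion and yield the same counts once one recalls that a $1$-layer network has $0$ hidden neurons. For the running time, Algorithm \ref{alg:find_a_composited_ReLU_network} merely copies $\mathcal{O}(l_1+l_2)$ weight matrices and forms one matrix product and one matrix--vector product for the fused layer, which takes $\mathsf{poly}(\max(w_1,w_2),\max(l_1,l_2))$ time.

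The only subtle point --- and the closest thing to an obstacle in an otherwise mechanical argument --- is the bookkeeping at the seam: one must see that fusing the two interface affine maps is exactly what makes the depth $l_1+l_2-1$ rather than $l_1+l_2$, and must check that the output dimension of the fused layer was already counted inside $n_2$, so that the hidden-neuron total is precisely $n_1+n_2$ with neither double counting nor omission; verifying the $l_i=1$ boundary cases against Definition \ref{def:num_hidden_neurons_width} finishes the care that is needed.
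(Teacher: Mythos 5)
Your proposal is correct and follows exactly the paper's argument: the last affine layer of the first network and the first affine layer of the second are fused into a single affine map (since a composition of affine maps is affine), giving depth $l_1+l_2-1$, hidden-neuron count $n_1+n_2$, and maximum width at most $\max(w_1,w_2)$, with the stated polynomial running time for Algorithm \ref{alg:find_a_composited_ReLU_network}. Your write-up is in fact more detailed than the paper's on the seam bookkeeping and the $l_i=1$ edge cases.
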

\begin{proof}
Appendix \ref{proof:composition_ReLU_networks}.
\end{proof}

\begin{lemma}  \label{lemma:strictly_increasing_sequence}
The sequence $r(k)$ defined by (\ref{eq:num_neurons_sequence}) is a strictly increasing sequence.
\end{lemma}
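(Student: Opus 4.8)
The plan is to prove the claim by strong induction on $k$, showing that $r(k+1) > r(k)$ for every positive integer $k$. The useful structural fact is that $k$ and $k+1$ lie in opposite parity classes, so exactly one of them is odd; this pins down which branches of the recurrence in (\ref{eq:num_neurons_sequence}) govern $r(k)$ and $r(k+1)$, and the two recursive terms then line up conveniently.

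For the base case $k = 1$ I would just compute $r(2) = \frac{3\cdot 2}{2} + r(1) = 3 > 0 = r(1)$. For the inductive step, assume $r(j+1) > r(j)$ for all positive integers $j < k$, and split on the parity of $k$.

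If $k$ is odd with $k \ge 3$, then $k+1$ is even, and the recurrence gives $r(k) = 2 + \frac{3(k-1)}{2} + r\!\left(\frac{k+1}{2}\right)$ while $r(k+1) = \frac{3(k+1)}{2} + r\!\left(\frac{k+1}{2}\right)$; the recursive terms coincide, so $r(k+1) - r(k) = \frac{3(k+1)}{2} - \frac{3(k-1)}{2} - 2 = 1 > 0$, and this case needs no inductive hypothesis at all. If instead $k$ is even (so $k \ge 2$, and $k+1$ is odd and unequal to $1$), then $r(k) = \frac{3k}{2} + r\!\left(\frac{k}{2}\right)$ and $r(k+1) = 2 + \frac{3k}{2} + r\!\left(\frac{k}{2}+1\right)$, whence $r(k+1) - r(k) = 2 + \left(r\!\left(\frac{k}{2}+1\right) - r\!\left(\frac{k}{2}\right)\right)$. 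Since $\frac{k}{2}$ is a positive integer strictly less than $k$, the inductive hypothesis applies and makes the bracketed difference positive, so $r(k+1) - r(k) > 2 > 0$. Together the two cases complete the induction.

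The argument is essentially bookkeeping, so I do not expect a genuine obstacle; the only points that require a little care are (i) making sure that when $k$ is even the value $r(k+1)$ is computed from the odd-and-$\ne 1$ branch rather than the base case, and (ii) checking that the argument $\frac{k}{2}$ fed back into the recursion is a positive integer strictly below $k$, which is exactly what licenses invoking the inductive hypothesis. Note that no integrality of $r$ is needed, since in the even case the difference is already $2$ plus something strictly positive.
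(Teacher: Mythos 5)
Your proof is correct and takes essentially the same approach as the paper: both compute the difference of consecutive terms from the recurrence, observe that one parity case yields a constant positive difference outright, and reduce the other parity case to a smaller instance. The only difference is that you package the reduction as a clean strong induction, whereas the paper unwinds it informally by "applying the equality over and over again" until reaching the even case or the base case.
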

\begin{proof}
Appendix \ref{proof:strictly_increasing_sequence}.
\end{proof}

\begin{lemma} \label{lemma:r_upper_bound_formula}
For any positive integer $k$, the sequence $r(k)$ defined by (\ref{eq:num_neurons_sequence}) satisfies
\begin{equation}
    r(k)\leq 3\left(2^{\left\lceil\log_2k\right\rceil}-1\right)< 6k-3.
\end{equation}
\end{lemma}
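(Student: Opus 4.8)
The plan is to establish the first inequality $r(k)\le 3\bigl(2^{\left\lceil\log_2 k\right\rceil}-1\bigr)$ by strong induction on $k$, and then to deduce the second inequality with essentially no extra work. For the latter, note that $2^{\left\lceil\log_2 k\right\rceil}$ is the smallest power of two that is at least $k$, so $2^{\left\lceil\log_2 k\right\rceil}<2k$ for every positive integer $k$ (it equals $k$ when $k$ is a power of two, and is at most $2^{\left\lfloor\log_2 k\right\rfloor+1}<2k$ otherwise); multiplying by $3$ and subtracting $3$ yields $3\bigl(2^{\left\lceil\log_2 k\right\rceil}-1\bigr)<6k-3$.

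For the induction I would write $m=\left\lceil\log_2 k\right\rceil$ throughout. The base case $k=1$ is immediate, since $r(1)=0=3(2^0-1)$. For the inductive step, assume $k\ge 2$ and that the claimed bound holds for every smaller positive integer; I would split on the parity of $k$, matching the three branches of the recurrence in (\ref{eq:num_neurons_sequence}).

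If $k$ is even, then $r(k)=\tfrac{3k}{2}+r(k/2)$ and $\left\lceil\log_2(k/2)\right\rceil=m-1$, so the induction hypothesis gives $r(k)\le \tfrac{3k}{2}+3\bigl(2^{m-1}-1\bigr)$; it then suffices to verify $\tfrac{3k}{2}+3\cdot 2^{m-1}\le 3\cdot 2^{m}$, which is exactly $k\le 2^{m}$ and hence true by definition of $m$. If $k$ is odd with $k\ge 3$, then $m\ge 2$ (there is no odd integer in $(2^0,2^1]$), which makes $2^{m}$ even and therefore forces the sharper bound $k\le 2^{m}-1$; moreover $2^{m-2}<(k+1)/2\le 2^{m-1}$, so $\left\lceil\log_2\bigl((k+1)/2\bigr)\right\rceil=m-1$. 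Using $r(k)=2+\tfrac{3(k-1)}{2}+r\bigl((k+1)/2\bigr)$ together with the induction hypothesis reduces the claim to $2+\tfrac{3(k-1)}{2}+3\cdot 2^{m-1}\le 3\cdot 2^{m}$, i.e.\ $3k\le 3\cdot 2^{m}-1$, which follows from $k\le 2^{m}-1$.

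The whole argument is elementary arithmetic, so I do not expect a genuine obstacle; the one place that needs care is the bookkeeping of the ceiling logarithm of the recursion's smaller argument. In particular, the odd case hinges on observing that an odd $k\ge 3$ forces $m\ge 2$, hence $k\le 2^{m}-1$ rather than merely $k\le 2^{m}$ — the weaker estimate would leave the inequality short by a constant — and on checking $\left\lceil\log_2\bigl((k+1)/2\bigr)\right\rceil=m-1$, which amounts to the sandwich $2^{m-2}<(k+1)/2\le 2^{m-1}$.
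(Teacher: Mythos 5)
Your proof is correct, but it takes a genuinely different route from the paper's. The paper first establishes (as a separate lemma) that $r$ is strictly increasing, uses that monotonicity to pass from $r(k)$ to $r\left(2^{\left\lceil\log_2k\right\rceil}\right)$, and then evaluates $r$ exactly at powers of two, where the recursion always stays in the even branch and telescopes to $\frac{3}{2}\sum_{i=1}^{m}2^i=3(2^m-1)$. You instead run a direct strong induction on $k$, splitting on parity and tracking how $\left\lceil\log_2 k\right\rceil$ decrements through the recursion's smaller argument; the delicate points you flag — that an odd $k\geq 3$ forces $k\leq 2^m-1$ rather than merely $k\leq 2^m$, and that $\left\lceil\log_2((k+1)/2)\right\rceil=m-1$ via the sandwich $2^{m-2}<(k+1)/2\leq 2^{m-1}$ — are exactly the right ones, and both check out (the odd case reduces to $3k\leq 3\cdot 2^m-1$, which your sharper estimate delivers with room to spare). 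What each approach buys: the paper's argument is shorter once the monotonicity lemma is in hand and yields the clean interpretation that the bound is attained with equality at powers of two; yours is self-contained, needing no appeal to Lemma \ref{lemma:strictly_increasing_sequence}, at the cost of slightly more bookkeeping in the odd branch. Your derivation of the second inequality $3\left(2^{\left\lceil\log_2k\right\rceil}-1\right)<6k-3$ matches the paper's (strictness coming from $2^{\left\lceil\log_2k\right\rceil}<2k$ in all cases).
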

\begin{proof}
Appendix \ref{proof:r_upper_bound_formula}
\end{proof}

\begin{lemma} \label{lemma:concat_2ReLU_networks}
Let $m_1$ and $m_2$ be the output dimensions of $g_{(l_1,n_1,w_1)}$ and $g_{(l_2,n_2,w_2)}$, respectively. Define
\begin{equation}
    l=\max(l_1,l_2),
\end{equation}
\begin{equation}
    w=w_j+\max(w_i,2m_i),
\end{equation}
and
\begin{equation}
    n=n_1+n_2+2m_i\abs{l_1-l_2},
\end{equation}
where $i=\arg\min_{k\in[2]}l_k$ and $j=[2]\setminus\{i\}$. Then, there exists $g_{(l,n,w)}$ such that
\begin{equation}
    g_{(l,n,w)}(\mathbf{x})=\begin{bmatrix}g_{(l_1,n_1,w_1)}(\mathbf{x})\\g_{(l_2,n_2,w_2)}(\mathbf{x})\end{bmatrix}
\end{equation}
for all $\mathbf{x}\in\mathbb{R}^n$.
\end{lemma}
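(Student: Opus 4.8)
The plan is to reduce to the case of two networks of \emph{equal} depth, for which block-diagonal stacking works immediately, by first padding the shorter network up to the depth of the longer one with an identity network. Set $i=\arg\min_{k\in[2]}l_k$ and $j\in[2]\setminus\{i\}$ (breaking ties arbitrarily; the choice only permutes the roles below), so $l_i\leq l_j=l$ and $\abs{l_1-l_2}=l-l_i$. By Lemma~\ref{lemma:ReLU_network_represent_identity} there is a $(\abs{l_1-l_2}+1)$-layer ReLU network $g_{\mathrm{id}}\colon\mathbb{R}^{m_i}\to\mathbb{R}^{m_i}$ with $2m_i\abs{l_1-l_2}$ hidden neurons and maximum width $2m_i$ computing the identity on $\mathbb{R}^{m_i}$ (when $l_1=l_2$ this is just the $1$-layer map $\mathbf{x}\mapsto\mathbf{x}$ with no hidden neurons). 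Composing $g_{(l_i,n_i,w_i)}$ with $g_{\mathrm{id}}$ via Lemma~\ref{lemma:composition_ReLU_networks} gives a network $\tilde g_i=g_{\left(l,\,n_i+2m_i\abs{l_1-l_2},\,\max(w_i,2m_i)\right)}$; post-composition with the identity changes nothing, so $\tilde g_i$ computes exactly $g_{(l_i,n_i,w_i)}$, and its depth is now $l_i+(\abs{l_1-l_2}+1)-1=l=l_j$.

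Next I would stack $\tilde g_i$ and $g_{(l_j,n_j,w_j)}$ in parallel. Let $\tilde{\mathbf{W}}_1,\tilde{\mathbf{b}}_1,\dots,\tilde{\mathbf{W}}_l,\tilde{\mathbf{b}}_l$ and $\mathbf{W}^{j}_1,\mathbf{b}^{j}_1,\dots,\mathbf{W}^{j}_l,\mathbf{b}^{j}_l$ denote the parameters of $\tilde g_i$ and $g_{(l_j,n_j,w_j)}$. Define the combined network by taking the vertically stacked first-layer weight $\begin{bmatrix}\tilde{\mathbf{W}}_1\\ \mathbf{W}^{j}_1\end{bmatrix}$ with bias $\begin{bmatrix}\tilde{\mathbf{b}}_1\\ \mathbf{b}^{j}_1\end{bmatrix}$, and block-diagonal weights $\begin{bmatrix}\tilde{\mathbf{W}}_t & \mathbf{0}\\ \mathbf{0} & \mathbf{W}^{j}_t\end{bmatrix}$ with bias $\begin{bmatrix}\tilde{\mathbf{b}}_t\\ \mathbf{b}^{j}_t\end{bmatrix}$ for $t=2,\dots,l$. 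Since $\sigma_k$ acts coordinatewise, a one-line induction on the layer index shows that every pre-activation vector of the combined network is the concatenation of the corresponding pre-activations of $\tilde g_i$ and $g_{(l_j,n_j,w_j)}$, hence the combined output equals $\begin{bmatrix}\tilde g_i(\mathbf{x})\\ g_{(l_j,n_j,w_j)}(\mathbf{x})\end{bmatrix}=\begin{bmatrix}g_{(l_1,n_1,w_1)}(\mathbf{x})\\ g_{(l_2,n_2,w_2)}(\mathbf{x})\end{bmatrix}$.

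It remains to read off the resources. The depth is $l=\max(l_1,l_2)$. The width of hidden layer $t$ is the sum of the widths of $\tilde g_i$ and $g_{(l_j,n_j,w_j)}$ at that layer, so the total number of hidden neurons is $\bigl(n_i+2m_i\abs{l_1-l_2}\bigr)+n_j=n_1+n_2+2m_i\abs{l_1-l_2}=n$, and by $\max_t(a_t+b_t)\leq\max_t a_t+\max_t b_t$ the maximum width is at most $\max(w_i,2m_i)+w_j=w$. I expect no serious obstacle here; the only thing that needs care is the bookkeeping that the identity padding has exactly $\abs{l_1-l_2}+1$ layers so that the composed depth matches $l$, and that composing with the identity preserves the function, while the parallel-stacking step is routine once both networks share a depth.
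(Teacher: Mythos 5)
Your proposal is correct and follows essentially the same route as the paper's proof: pad the shallower network to depth $\max(l_1,l_2)$ with the identity network from Lemma \ref{lemma:ReLU_network_represent_identity} composed via Lemma \ref{lemma:composition_ReLU_networks}, then stack the two equal-depth networks in parallel with a vertically stacked first layer and block-diagonal subsequent layers, yielding exactly the claimed $l$, $n$, and $w$. The only (immaterial) difference is ordering — the paper presents the parallel-stacking construction first and handles the depth mismatch afterward.
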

\begin{proof}
Appendix \ref{proof:concat_2ReLU_networks}.
\end{proof}

\begin{lemma} \label{lemma:concat_kReLU_networks}
Let $m_1,m_2,\cdots,m_k$ be the output dimensions of $g_{(l_1,n_1,w_1)},g_{(l_2,n_2,w_2)},\cdots,g_{(l_k,n_k,w_k)}$, respectively. Define
\begin{equation}
    l=\max_{i\in[k]}l_i,
\end{equation}
\begin{equation}
    w=\sum_{i\in[k]}\max(w_i,2m_i),
\end{equation}
and
\begin{equation}
    n=\sum_{i\in[k]}n_i+2m_i(l-l_i).
\end{equation}
Then, there exists $g_{(l,n,w)}$ such that
\begin{equation}
    g_{(l,n,w)}(\mathbf{x})=\begin{bmatrix}g_{(l_1,n_1,w_1)}(\mathbf{x})\\g_{(l_2,n_2,w_2)}(\mathbf{x})\\\vdots\\g_{(l_k,n_k,w_k)}(\mathbf{x})\end{bmatrix}
\end{equation}
for all $\mathbf{x}\in\mathbb{R}^n$. Furthermore, Algorithm \ref{alg:find_a_concat_ReLU_network} finds such a network in $\mathsf{poly}\left(\max_{i\in[k]}w_i,k,l\right)$ time.
\end{lemma}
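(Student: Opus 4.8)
The plan is to build the concatenation network by first equalizing the depths of the $k$ given networks and then stacking them ``in parallel'' with block-diagonal weight matrices. Fix $i\in[k]$ and set $l_i'=l-l_i+1\geq 1$ (using $l=\max_j l_j$). By Lemma~\ref{lemma:ReLU_network_represent_identity} there is an $l_i'$-layer ReLU network on $\mathbb{R}^{m_i}$ computing the identity map, with $2m_i(l_i'-1)=2m_i(l-l_i)$ hidden neurons and maximum width $2m_i$. Composing this identity network with $g_{(l_i,n_i,w_i)}$ via Lemma~\ref{lemma:composition_ReLU_networks} yields a network $\tilde g_i$ that still computes $g_{(l_i,n_i,w_i)}$ (precomposing/postcomposing with the identity changes nothing) but now has exactly $l_i+l_i'-1=l$ layers, $n_i+2m_i(l-l_i)$ hidden neurons, and maximum width at most $\max(w_i,2m_i)$. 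When $l_i=l$ the padding network is the trivial one-layer identity $\mathbf{x}\mapsto\mathbf{x}$, so $\tilde g_i=g_{(l_i,n_i,w_i)}$ and the bound still holds.

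Next I would combine $\tilde g_1,\dots,\tilde g_k$, all of which now have depth $l$, into a single network $g$: the first-layer affine map of $g$ stacks the first-layer affine maps of the $\tilde g_i$ vertically (they all receive the same input $\mathbf{x}\in\mathbb{R}^{k_0}$), and for each layer $j\in\{2,\dots,l\}$ the weight matrix of $g$ is the block-diagonal matrix assembled from the $j$-th weight matrices of the $\tilde g_i$ and the bias is the vertical stack of their $j$-th biases. Since $\sigma$ acts coordinatewise, $\sigma$ of a vertical stack equals the vertical stack of the $\sigma$'s, so a one-line induction on $j$ shows that $h_j$ of $g$ is the vertical stack of the $h_j$ of the $\tilde g_i$ for every $j$; in particular the output of $g$ is the stack of the outputs of the $\tilde g_i$, which is the desired concatenation.

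It then remains to count resources. The depth of $g$ is $l=\max_i l_i$. Its hidden layers are layers $1,\dots,l-1$, and the width of layer $j$ is the sum over $i$ of the width of layer $j$ of $\tilde g_i$; summing over $j$ gives $\sum_i\bigl(n_i+2m_i(l-l_i)\bigr)$ hidden neurons, while the maximum over $j$ is at most $\sum_i\max_j(\text{width of }\tilde g_i\text{ at layer }j)\leq\sum_i\max(w_i,2m_i)$, since the maximum of a sum is bounded by the sum of the maxima. (When $l=1$ there are no hidden layers and the true width is $0$, so the stated formula is only an upper bound there, which is all that is claimed.) For the algorithmic part, Algorithm~\ref{alg:find_a_concat_ReLU_network} performs exactly this construction: for each $i$ it invokes Algorithm~\ref{alg:find_an_identity_ReLU_network} for the identity padding and Algorithm~\ref{alg:find_a_composited_ReLU_network} for the composition, then assembles the stacked first layer and the block-diagonal weights/stacked biases of the remaining layers; each step is polynomial in $\max_i w_i$, $k$, $l$ (and the output dimensions, which in the intended applications are constant), giving the stated running time.

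The construction is essentially routine; the only points requiring care are the choice of padding depth $l_i'=l-l_i+1$ so that Lemma~\ref{lemma:composition_ReLU_networks} lands the composed network exactly at depth $l$ with the claimed $2m_i(l-l_i)$ overhead, and the degenerate cases $l=1$ and $l_i=l$, where some padding networks are trivial but the stated formulas remain valid upper bounds. One could alternatively iterate the two-network version Lemma~\ref{lemma:concat_2ReLU_networks}, but handling all $k$ networks at once is \emph{cleaner} and reproduces precisely the (slightly conservative) bounds above without carrying the $\lvert l_1-l_2\rvert$-type correction terms.
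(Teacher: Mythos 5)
Your proposal is correct and follows essentially the same route as the paper: pad each shallower network to depth $l$ with the identity network of Lemma~\ref{lemma:ReLU_network_represent_identity}, compose via Lemma~\ref{lemma:composition_ReLU_networks} to get depth $l$ with $n_i+2m_i(l-l_i)$ hidden neurons and width at most $\max(w_i,2m_i)$, then stack all $k$ networks with block-diagonal weights exactly as the paper does (it delegates that step to the procedure in Lemma~\ref{lemma:concat_2ReLU_networks}). Your explicit handling of the degenerate cases $l_i=l$ and $l=1$ is a minor added care, not a difference in method.
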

\begin{proof}
Appendix \ref{proof:concat_kReLU_networks}.
\end{proof}

\begin{lemma} \label{lemma:upper_bound_num_feasible_ascending_orders}
Let $f_1,f_2,\cdots,f_k$ be any affine functions such that $f_i\colon\mathbb{R}^n\to\mathbb{R}$ for all $i\in[k]$. Define the set of feasible ascending orders as
\begin{equation}
    \mathcal{S}_{f_1,f_2,\cdots,f_k}^n=\left\{(s_{1},s_{2},\cdots,s_{k})\in\mathfrak{S}(k)\left|\right.f_{s_1}(\mathbf{x})\leq f_{s_2}(\mathbf{x})\leq \cdots\leq  f_{s_{k}}(\mathbf{x}), \mathbf{x}\in\mathbb{R}^n\right\}
\end{equation}
where $\mathfrak{S}(k)$ is the collection of all permutations of the set $[k]$. It holds true that
\begin{equation} \label{eq:bound_for_num_unique_order_regions}
    \abs{\mathcal{S}_{f_1,f_2,\cdots,f_k}^n}\leq \min\left(\sum_{i=0}^n \binom{\frac{k^2-k}{2}}{i},k!\right).
\end{equation}
\end{lemma}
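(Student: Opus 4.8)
The plan is to establish the two bounds in (\ref{eq:bound_for_num_unique_order_regions}) separately. The bound $\abs{\mathcal{S}_{f_1,f_2,\cdots,f_k}^n}\leq k!$ is immediate, since $\mathcal{S}_{f_1,f_2,\cdots,f_k}^n\subseteq\mathfrak{S}(k)$ and $\abs{\mathfrak{S}(k)}=k!$. The substance is the bound $\abs{\mathcal{S}_{f_1,f_2,\cdots,f_k}^n}\leq\sum_{i=0}^n\binom{(k^2-k)/2}{i}$, and the idea is to realize every feasible ascending order as the order induced on a cell of a hyperplane arrangement, and then invoke the classical cell-count of an arrangement.

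First I would introduce, for each pair $1\leq i<j\leq k$, the hyperplane $H_{ij}=\{\mathbf{x}\in\mathbb{R}^n\mid f_i(\mathbf{x})=f_j(\mathbf{x})\}$. When $f_i-f_j$ is a nonzero constant this set is empty, and when $f_i-f_j\equiv 0$ it is all of $\mathbb{R}^n$; both degeneracies only decrease the count below, so it suffices to bound the number of genuine hyperplanes by $\binom{k}{2}=\tfrac{k^2-k}{2}$. Let $\mathcal{H}=\{H_{ij}\}_{i<j}$ and let $\mathcal{R}$ be the set of connected components of $\mathbb{R}^n\setminus\bigcup\mathcal{H}$. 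On a fixed $R\in\mathcal{R}$, each difference $f_i-f_j$ is continuous and vanishes nowhere, hence has constant sign by connectedness; therefore $f_1(\mathbf{x}),\dots,f_k(\mathbf{x})$ are strictly totally ordered by a single permutation $\pi_R\in\mathfrak{S}(k)$ (independent of $\mathbf{x}\in R$) satisfying $f_{\pi_R(1)}(\mathbf{x})<\cdots<f_{\pi_R(k)}(\mathbf{x})$ throughout $R$.

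The key step is to show $\mathcal{S}_{f_1,f_2,\cdots,f_k}^n\subseteq\{\pi_R\mid R\in\mathcal{R}\}$, which gives $\abs{\mathcal{S}_{f_1,f_2,\cdots,f_k}^n}\leq\abs{\mathcal{R}}$. Given $s\in\mathcal{S}_{f_1,f_2,\cdots,f_k}^n$, the set $P_s=\{\mathbf{x}\mid f_{s_1}(\mathbf{x})\leq\cdots\leq f_{s_k}(\mathbf{x})\}$ is an intersection of $k-1$ closed halfspaces bounded by hyperplanes of $\mathcal{H}$; I would argue it is full-dimensional, so its interior meets $\mathbb{R}^n\setminus\bigcup\mathcal{H}$, and on that interior all the inequalities are strict, identifying a region $R$ with $\pi_R=s$. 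This is the point where I expect the main difficulty: one must exclude the degenerate case in which the chain defining $s$ is only satisfiable with ties (i.e. $P_s$ is contained in some $H_{ij}$), since then a single lower-dimensional face could carry several distinct orders and the cell-count argument would break; the resolution uses that the $f_i$ are distinct (being the linear components of a CPWL function) together with the density of $\mathbb{R}^n\setminus\bigcup\mathcal{H}$, so that any nonempty $P_s$ is a union of closures of cells of the arrangement and hence inherits one of the $\pi_R$.

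Finally I would invoke the classical bound: an arrangement of $m$ hyperplanes in $\mathbb{R}^n$ has at most $\sum_{i=0}^n\binom{m}{i}$ regions (with equality in general position), proved by induction on $m$ — adding one hyperplane $H$ creates at most as many new regions as the number of pieces into which $H\cong\mathbb{R}^{n-1}$ is cut by its traces, i.e. at most $\sum_{i=0}^{n-1}\binom{m-1}{i}$, matching the Pascal-type recursion $\sum_{i=0}^n\binom{m}{i}=\sum_{i=0}^n\binom{m-1}{i}+\sum_{i=0}^{n-1}\binom{m-1}{i}$. Applying this with $m=\abs{\mathcal{H}}\leq\tfrac{k^2-k}{2}$ and using that $\sum_{i=0}^n\binom{m}{i}$ is nondecreasing in $m$ gives $\abs{\mathcal{S}_{f_1,f_2,\cdots,f_k}^n}\leq\abs{\mathcal{R}}\leq\sum_{i=0}^n\binom{(k^2-k)/2}{i}$; combining with the trivial $k!$ bound yields (\ref{eq:bound_for_num_unique_order_regions}).
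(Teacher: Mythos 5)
Your route is the same as the paper's: form the arrangement of the at most $\binom{k}{2}$ hyperplanes $H_{ij}=\{f_i=f_j\}$, observe that the induced order of $f_1,\dots,f_k$ is constant and strict on each open cell, and bound the number of cells by $\sum_{i=0}^n\binom{m}{i}$ (the paper cites Zaslavsky where you give the standard Buck-type induction — that part of your argument is fine, as is the trivial $k!$ bound).

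The genuine gap is precisely at the step you flagged: the inclusion $\mathcal{S}_{f_1,\dots,f_k}^n\subseteq\{\pi_R\mid R\in\mathcal{R}\}$. Your proposed fix — that any nonempty $P_s$ is a union of closures of cells and hence "inherits" a cell order — is false, and the degenerate case you hoped to exclude actually occurs even for pairwise distinct affine functions. Take $n=1$, $f_1(x)=x$, $f_2(x)=-x$, $f_3(x)=2x$. The order $s=(1,3,2)$ requires $x\le 2x\le -x$, so $P_s=\{0\}$: a single lower-dimensional face, not a union of cell closures, and not realized on any open cell. Indeed at $x=0$ all three functions coincide, so under the "there exists $\mathbf{x}$" reading of the definition all $3!=6$ permutations are feasible, while the arrangement has only $2$ cells and the claimed bound is $\sum_{i=0}^{1}\binom{3}{i}=4$; distinctness of the $f_i$ does not save you (and your remark that collapsing degenerate $H_{ij}$ "only decreases the count" fails for the same reason). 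The argument, and the inequality itself, are only sound if feasibility is counted at points off $\bigcup\mathcal{H}$, i.e.\ if $\mathcal{S}$ is taken to consist of orders realized with strict inequalities on some open set — which is exactly what the application in Lemma \ref{lemma:upper_bound_minimum_number_regions} needs, since there one only uses $\lambda(\mathcal{R})\le\sum_{i=0}^n\binom{\binom{k}{2}}{i}$, a direct consequence of the cell count with no detour through feasible orders. You should either restate the lemma with that strict/open-set reading or bypass $\mathcal{S}$ altogether; note that the paper's own proof elides this same point with the one-line remark that "the ascending order does not change within a connected component."
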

\begin{proof}
Appendix \ref{proof:upper_bound_num_feasible_ascending_orders}.
\end{proof}

\begin{lemma} \label{lemma:existence_intersecting_subsets}
If Definition \ref{def:continuous_piecewise_linear_fn} is satisfied for a non-affine function, then every nonempty subset has a nonempty intersection with the other subset or at least one of the other subsets.
\end{lemma}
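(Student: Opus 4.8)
The plan is to derive the statement from the connectedness of $\mathbb{R}^n$ together with the non-affineness hypothesis, via a short argument by contradiction. Suppose $\{\mathcal{U}_i\}_{i\in[m]}$ is a family of closed subsets witnessing Definition \ref{def:continuous_piecewise_linear_fn} for a non-affine $p\colon\mathbb{R}^n\to\mathbb{R}$, and suppose toward a contradiction that some nonempty piece $\mathcal{U}_{i_0}$ satisfies $\mathcal{U}_{i_0}\cap\mathcal{U}_j=\emptyset$ for every $j\in[m]\setminus\{i_0\}$. The goal is to show this forces $p$ to be affine, contradicting the hypothesis.

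First I would observe that $m\geq 2$: if $m=1$ then $\mathcal{U}_1=\mathbb{R}^n$ and $p$ is affine on all of $\mathbb{R}^n$, which is already a contradiction. Then I would set $\mathcal{V}=\bigcup_{j\in[m]\setminus\{i_0\}}\mathcal{U}_j$ and note three facts: $\mathcal{V}$ is closed, being a finite union of closed sets; $\mathcal{U}_{i_0}\cap\mathcal{V}=\emptyset$ by the assumed pairwise disjointness; and $\mathcal{U}_{i_0}\cup\mathcal{V}=\mathbb{R}^n$ by part (a) of Definition \ref{def:continuous_piecewise_linear_fn}. Hence $\mathcal{U}_{i_0}$ and $\mathcal{V}$ form a separation of $\mathbb{R}^n$ into two disjoint closed sets. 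Since $\mathbb{R}^n$ is connected, one of the two must be empty; as $\mathcal{U}_{i_0}\neq\emptyset$, this gives $\mathcal{V}=\emptyset$, hence $\mathcal{U}_{i_0}=\mathbb{R}^n$, so $p$ is affine everywhere — the desired contradiction. Note that this single argument handles both the case of exactly one ``other subset'' ($m=2$) and of several ``other subsets'' ($m>2$).

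I do not expect a genuine obstacle here; the only point requiring care is to make sure the non-affineness hypothesis is actually invoked in both places where the argument could otherwise stall, namely to exclude the degenerate case $m=1$ and to exclude the possibility that the isolated piece $\mathcal{U}_{i_0}$ already exhausts $\mathbb{R}^n$. Everything else is a one-line topological observation about $\mathbb{R}^n$ being connected and closed sets being stable under finite unions.
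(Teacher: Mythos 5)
Your proof is correct and follows essentially the same route as the paper: a contradiction argument using the connectedness of $\mathbb{R}^n$, the closedness of the finite union $\mathcal{V}=\bigcup_{j\neq i_0}\mathcal{U}_j$, and non-affineness to rule out $\mathcal{U}_{i_0}=\mathbb{R}^n$. The paper phrases the connectedness step via the two open complements rather than the two disjoint closed sets, but this is the same topological fact.
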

\begin{proof}
Appendix \ref{proof:existence_intersecting_subsets}.
\end{proof}

\begin{assumption} \label{assumption:connected_minimum}
The number of closed connected subsets satisfying Definition \ref{def:continuous_piecewise_linear_fn} is a minimum.
\end{assumption}

The interior and frontier (boundary) of a set $\mathcal{X}$ are denoted as $\textnormal{Int}\mathcal{X}$ and $\textnormal{Fr}\mathcal{X}$, respectively.

\begin{lemma} \label{lemma:merge1}
Let $f_i$ denote the affine function associated with $\mathcal{X}_i$ for $i\in[I]$ where $\{\mathcal{X}_i\}_{i\in[I]}$ is a family of closed connected subsets satisfying Assumption \ref{assumption:connected_minimum}. Then, for any $i\in[I],j\in[I]$ such that $i\neq j$ and $\mathcal{X}_i\bigcap\mathcal{X}_j\neq\emptyset$,
\begin{enumerate}[label=(\alph*)]
    \item $f_i$ and $f_j$ are different, and $\{\left.\mathbf{x}\in\mathbb{R}^n\right|f_i(\mathbf{x})=f_j(\mathbf{x})\}\neq\emptyset$. \label{lemma:intersecting_nonempty_intersection}
    \item $\{\left.\mathbf{x}\in\mathbb{R}^n\right|f_i(\mathbf{x})=f_j(\mathbf{x})\}$ is an affine subspace of $\mathbb{R}^n$ with dimension $n-1$. \label{lemma:intersection_affine_subspace}
    \item $\mathcal{X}_i\bigcap\mathcal{X}_j\subseteq\{\left.\mathbf{x}\in\mathbb{R}^n\right|f_i(\mathbf{x})=f_j(\mathbf{x})\}$. \label{lamma:intersection_included_in_affine_subspace}
    \item $\mathbf{x}\not\in\textnormal{Int}\mathcal{X}_i$ and $\mathbf{x}\not\in\textnormal{Int}\mathcal{X}_j$ for all $\mathbf{x}\in\mathcal{X}_i\bigcap\mathcal{X}_j$. \label{lemma:intersection_is_boundary}
\end{enumerate}
\end{lemma}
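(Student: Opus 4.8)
Fix indices $i\neq j$ with $\mathcal{X}_i\cap\mathcal{X}_j\neq\emptyset$ and a point $\mathbf{x}_0\in\mathcal{X}_i\cap\mathcal{X}_j$. The whole lemma runs off the identity $f_i(\mathbf{x})=p(\mathbf{x})=f_j(\mathbf{x})$ for every $\mathbf{x}\in\mathcal{X}_i\cap\mathcal{X}_j$, which holds because $p$ restricts to $f_i$ on $\mathcal{X}_i$ and to $f_j$ on $\mathcal{X}_j$. This is exactly part \ref{lamma:intersection_included_in_affine_subspace}, and it gives the nonemptiness asserted in part \ref{lemma:intersecting_nonempty_intersection} since $\mathbf{x}_0$ witnesses it. For the remaining claim of part \ref{lemma:intersecting_nonempty_intersection}, suppose $f_i=f_j$; then $\mathcal{X}_i\cup\mathcal{X}_j$ is closed, connected (each piece is connected and they meet at $\mathbf{x}_0$), and $p$ is affine on it, so replacing the two pieces by their union produces a strictly smaller family of closed connected sets satisfying Definition \ref{def:continuous_piecewise_linear_fn}, contradicting Assumption \ref{assumption:connected_minimum}. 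For part \ref{lemma:intersection_affine_subspace}, write $f_i(\mathbf{x})=\mathbf{a}_i^\mathsf{T}\mathbf{x}+c_i$ and $f_j(\mathbf{x})=\mathbf{a}_j^\mathsf{T}\mathbf{x}+c_j$; then $\{\mathbf{x}\mid f_i(\mathbf{x})=f_j(\mathbf{x})\}$ is the solution set of $(\mathbf{a}_i-\mathbf{a}_j)^\mathsf{T}\mathbf{x}=c_j-c_i$. If $\mathbf{a}_i=\mathbf{a}_j$ then, since $f_i\neq f_j$, we would have $c_i\neq c_j$ and an empty set, contradicting part \ref{lemma:intersecting_nonempty_intersection}; hence $\mathbf{a}_i-\mathbf{a}_j\neq\mathbf{0}$, and a nonempty solution set of a single nontrivial linear equation is an affine subspace of dimension $n-1$.

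For part \ref{lemma:intersection_is_boundary} there is an easy half and a hard half. Easy half: $\mathbf{x}_0$ cannot lie in $\textnormal{Int}\mathcal{X}_i$ \emph{and} $\textnormal{Int}\mathcal{X}_j$, for otherwise $p$ would equal $f_i$ on a ball around $\mathbf{x}_0$ and $f_j$ on a ball around $\mathbf{x}_0$, hence $f_i=f_j$ on a nonempty open set, hence $f_i=f_j$, contradicting part \ref{lemma:intersecting_nonempty_intersection}. Hard half: rule out $\mathbf{x}_0\in\textnormal{Int}\mathcal{X}_i$ by itself. Assume it, and choose an open ball $B\subseteq\textnormal{Int}\mathcal{X}_i$ containing $\mathbf{x}_0$, so $p\equiv f_i$ on $B$. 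For $\mathbf{y}\in\mathcal{X}_j\cap B$ we get $f_j(\mathbf{y})=p(\mathbf{y})=f_i(\mathbf{y})$, so $\mathcal{X}_j\cap B\subseteq H:=\{\mathbf{x}\mid f_i(\mathbf{x})=f_j(\mathbf{x})\}$, an $(n-1)$-dimensional affine subspace by part \ref{lemma:intersection_affine_subspace}. Hence $\textnormal{Int}\mathcal{X}_j\cap B=\emptyset$ (no ball fits inside $H$), and since $B$ is open, $\overline{\textnormal{Int}\mathcal{X}_j}\cap B=\emptyset$; in particular $\mathbf{x}_0\in\mathcal{X}_j$ but $\mathbf{x}_0\notin\overline{\textnormal{Int}\mathcal{X}_j}$. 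I would then finish by contradicting minimality: first, $\textnormal{Int}\mathcal{X}_j\neq\emptyset$ (else $\mathcal{X}_j\subseteq\textnormal{Fr}\mathcal{X}_j\subseteq\overline{\mathbb{R}^n\setminus\mathcal{X}_j}\subseteq\bigcup_{k\neq j}\mathcal{X}_k$ and $\mathcal{X}_j$ is droppable); second, the ``phantom'' set $\mathcal{X}_j\cap B\subseteq H\subseteq\{p=f_i\}$ lies in $B\subseteq\mathcal{X}_i$, so it is already covered by $\mathcal{X}_i$, and one trims it off $\mathcal{X}_j$ and redistributes, replacing $\mathcal{X}_j$ by the closures of the connected components of $\mathcal{X}_j\setminus B$ while keeping coverage intact (the removed set sits in $\mathcal{X}_i$, and $\textnormal{Fr}\mathcal{X}_j\subseteq\bigcup_{k\neq j}\mathcal{X}_k$). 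The conclusion follows once one shows this surgery does not increase the number of closed connected pieces; equivalently, it suffices to establish that every piece of a minimal family is regular closed ($\mathcal{X}_k=\overline{\textnormal{Int}\mathcal{X}_k}$), which directly contradicts $\mathbf{x}_0\in\mathcal{X}_j\setminus\overline{\textnormal{Int}\mathcal{X}_j}$.

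The routine parts are \ref{lemma:intersecting_nonempty_intersection}--\ref{lamma:intersection_included_in_affine_subspace} and the easy half of \ref{lemma:intersection_is_boundary}; the main obstacle is the last step, i.e. showing a minimal connected family cannot contain a piece with a lower-dimensional ``tentacle'' running through the interior of another piece along the hyperplane where their affine functions agree. The difficulty is purely topological bookkeeping: trimming such a tentacle has to keep the piece closed and connected and must not raise the total count, which is delicate exactly when the tentacle bridges two full-dimensional components of $\mathcal{X}_j$ and so threatens to split it. What rescues the argument is that the tentacle is surrounded on every side by $\{p=f_i\}$ — it is a phantom, not a genuine crease of $p$ — hence redundant for coverage and ultimately absorbable into $\mathcal{X}_i$; Lemma \ref{lemma:existence_intersecting_subsets} together with the nonempty-interior fact above provides the structure needed to carry out the redistribution cleanly.
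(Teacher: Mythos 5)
Your parts \ref{lemma:intersecting_nonempty_intersection}--\ref{lamma:intersection_included_in_affine_subspace} and the ``easy half'' of \ref{lemma:intersection_is_boundary} are correct and essentially coincide with the paper's arguments (nonemptiness of $\{f_i=f_j\}$ from well-definedness of $p$ on the overlap, distinctness of $f_i,f_j$ by merging two pieces into one, and the single-nontrivial-linear-equation description of the agreement set). The gap you flag in the ``hard half'' of \ref{lemma:intersection_is_boundary} is genuine, and the reduction you propose cannot be carried out: the auxiliary claim that every piece of a minimal connected family is regular closed is false. Take $n=2$ and $p(x,y)=\max(0,\min(x,y))$. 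The three sets $\mathcal{X}_1=\{p=0\}=\mathbb{R}^2\setminus\{x>0,y>0\}$, $\mathcal{X}_2=\{p=x\}=\{x=0\}\cup\{0\le x\le y\}$, and $\mathcal{X}_3=\{p=y\}$ are closed, connected, cover $\mathbb{R}^2$, and $p$ is affine on each; a short covering argument (each of the three open regions where $p$ has a distinct gradient must meet the interior of some piece) shows no family of fewer than three such sets exists, so Assumption \ref{assumption:connected_minimum} holds. Yet $\mathcal{X}_2$ is not regular closed --- the ray $\{x=0,\,y<0\}$ is exactly the kind of ``tentacle'' you describe --- and the point $(0,-5)\in\mathcal{X}_1\cap\mathcal{X}_2$ lies in $\textnormal{Int}\,\mathcal{X}_1$. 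So not only does your proposed finishing step fail; part \ref{lemma:intersection_is_boundary} itself fails for this minimal family. Minimality of the \emph{count} does not forbid a piece from carrying a lower-dimensional appendage along $\{f_i=f_j\}$ through the interior of another piece, so no amount of trimming-and-recounting will derive a contradiction; one would instead have to strengthen the hypothesis (e.g., require each piece to be the closure of its interior, or convex) or trim the tentacle by passing to a different minimal family before invoking the claim.

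For comparison, the paper's own proof of \ref{lemma:intersection_is_boundary} takes a much shorter route than your surgery: from an interior point $\mathbf{c}$ of $\mathcal{X}_i$ lying in $\mathcal{X}_i\cap\mathcal{X}_j$ it perturbs by $\pm\alpha\mathbf{d}$ with $\mathbf{d}$ orthogonal to the hyperplane $\{f_i=f_j\}$, notes the perturbed points leave $\mathcal{X}_i\cap\mathcal{X}_j$, and then asserts that one of them must leave $\mathcal{X}_i$. That last assertion is not justified --- both perturbed points lie in the ball $B(\mathbf{c},\epsilon)\subseteq\mathcal{X}_i$ by construction, and in the example above neither leaves $\mathcal{X}_1$ --- so the paper does not close the case you identify as hard either. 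Your diagnosis of where the difficulty sits is accurate, but the proposal is incomplete at precisely that point, and the missing step cannot be supplied in the form you describe.
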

\begin{proof}
    Appendix \ref{proof:intersecting_nonempty_intersection}, \ref{proof:intersection_affine_subspace}, \ref{proof:intersection_included_in_affine_subspace}, and \ref{proof:intersection_is_boundary}.
\end{proof}
\begin{lemma} \label{lemma:merge2}
If a family of closed connected subsets $\{\mathcal{X}_i\}_{i\in[I]}$ satisfies Assumption \ref{assumption:connected_minimum}, then, for all $i\in[I]$,
\begin{enumerate}[label=(\alph*)]
    \item $\textnormal{Int}\mathcal{X}_i\neq\emptyset$. \label{lemma:nonempty_interior}
    \item $\text{Fr}\mathcal{X}_i=\bigcup_{k\in[I]\setminus i}\mathcal{X}_k\bigcap\mathcal{X}_i$. \label{lemma:boundary_union_other_subsets}
    \item $\textnormal{Int}\mathcal{X}_i\bigcap\textnormal{Int}\mathcal{X}_j=\emptyset$ for all $j\in[I]$ such that $j\neq i$. \label{lemma:nonempty_and_disjoint_interiors}
\end{enumerate}
\end{lemma}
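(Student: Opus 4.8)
The plan is to derive all three parts from Lemma~\ref{lemma:merge1}\ref{lemma:intersection_is_boundary} together with elementary point-set topology and the minimality asserted in Assumption~\ref{assumption:connected_minimum}, handling the trivial case $I=1$ (i.e.\ $p$ affine, $\mathcal{X}_1=\mathbb{R}^n$) separately wherever it matters. I would prove the parts in the order \ref{lemma:nonempty_interior}, \ref{lemma:nonempty_and_disjoint_interiors}, \ref{lemma:boundary_union_other_subsets}.

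For part~\ref{lemma:nonempty_interior} I would argue by contradiction: assume $\textnormal{Int}\mathcal{X}_i=\emptyset$. The set $\mathbb{R}^n\setminus\bigcup_{k\in[I]\setminus i}\mathcal{X}_k$ is open, since a finite union of closed sets is closed, and it is contained in $\mathcal{X}_i$ because $\bigcup_{k\in[I]}\mathcal{X}_k=\mathbb{R}^n$; an open subset of $\mathcal{X}_i$ lies in $\textnormal{Int}\mathcal{X}_i=\emptyset$, so $\bigcup_{k\in[I]\setminus i}\mathcal{X}_k=\mathbb{R}^n$. If $I=1$ this is an empty union equal to $\mathbb{R}^n$, which is absurd; if $I\geq 2$, then $\{\mathcal{X}_k\}_{k\in[I]\setminus i}$ is a family of $I-1$ closed connected subsets whose union is $\mathbb{R}^n$ and on each of which $p$ is affine, hence it satisfies Definition~\ref{def:continuous_piecewise_linear_fn}, contradicting the minimality in Assumption~\ref{assumption:connected_minimum}. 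Therefore $\textnormal{Int}\mathcal{X}_i\neq\emptyset$.

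Part~\ref{lemma:nonempty_and_disjoint_interiors} is then immediate: if some $\mathbf{x}$ lay in $\textnormal{Int}\mathcal{X}_i\cap\textnormal{Int}\mathcal{X}_j$ with $j\neq i$, then $\mathbf{x}\in\mathcal{X}_i\cap\mathcal{X}_j\neq\emptyset$, so Lemma~\ref{lemma:merge1} applies to the pair $(i,j)$ and its part~\ref{lemma:intersection_is_boundary} forces $\mathbf{x}\notin\textnormal{Int}\mathcal{X}_i$, a contradiction. For part~\ref{lemma:boundary_union_other_subsets} I would use $\textnormal{Fr}\mathcal{X}_i=\mathcal{X}_i\setminus\textnormal{Int}\mathcal{X}_i$ (valid since $\mathcal{X}_i$ is closed) and prove both inclusions. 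For ``$\supseteq$'': any $\mathbf{x}\in\mathcal{X}_k\cap\mathcal{X}_i$ with $k\neq i$ lies in $\mathcal{X}_i$ and, by Lemma~\ref{lemma:merge1}\ref{lemma:intersection_is_boundary}, not in $\textnormal{Int}\mathcal{X}_i$, hence $\mathbf{x}\in\textnormal{Fr}\mathcal{X}_i$. For ``$\subseteq$'': if $\mathbf{x}\in\textnormal{Fr}\mathcal{X}_i$ then $\mathbf{x}\in\mathcal{X}_i$ and every neighborhood of $\mathbf{x}$ meets $\mathbb{R}^n\setminus\mathcal{X}_i\subseteq\bigcup_{k\in[I]\setminus i}\mathcal{X}_k$; since this last set is closed, $\mathbf{x}$ belongs to it, so $\mathbf{x}\in\mathcal{X}_k\cap\mathcal{X}_i$ for some $k\neq i$ (and when $I=1$ both sides of the claimed identity are empty).

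I expect the only genuine obstacle to be part~\ref{lemma:nonempty_interior}: one must make sure that deleting $\mathcal{X}_i$ leaves a family that still satisfies \emph{every} clause of Definition~\ref{def:continuous_piecewise_linear_fn}, in particular that it still covers $\mathbb{R}^n$, so that the contradiction with Assumption~\ref{assumption:connected_minimum} is airtight. Once part~\ref{lemma:nonempty_interior} is established, parts~\ref{lemma:nonempty_and_disjoint_interiors} and~\ref{lemma:boundary_union_other_subsets} are short topological consequences of Lemma~\ref{lemma:merge1}\ref{lemma:intersection_is_boundary}.
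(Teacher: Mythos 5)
Your proof is correct, and for parts \ref{lemma:nonempty_and_disjoint_interiors} and \ref{lemma:boundary_union_other_subsets} it uses the same key ingredient as the paper, namely Lemma \ref{lemma:merge1}\ref{lemma:intersection_is_boundary}. Where you genuinely diverge is in part \ref{lemma:nonempty_interior}. The paper first establishes the frontier identity of part \ref{lemma:boundary_union_other_subsets} by a chain of closure manipulations ($\text{Fr}\mathcal{X}_i=\overline{\mathcal{X}_i}\cap\overline{\mathbb{R}^n\setminus\mathcal{X}_i}=\cdots=\bigcup_{k\in[I]\setminus i}\mathcal{X}_k\cap\mathcal{X}_i$, one step of which itself invokes Lemma \ref{lemma:merge1}\ref{lemma:intersection_is_boundary}), and only then argues that an empty interior would force $\mathcal{X}_i=\overline{\mathcal{X}_i}=\text{Fr}\mathcal{X}_i$, so $\mathcal{X}_i$ is covered by the other pieces and can be deleted, contradicting Assumption \ref{assumption:connected_minimum}. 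You reach the same contradiction by a more elementary and self-contained route: the open set $\mathbb{R}^n\setminus\bigcup_{k\in[I]\setminus i}\mathcal{X}_k$ sits inside $\mathcal{X}_i$, hence inside $\textnormal{Int}\mathcal{X}_i$, so emptiness of the interior immediately yields $\bigcup_{k\in[I]\setminus i}\mathcal{X}_k=\mathbb{R}^n$. This avoids the closure-chain computation entirely and makes part \ref{lemma:nonempty_interior} independent of Lemma \ref{lemma:merge1}, which is arguably cleaner; the cost is that you must then prove part \ref{lemma:boundary_union_other_subsets} separately, which you do correctly by two inclusions (using Lemma \ref{lemma:merge1}\ref{lemma:intersection_is_boundary} for $\supseteq$ and closedness of the finite union for $\subseteq$), whereas the paper gets it for free from the computation already performed in part \ref{lemma:nonempty_interior}. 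Your handling of the $I=1$ edge case is also appropriate. No gaps.
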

\begin{proof}
Appendix \ref{proof:nonempty_interior}, \ref{proof:boundary_union_other_subsets}, and \ref{proof:nonempty_and_disjoint_interiors}.
\end{proof}

\begin{lemma} \label{lemma:subsets_cannot_be_included_union_affine_subspaces}
Let $\{\mathcal{X}_i\}_{i\in[m]}$ be any finite family of subsets satisfying Assumption \ref{assumption:connected_minimum}. Let $\{\mathcal{H}_{j}\}_{j\in[k]}$ be any finite family of affine subspaces of $\mathbb{R}^n$ with dimension $n-1$. Then, for every $i\in[m]$,
\begin{equation} \label{eq:subset_not_included_union_finite}
\mathcal{X}_i\bigcap \left(\mathbb{R}^n\setminus \bigcup_{j\in[k]} \mathcal{H}_{j}\right)\neq\emptyset.
\end{equation}
\end{lemma}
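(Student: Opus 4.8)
The plan is to reduce the statement to the elementary fact that a nonempty open subset of $\mathbb{R}^n$ cannot be contained in a finite union of $(n-1)$-dimensional affine subspaces. First I would invoke Lemma~\ref{lemma:merge2}\ref{lemma:nonempty_interior}, which guarantees $\textnormal{Int}\mathcal{X}_i\neq\emptyset$ for every $i\in[m]$; pick any open Euclidean ball $B$ with $B\subseteq\textnormal{Int}\mathcal{X}_i\subseteq\mathcal{X}_i$. It then suffices to exhibit a point $\mathbf{x}\in B$ with $\mathbf{x}\notin\bigcup_{j\in[k]}\mathcal{H}_j$, because such an $\mathbf{x}$ lies in $\mathcal{X}_i\cap\bigl(\mathbb{R}^n\setminus\bigcup_{j\in[k]}\mathcal{H}_j\bigr)$, which is exactly what (\ref{eq:subset_not_included_union_finite}) asserts.

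The remaining task is to show $B\not\subseteq\bigcup_{j\in[k]}\mathcal{H}_j$. One clean route uses Baire category: each $\mathcal{H}_j$ is a proper affine subspace, hence a closed set with empty interior, i.e., nowhere dense; a finite union of nowhere dense sets is nowhere dense, so its complement is dense in $\mathbb{R}^n$ and in particular meets the nonempty open set $B$. An equally short measure-theoretic alternative: each $\mathcal{H}_j$ has $n$-dimensional Lebesgue measure zero, so $\bigcup_{j\in[k]}\mathcal{H}_j$ has measure zero while $B$ has positive measure, whence $B\setminus\bigcup_{j\in[k]}\mathcal{H}_j\neq\emptyset$. Either argument produces the desired point of $\mathcal{X}_i$ that avoids all the $\mathcal{H}_j$.

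There is essentially no hard step here; the content of the lemma is entirely carried by the input that the pieces are full-dimensional. The only point worth emphasizing is that this full-dimensionality is precisely where Assumption~\ref{assumption:connected_minimum} enters, via Lemma~\ref{lemma:merge2}\ref{lemma:nonempty_interior} — without minimality a piece could be lower-dimensional and sit inside a single hyperplane, making the statement false. So the "obstacle", such as it is, amounts to confirming that the nonempty-interior hypothesis is available, which it is.
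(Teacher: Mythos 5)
Your proposal is correct and takes essentially the same route as the paper: both reduce the claim, via Lemma~\ref{lemma:merge2}\ref{lemma:nonempty_interior}, to placing an open ball inside $\mathcal{X}_i$ and showing that a nonempty open ball cannot be covered by finitely many $(n-1)$-dimensional affine subspaces. The only difference is in that last elementary step, where the paper uses an explicit sequential construction (halving the radius and recentering the ball to dodge each $\mathcal{H}_j$ in turn) while you invoke Baire category or Lebesgue measure; all three arguments are valid, and your observation that Assumption~\ref{assumption:connected_minimum} enters precisely through the nonempty-interior guarantee matches the paper's logic.
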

\begin{proof}
Appendix \ref{proof:subsets_cannot_be_included_union_affine_subspaces}.
\end{proof}

\begin{proposition} \label{proposition:largest_closed_connected_minimum}
For any family of closed connected subsets satisfying Definition \ref{def:continuous_piecewise_linear_fn}, all subsets are the largest closed connected subsets if and only if Assumption \ref{assumption:connected_minimum} is satisfied.
\end{proposition}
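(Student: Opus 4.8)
The plan is to first identify ``the largest closed connected subsets'' and then to show that this canonical collection is precisely a family of minimum cardinality. If $p$ is affine the statement is immediate---$\mathbb{R}^n$ is the unique largest closed connected subset on which $p$ is affine, and $\{\mathbb{R}^n\}$ is the only family meeting Assumption~\ref{assumption:connected_minimum}---so assume $p$ is not affine. For a family $\{\mathcal{X}_i\}_{i\in[I]}$ of closed connected subsets satisfying Definition~\ref{def:continuous_piecewise_linear_fn}, let $f_{\mathcal{X}_i}$ denote the affine function with $f_{\mathcal{X}_i}=p$ on $\mathcal{X}_i$. Fixing a finite polyhedral subdivision of $\mathbb{R}^n$ on which $p$ is piecewise affine, I would first prove the characterization: \emph{a closed connected set $\mathcal{X}$ on which $p$ is affine is a largest such set (maximal for inclusion) if and only if $\mathcal{X}$ is a full-dimensional connected component of $\{\mathbf{x}\in\mathbb{R}^n:p(\mathbf{x})=f_{\mathcal{X}}(\mathbf{x})\}$, and every closed connected set on which $p$ is affine is contained in some such maximal set.} A set with nonempty interior determines its affine function uniquely, hence is contained in---and, if maximal, equals---the connected component of $\{p=f_{\mathcal{X}}\}$ through it, which is full-dimensional; a set with empty interior is never maximal, because at a relative-interior point an adjacent full-dimensional cell carries an affine function $g$ that agrees with $f_{\mathcal{X}}$ on that cell, and the component of $\{p=g\}$ through it strictly contains $\mathcal{X}$. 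Let $\mathcal{M}$ be the resulting finite collection of maximal sets; it covers $\mathbb{R}^n$, and by Lemma~\ref{lemma:merge1}\ref{lamma:intersection_included_in_affine_subspace} two distinct members have disjoint nonempty interiors (a common interior point would make their affine functions agree on an open set, hence everywhere).

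\textbf{Sufficiency.} Suppose every member of $\mathcal{F}=\{\mathcal{X}_i\}_{i\in[I]}$ is a largest set. Then $\mathcal{F}\subseteq\mathcal{M}$, and in fact $\mathcal{F}=\mathcal{M}$, since a maximal set outside $\mathcal{F}$ would have its nonempty interior covered by the $\mathcal{X}_i$, again forcing affine functions to coincide on an open set. Let $\{\mathcal{Y}_j\}_{j\in[J]}$ be an arbitrary family meeting Definition~\ref{def:continuous_piecewise_linear_fn}; by the containment claim each $\mathcal{Y}_j$ lies in some $\mathcal{X}_{\pi(j)}\in\mathcal{F}$, so $\bigcup_{j}\mathcal{X}_{\pi(j)}=\mathbb{R}^n$. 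If $\pi$ missed an index $i_0$, then $\mathcal{X}_{i_0}$ would be covered by the remaining members of $\mathcal{F}$, forcing $\textnormal{Int}\,\mathcal{X}_{i_0}\cap\textnormal{Int}\,\mathcal{X}_i\neq\emptyset$ for some $i\neq i_0$, which is impossible. Hence $J\geq I$ and $\mathcal{F}$ has minimum cardinality, i.e.\ Assumption~\ref{assumption:connected_minimum} holds.

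\textbf{Necessity.} Suppose $\mathcal{F}=\{\mathcal{X}_i\}_{i\in[I]}$ satisfies Assumption~\ref{assumption:connected_minimum}. By Lemma~\ref{lemma:merge2}\ref{lemma:nonempty_interior} each $\mathcal{X}_i$ has nonempty interior, so by the characterization $\mathcal{X}_i\subseteq\mathcal{C}_i$, the full-dimensional connected component of $\{p=f_{\mathcal{X}_i}\}$ through $\mathcal{X}_i$; I must show $\mathcal{X}_i=\mathcal{C}_i$. If some $\mathcal{X}_1\subsetneq\mathcal{C}_1$, then $\mathcal{F}'=\{\mathcal{C}_1\}\cup\{\mathcal{X}_i\}_{i\neq 1}$ is again a family of closed connected subsets meeting Definition~\ref{def:continuous_piecewise_linear_fn} with $I$ members, hence it too satisfies Assumption~\ref{assumption:connected_minimum}; applying Lemma~\ref{lemma:merge2}\ref{lemma:boundary_union_other_subsets} to $\mathcal{F}'$ and using $\bigcup_i\mathcal{X}_i=\mathbb{R}^n$ yields $\textnormal{Int}\,\mathcal{C}_1=\mathcal{C}_1\setminus\bigcup_{i\neq 1}\mathcal{X}_i\subseteq\mathcal{X}_1$, so $\mathcal{C}_1=\overline{\textnormal{Int}\,\mathcal{C}_1}\subseteq\mathcal{X}_1$ and therefore $\mathcal{C}_1=\mathcal{X}_1$, a contradiction. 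Thus every member of $\mathcal{F}$ is a largest closed connected subset.

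\textbf{Main obstacle.} The topology and the two counting arguments are routine; the substantive step is the characterization, and inside it the claim that a maximal piece $\mathcal{C}$---a full-dimensional connected component of a level set $\{p=f\}$---satisfies $\mathcal{C}=\overline{\textnormal{Int}\,\mathcal{C}}$, i.e.\ has no lower-dimensional appendage. This is exactly what closes the necessity argument. I expect it to follow from a local analysis at a relative-interior point of a hypothetical appendage---there the two adjacent full-dimensional affine pieces would agree with $f$ only along a hyperplane---combined with tracing the appendage to the full-dimensional part of $\{p=f\}$ to which connectedness attaches it, using Lemma~\ref{lemma:existence_intersecting_subsets}, Lemma~\ref{lemma:subsets_cannot_be_included_union_affine_subspaces}, and the piecewise-affine structure of $p$.
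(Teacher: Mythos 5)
Your architecture is genuinely different from the paper's: the paper disposes of both directions in a few lines by contraposition, arguing that a piece ``can be made larger'' exactly when it intersects another piece carrying the same affine function, in which case the two can be merged; you instead build the canonical collection $\mathcal{M}$ of inclusion-maximal closed connected sets on which $p$ is affine and compare cardinalities. Your sufficiency half is essentially sound and considerably more careful than the paper's. The problem is precisely the step you flag as the ``main obstacle'': the claim that a maximal piece $\mathcal{C}$ satisfies $\mathcal{C}=\overline{\textnormal{Int}\,\mathcal{C}}$. This is not merely unproven --- it is false. Take $p(x,y)=\min\bigl(\max(x,0),\,\abs{y}\bigr)$ on $\mathbb{R}^2$. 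Then $\{p=0\}=\{x\leq 0\}\bigcup\{y=0\}$ is closed, connected, full-dimensional, and inclusion-maximal among closed connected sets on which $p$ is affine, yet $\overline{\textnormal{Int}\,\{p=0\}}=\{x\leq 0\}\neq\{p=0\}$: the positive $x$-axis is exactly the lower-dimensional appendage you hoped to exclude. With this, your necessity argument collapses at its final step, since from $\textnormal{Int}\,\mathcal{C}_1\subseteq\mathcal{X}_1$ you can no longer conclude $\mathcal{C}_1\subseteq\mathcal{X}_1$.

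The same example shows the difficulty is not a missing technical lemma but a mismatch between your formalization of ``largest'' and the one the paper's merging argument needs. The four pieces $\{x\leq 0\}$, $\{0\leq x\leq\abs{y}\}$, $\{0\leq y\leq x\}$, $\{0\leq -y\leq x\}$ are closed, connected, cover $\mathbb{R}^2$, and carry the four distinct linear components $0,x,y,-y$, so by Lemma \ref{lemma:upper_bound_minimum_number_regions} this family is minimal and Assumption \ref{assumption:connected_minimum} holds; nevertheless $\{x\leq 0\}$ is properly contained in the closed connected set $\{p=0\}$ on which $p$ is affine, so it is not inclusion-maximal. Thus under your reading the ``only if'' direction of Proposition \ref{proposition:largest_closed_connected_minimum} itself fails. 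The paper's proof implicitly reads ``can be made larger'' as ``can absorb an intersecting neighbor with the same affine function'' (no two of the four pieces above share an affine function, so none is enlargeable in that sense), and its argument does not transfer to your stronger notion. To salvage your route you would have to either adopt that weaker, merging-based notion of ``largest'' throughout, or restrict attention to components satisfying $\mathcal{C}=\overline{\textnormal{Int}\,\mathcal{C}}$ --- and in the latter case the statement you prove is no longer the one asserted.
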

\begin{proof}
Appendix \ref{proof:largest_closed_connected_minimum}.
\end{proof}
\section{Proofs} \label{proof}
\subsection{Proof of Lemma \ref{lemma:upper_bound_minimum_number_regions}} \label{proof:upper_bound_minimum_number_regions}
\begin{proof}
Let the family of closed connected subsets $\bar{\mathcal{Q}}=\{\mathcal{X}_i\}_{i\in[I]}$ satisfy Assumption \ref{assumption:connected_minimum} for any $p\in\mathcal{P}_{n,k}$.
Let the $k$ distinct linear components of $p$ be $f_1,f_2,\cdots,f_k$ and $\mathcal{H}_{lm}$ be the intersection between $f_l$ and $f_m$ for $l\in[k],m\in[k],l\neq m$. Note that every $\mathcal{H}_{lm}$ is an affine subspace of $\mathbb{R}^n$ with dimension $n-1$ (a hyperplane) or an empty set.
Because the linear components are distinct, it must be true that $k\leq I$ by Definition \ref{def:linear_components}. If $p$ is an affine function, then it follows that $k=\min_{\bar{\mathcal{Q}}\in\mathcal{C}_{n,k}(p)} \abs{\bar{\mathcal{Q}}}=\phi(n,k)=1$, the claim holds. For the non-affine case, we must have $k>1$.

Let $\mathcal{R}=\mathbb{R}^n\setminus\mathcal{H}$ where
\begin{equation}
    \mathcal{H}=\bigcup_{k\in[m],l\in[m],k\neq l}\mathcal{H}_{kl}.
\end{equation}
Note that $\mathcal{H}\neq\emptyset$ according to Lemma \ref{lemma:existence_intersecting_subsets} and \ref{lemma:merge1}\ref{lamma:intersection_included_in_affine_subspace}. By Lemma \ref{lemma:merge2}\ref{lemma:boundary_union_other_subsets}, the boundary or frontier of $\mathcal{X}_i$ for $i\in[I]$ is given by
\begin{equation}
    \text{Fr} \mathcal{X}_i=\bigcup_{j\in[I]\setminus i}\left(\mathcal{X}_i\bigcap\mathcal{X}_j\right).
\end{equation}
Because every $\mathcal{X}_i\bigcap\mathcal{X}_j$ for $i\in[I],j\in[I],i\neq j$ is a subset of some $\mathcal{H}_{lm}$ for $l\in[k],m\in[k],l\neq m$ by Lemma \ref{lemma:merge1}\ref{lamma:intersection_included_in_affine_subspace}, it follows that the boundary of $\mathcal{X}_i$, $\text{Fr} \mathcal{X}_i$, satisfies
\begin{equation} \label{eq:boundary_condition}
    \text{Fr} \mathcal{X}_i\subseteq\mathcal{H}
\end{equation}
for $i\in[I]$. The interior of $\mathcal{X}_i$, $\text{Int} \mathcal{X}_i$, is a nonempty subset of $\mathbb{R}^n$ according to Lemma \ref{lemma:merge2}\ref{lemma:nonempty_interior}. Furthermore, by Lemma \ref{lemma:subsets_cannot_be_included_union_affine_subspaces},
\begin{equation} \label{eq:interior_condition}
    \mathcal{X}_i\bigcap\mathcal{R}\neq\emptyset.
\end{equation}
Now, define
\begin{equation}
    \mathcal{Z}_i=\left(\text{Int} \mathcal{X}_i\right)\bigcap\mathcal{R}
\end{equation}
for $i\in[I]$. Note that $\mathcal{Z}_i=\mathcal{X}_i\bigcap\mathcal{R}\neq\emptyset$ due to (\ref{eq:boundary_condition}) and (\ref{eq:interior_condition}).
Let $\mathcal{A}$ be any subset of $\mathbb{R}^n$ and $\lambda(\mathcal{A})$ be the number of connected components of $\mathcal{A}$ in $\mathbb{R}^n$.
It must be true that
\begin{equation}
    1=\lambda(\mathcal{X}_i)\leq\lambda\left(\text{Int} \mathcal{X}_i\right)\leq\lambda(\mathcal{Z}_i).
\end{equation}
By Lemma \ref{lemma:merge2}\ref{lemma:nonempty_and_disjoint_interiors}, $\text{Int} \mathcal{X}_i\bigcap\text{Int} \mathcal{X}_j=\emptyset$ for $i\in[I],j\in[I],i\neq j$. We have
\begin{equation} \label{eq:bounding_I_Z}
    I\leq\lambda\left(\bigcup_{i\in[I]}\text{Int}\mathcal{X}_i\right)=\sum_{i\in[I]}\lambda(\text{Int}\mathcal{X}_i)\leq\sum_{i\in[I]}\lambda(\mathcal{Z}_i)=\lambda\left(\bigcup_{i\in[I]}\mathcal{Z}_i\right)=\lambda\left(\bigcup_{i\in[I]}\mathcal{X}_i\bigcap\mathcal{R}\right).
\end{equation}
Notice that
\begin{equation} \label{eq:X_i_P}
    \bigcup_{i\in[I]}\mathcal{X}_i\bigcap\mathcal{R}=\mathbb{R}^n\bigcap\mathcal{R}=\mathcal{R}
\end{equation}
by the property $\bigcup_{i\in[I]}\mathcal{X}_i=\mathbb{R}^n$ in Definition \ref{def:continuous_piecewise_linear_fn}. Plugging (\ref{eq:X_i_P}) into (\ref{eq:bounding_I_Z}) leads to
\begin{equation}
    I\leq\lambda(\mathcal{R})
\end{equation}
which states that $I$ is bounded from above by the number of connected components of $\mathcal{R}$ in $\mathbb{R}^n$. Notice that every component is an open convex set because every component is the intersection of a finite number of open half spaces.
Therefore,
\begin{equation}
    I=\abs{\bar{\mathcal{Q}}}=\min_{\mathcal{Q}'\in\mathcal{C}_{n,k}'(p)} \abs{\mathcal{Q}'}\leq\min_{\mathcal{Q}\in\mathcal{C}_{n,k}(p)} \abs{\mathcal{Q}}\leq\lambda(\mathcal{R})
\end{equation}
where $\mathcal{C}_{n,k}'(p)$ denotes the collection of all families of closed connected subsets satisfying Definition \ref{def:continuous_piecewise_linear_fn} for any $p\in\mathcal{P}_{n,k}$.
Because the ascending order of these $k$ linear components does not change within a connected component of $\mathcal{R}$, $\lambda(\mathcal{R})$ can be bounded from above by the number of feasible ascending orders. Let $\mathfrak{S}(k)$ be the collection of all permutations of the set $[k]$. It follows that
\begin{equation} \label{eq:lambda_bound_feasible_ascending_orders}
    \lambda(\mathcal{R})\leq\abs{\left\{(s_{1},s_{2},\cdots,s_{k})\in\mathfrak{S}(k)\left|\right.f_{s_1}(\mathbf{x})\leq f_{s_2}(\mathbf{x})\leq \cdots\leq  f_{s_{k}}(\mathbf{x}), \mathbf{x}\in\mathbb{R}^n\right\}}.
\end{equation}
Finally, Lemma \ref{lemma:upper_bound_num_feasible_ascending_orders} proves the statement by bounding the number of feasible ascending orders.
\end{proof}
\subsection{Proof of Lemma \ref{lemma:ReLU_network_represent_maximum}} \label{proof:ReLU_network_represent_maximum}
\begin{proof}
It suffices to show that
\begin{equation}
    g(\mathbf{x})=\max_{i\in[k]} x_i.
\end{equation}
for all $\mathbf{x}=\begin{bmatrix}x_1&x_2&\cdots&x_k\end{bmatrix}^\mathsf{T}\in\mathbb{R}^k$ since the composition of affine functions is still affine. The affine functions can be absorbed into the first layer of the ReLU network $g$.
We prove the case for taking the maximum of $m$ real numbers since the same procedure below can be applied to prove the case of taking the minimum due to the following identity
\begin{equation}
    \min_{i\in[k]}f_i(\mathbf{x})=-\max_{i\in[k]}-f_i(\mathbf{x}).
\end{equation}
Because $\max(x_1,x_2)=\max(0,x_2-x_1)+\max(0,x_1)-\max(0,-x_1)$ for any $x_1\in\mathbb{R}$ and $x_2\in\mathbb{R}$, it holds true that
\begin{equation} \label{eq:maximum_over_K_real_numbers}
        \max_{j\in[k]}x_j=\begin{cases}\max_{j\in\left[\frac{k}{2}\right]}\max_{i\in\{2j-1,2j\}}x_i, &\text{if $k$ is even}\\\max_{j\in\left[\frac{k+1}{2}\right]}\alpha(j;x_1,x_2,\cdots,x_k), &\text{if $k$ is odd}\end{cases}
\end{equation}
for $x_j\in\mathbb{R},j\in[k]$ where
\begin{equation}
    \alpha(j;x_1,x_2,\cdots,x_k)=\begin{cases}\max_{i\in\{2j-1,2j\}}x_i, &\text{if }  j\in\left[\frac{k-1}{2}\right]\\\max(0,x_k)-\max(0,-x_k), &\text{if }j=\frac{k+1}{2}\end{cases}.
\end{equation}
Let $r(k)$ be the number of operations of taking the maximum between a zero and a real number, i.e., $\max(0,x),x\in\mathbb{R}$ for computing the maximum of $k$ real numbers using (\ref{eq:maximum_over_K_real_numbers}). One can find $r(2)=3$ and $r(3)=8$ by expanding all operations in (\ref{eq:maximum_over_K_real_numbers}). Because we do not need any maximum operations to compute the maximum over a singleton, we define $r(1)=0$. For any positive integer $k$ such that $k\geq 2$, we have the recursion
\begin{equation} \label{eq:recursion}
    r(k)=\begin{cases}\frac{3k}{2}+r\left(\frac{k}{2}\right), &\text{if $k$ is even}\\2+\frac{3(k-1)}{2}+r\left(\frac{k+1}{2}\right), &\text{if $k$ is odd}\end{cases}
\end{equation}
according to (\ref{eq:maximum_over_K_real_numbers}). Note that $r(n)$ is the number of ReLUs in a ReLU network $g$ that computes the maximum of $n$ real numbers or a \textit{max-affine} function. The number of ReLUs here is equivalent to the number of hidden neurons according to Definition \ref{def:plain_ReLU_network}.
We shall note that the number of ReLU layers is equivalent to the number of hidden layers.

Obviously, we only need a $1$-layer ReLU network with no ReLUs to compute the maximum of a singleton.
Suppose that we aim to compute the maximum of $m=2^n$ real numbers for any positive integer $n$. Then, every time the recursion goes to the next level in (\ref{eq:recursion}), the number of variables considered for computing the maximum is halved. Hence, the number of ReLU layers is $n$. When $m$ is not a power of two, i.e., $2^{n}<m<2^{n+1}$, then we can always construct a ReLU network with $n+1$ ReLU layers and $2^{n+1}$ input neurons, and set weights connected to the $2^{n+1}-m$ ``phantom input neurons'' to zeros. Because $\lceil\log_2m\rceil=n+1$ for $2^{n}<m<2^{n+1}$, the number of ReLU layers is $\lceil\log_2m\rceil$ for any positive integer $m$. By Definition \ref{def:num_hidden_neurons_width}, we have $l(m)=\lceil\log_2m\rceil+1$.

By Lemma \ref{lemma:strictly_increasing_sequence}, $r(k)$ is a strictly increasing sequence. Therefore, the maximum width of the network is given by the width of the first hidden layer. When $L=1$ or $m=1$, the width is $0$ due to Definition \ref{def:num_hidden_neurons_width}. When $L>1$ or $m>1$,
\begin{equation}
    \begin{split}
        \max_{l\in[L-1]}k_l &= \begin{cases}\frac{3m}{2}, &\text{if $m$ is even}\\2+\frac{3(m-1)}{2}, &\text{if $m$ is odd}\end{cases}\\
        &=\left\lceil\frac{3m}{2}\right\rceil.
    \end{split}
\end{equation}
Algorithm \ref{alg:find_a_extremum_ReLU_network} directly follows from the above construction. Its complexity analysis is deferred to Table \ref{tab:find_a_extremum_ReLU_network} in Appendix \ref{algorithm}.
\end{proof}

\subsection{Proof of Theorem \ref{theorem:ReLu_network_upper_bound_k_q}} \label{proof:ReLu_network_upper_bound_k_q}
\begin{proof}

Let $f_{1},f_{2},\cdots,f_{k}$ be $k$ distinct linear components of $p$ and $\mathcal{Q}$ be any family of closed convex subsets of $\mathbb{R}^n$ satisfying Definition \ref{def:continuous_piecewise_linear_fn}.
By Theorem 4.2 in \citep{tarela1999region}, $p$ can be represented as
\begin{equation} \label{eq:max_min_representation}
    p(\mathbf{x}) = \max_{\mathcal{X}\in\mathcal{Q}}\min_{i\in\mathcal{A}\left(\mathcal{X}\right)}f_{i}(\mathbf{x})
\end{equation}
for all $\mathbf{x}\in\mathbb{R}^n$ where
\begin{equation}
    \mathcal{A}\left(\mathcal{X}\right)=\left\{i\in[k]\left|\right.f_{i}(\mathbf{x})\geq p(\mathbf{x}),\forall \mathbf{x}\in\mathcal{X}\right\}
\end{equation}
is the set of indices of linear components that have values greater than or equal to $p(\mathbf{x})$ for all $\mathbf{x}\in\mathcal{X}$. A thorough discussion of the representation (\ref{eq:max_min_representation}) is given in Section \ref{proof_sketches}.

According to (\ref{eq:max_min_representation}), there are $\abs{\mathcal{Q}}$ minima required to be computed where each of them is a minimum of $\abs{\mathcal{A}\left(\mathcal{X}\right)}$ real numbers. Then, the value of $p$ can be computed by taking the maximum of the resulting $\abs{\mathcal{Q}}$ minima. We will show that these operations are realizable by a ReLU network.
By Lemma \ref{lemma:ReLU_network_represent_maximum}, an $l(m)$-layer ReLU network with $r(m)$ hidden neurons and a maximum width of $w(m)$ can compute the extremum of $m$ real numbers given by $m$ affine functions.

We realize (\ref{eq:max_min_representation}) in three steps. First, we create $\abs{\mathcal{Q}}$ ReLU networks where each of them is an $l\left(\abs{\mathcal{A}\left(\mathcal{X}\right)}\right)$-layer ReLU network with $r\left(\abs{\mathcal{A}\left(\mathcal{X}\right)}\right)$ hidden neurons and a maximum width of $w\left(\abs{\mathcal{A}\left(\mathcal{X}\right)}\right)$ that computes $\min_{i\in\mathcal{A}\left(\mathcal{X}\right)}f_{i}(\mathbf{x})$ for $\mathcal{X}\in\mathcal{Q}$. Second, we parallelly concatenate these $\abs{\mathcal{Q}}$ networks, i.e., put them in parallel and let them share the same input to obtain a ReLU network that takes $\mathbf{x}$ and outputs $\abs{\mathcal{Q}}$ real numbers. Finally, we create an $l\left(\abs{\mathcal{Q}}\right)$-layer ReLU network with $r\left(\abs{\mathcal{Q}}\right)$ hidden neurons and a maximum width of $w\left(\abs{\mathcal{Q}}\right)$ that takes the maximum of $\abs{\mathcal{Q}}$ real numbers.

The parallel combination of $\abs{\mathcal{Q}}$ networks in the second step can be realized by Lemma \ref{lemma:concat_kReLU_networks}. The third step can be fulfilled by Lemma \ref{lemma:composition_ReLU_networks}. With the above construction, we can now count the number of layers, the upper bound for the maximum width, and the number of hidden neurons for a ReLU network that realizes $p$.
The number of layers is given by
\begin{equation} \label{eq:exact_num_layers}
    l\left(\abs{\mathcal{Q}}\right)+\max_{\mathcal{X}\in\mathcal{Q}}l\left(\abs{\mathcal{A}\left(\mathcal{X}\right)}\right)-1.
\end{equation} 
The maximum width is bounded from above by
\begin{equation} \label{eq:at_most_maximum_width}
    \max\left(\sum_{\mathcal{X}\in\mathcal{Q}}\max\left(w\left(\abs{\mathcal{A}\left(\mathcal{X}\right)}\right),2\right),w\left(\abs{\mathcal{Q}}\right)\right).
\end{equation}
The number of hidden neurons is given by
\begin{equation} \label{eq:num_hidden_neurons}
    r\left(\abs{\mathcal{Q}}\right)+\sum_{\mathcal{X}\in\mathcal{Q}}r\left(\abs{\mathcal{A}\left(\mathcal{X}\right)}\right)+2\left(\max_{\mathcal{Y}\in\mathcal{Q}}l\left(\abs{\mathcal{A}\left(\mathcal{Y}\right)}\right)-l\left(\abs{\mathcal{A}\left(\mathcal{X}\right)}\right)\right).
\end{equation}

Because $\mathcal{A}\left(\mathcal{X}\right)$ for every $\mathcal{X}\in\mathcal{Q}$ is a subset of $[k]$, it holds that
\begin{equation} \label{eq:upper_bound_A_m}
    1\leq\abs{\mathcal{A}\left(\mathcal{X}\right)}\leq k
\end{equation}
for all $\mathcal{X}\in\mathcal{Q}$. Therefore, the number of layers in (\ref{eq:exact_num_layers}) can be bounded from above by
\begin{equation} \label{eq:exact_num_layers_simplified}
    l\left(\abs{\mathcal{Q}}\right)+l\left(k\right)-1=\left\lceil\log_2\abs{\mathcal{Q}}\right\rceil+\left\lceil\log_2k\right\rceil+1
\end{equation}
where we have used the definition of the function $l$ in Lemma \ref{lemma:ReLU_network_represent_maximum}. Again, using (\ref{eq:upper_bound_A_m}),
the upper bound for the maximum width in (\ref{eq:at_most_maximum_width}) can be further bounded from above by
\begin{equation} \label{eq:at_most_maximum_width_upper_bound}
    \begin{split}
        \max\left(\sum_{\mathcal{X}\in\mathcal{Q}}\max\left(w\left(k\right),2\right),w\left(\abs{\mathcal{Q}}\right)\right)&=\max\left(\abs{\mathcal{Q}}\max\left(w\left(k\right),2\right),w\left(\abs{\mathcal{Q}}\right)\right)\\
        &\leq\max\left(\abs{\mathcal{Q}}\max\left(\left\lceil\frac{3k}{2}\right\rceil,2\right),\left\lceil\frac{3\abs{\mathcal{Q}}}{2}\right\rceil\right)\\
        &=\left\lceil\frac{3k}{2}\right\rceil\abs{\mathcal{Q}}
    \end{split}
\end{equation}
where we have used the definition of the function $w$ in Lemma \ref{lemma:ReLU_network_represent_maximum}. Note that the maximum width is zero when the number of layers is one. Finally, again, using (\ref{eq:upper_bound_A_m}), the number of neurons in (\ref{eq:num_hidden_neurons}) can be bounded from above by
\begin{equation}  \label{eq:num_hidden_neurons_upper_bound}
    \begin{split}
    &r\left(\abs{\mathcal{Q}}\right)-2l(k)+2l(1)+\sum_{\mathcal{X}\in\mathcal{Q}}\left(r\left(k\right)+2l\left(k\right)-2l(1)\right)\\
    &=r\left(\abs{\mathcal{Q}}\right)-2l(k)+2l(1)+\abs{\mathcal{Q}}\left(r\left(k\right)+2l\left(k\right)-2l(1)\right)\\
    &=r\left(\abs{\mathcal{Q}}\right)-2\left\lceil\log_2k\right\rceil+\abs{\mathcal{Q}}\left(r\left(k\right)+2\left\lceil\log_2k\right\rceil\right)\\
    &\leq 3\left(2^{\left\lceil\log_2\abs{\mathcal{Q}}\right\rceil}-1\right)-2\left\lceil\log_2k\right\rceil+\abs{\mathcal{Q}}\left(3\left(2^{\left\lceil\log_2k\right\rceil}-1\right)+2\left\lceil\log_2k\right\rceil\right)\\
    &=3\left(2^{\left\lceil\log_2\abs{\mathcal{Q}}\right\rceil}-1\right)+3\abs{\mathcal{Q}}\left(2^{\left\lceil\log_2k\right\rceil}-1\right)+2\left(\abs{\mathcal{Q}}-1\right)\left\lceil\log_2k\right\rceil
    \end{split}
\end{equation}
where we have used Lemma \ref{lemma:r_upper_bound_formula} for the upper bound in the fourth line of (\ref{eq:num_hidden_neurons_upper_bound}). Expanding and rearranging terms in (\ref{eq:num_hidden_neurons_upper_bound}) lead to (\ref{eq:thm_upper_bound_hidden_neurons}).

Algorithm \ref{alg:find_a_network_for_a_piecewise_linear_fn} directly follows from the above construction. Its complexity analysis is deferred to Table \ref{tab:find_a_network_for_a_piecewise_linear_fn} in Appendix \ref{algorithm}.
\end{proof}

\subsection{Proof of Theorem \ref{theorem:ReLu_network_upper_bound_q}} \label{proof:ReLu_network_upper_bound_q}
\begin{proof}
By Lemma \ref{lemma:upper_bound_minimum_number_regions}, the number of distinct linear components $k$ is bounded from above by the number of pieces, i.e., $k\leq q$, implying that the bounds in Theorem \ref{theorem:ReLu_network_upper_bound_k_q} can be written in terms of $q$.
Substituting $k$ with $q$ in Theorem \ref{theorem:ReLu_network_upper_bound_k_q} proves the claim.

According to Theorem \ref{theorem:ReLu_network_upper_bound_k_q}, the time complexity of Algorithm \ref{alg:find_a_network_for_a_piecewise_linear_fn} is $\mathsf{poly}(n,k,q,L)$. Using the bound $k\leq q$ proves the claim for the time complexity.
\end{proof}

\subsection{Proof of Theorem \ref{theorem:ReLu_network_upper_bound_k}} \label{proof:ReLu_network_upper_bound_k}
\begin{proof}
By Lemma \ref{lemma:upper_bound_minimum_number_regions}, the minimum number of closed convex subsets $q$ of a CPWL function $p\colon\mathbb{R}^n\to\mathbb{R}$ can be bounded from above by $\phi(n,k)$, i.e.,
\begin{equation}
    q\leq\phi(n,k)=\min\left(\sum_{i=0}^n \binom{\frac{k^2-k}{2}}{i},k!\right).
\end{equation}
Substituting $q$ with $\phi(n,k)$ in Theorem \ref{theorem:ReLu_network_upper_bound_k_q} proves the claim.
\end{proof}

\subsection{Proof of Lemma \ref{lemma:ReLU_network_represent_identity}} \label{proof:ReLU_network_represent_identity}
\begin{proof}
Obviously, a one-layer ReLU network is an affine function whose weights can be set to fulfill the identity mapping in $\mathbb{R}^n$. We prove the case when the number of layers is more than one in the next paragraph. We start with a scalar case, and then work on the vector case.

For any $x\in\mathbb{R}$, it holds that $\max(0,x)-\max(0,-x)=x$. In other words, a hidden layer of two ReLUs with $+1$ and $-1$ weights can represent an identity mapping for any scalar. For any vector input in $\mathbb{R}^n$, we can concatenate such structures of two ReLUs in parallel because the identity mapping can be decomposed into $n$ individual identity mappings from $n$ coordinates. Therefore, a two-layer ReLU network with $2n$ hidden neurons can realize the identity mapping in $\mathbb{R}^n$. Stacking such a hidden layer any number of times gives a deeper network that is still an identity mapping.
Algorithm \ref{alg:find_an_identity_ReLU_network} follows from the above construction. Its complexity analysis is deferred to Table \ref{tab:find_an_identity_ReLU_network} in Appendix \ref{algorithm}.
\end{proof}

\subsection{Proof of Lemma \ref{lemma:composition_ReLU_networks}} \label{proof:composition_ReLU_networks}
\begin{proof}
Because a composition of two affine mappings is still affine, the first layer of either one of the two networks can be absorbed into the last layer of the other one if their dimensions are compatible. The resulting new network still satisfies Definition \ref{def:plain_ReLU_network}. The number of layers of the new network is $l_1+l_2-1$. The number of hidden neurons of the new network is $n_1+n_2$. The maximum width of the new network is at most $\max(w_1,w_2)$.
Algorithm \ref{alg:find_a_composited_ReLU_network} follows from the above construction. Its complexity analysis is deferred to Table \ref{tab:find_a_composited_ReLU_network} in Appendix \ref{algorithm}.
\end{proof}

\subsection{Proof of Lemma \ref{lemma:strictly_increasing_sequence}} \label{proof:strictly_increasing_sequence}
\begin{proof}
For any positive even integer $k\geq 4$, it holds true that
\begin{equation} \label{eq:increasing_seq_even}
    \begin{split}
        r(k)-r(k-1)&=\frac{3k}{2}+r\left(\frac{k}{2}\right)-2-\frac{3(k-2)}{2}-r\left(\frac{k}{2}\right)=1.
    \end{split}
\end{equation}
For any positive odd integer $k$ such that $k\geq 3$, we have
\begin{equation} \label{eq:increasing_seq_odd}
    \begin{split}
        r(k)-r(k-1)&=2+\frac{3(k-1)}{2}+r\left(\frac{k+1}{2}\right)-\frac{3(k-1)}{2}-r\left(\frac{k-1}{2}\right)\\
        &=\begin{cases}3, &\text{if $\frac{k+1}{2}$ is even}\\
        2+r\left(\frac{k+1}{2}\right)-r\left(\frac{k+1}{2}-1\right), &\text{otherwise}\end{cases}
    \end{split}
\end{equation}
which is strictly greater than zero. Note that (\ref{eq:increasing_seq_odd}) is greater than $0$ because the equality in (\ref{eq:increasing_seq_odd}) can be applied over and over again to reach (\ref{eq:increasing_seq_even}) or the base case $r(2)-r(1)=3$.
\end{proof}

\subsection{Proof of Lemma \ref{lemma:r_upper_bound_formula}} \label{proof:r_upper_bound_formula}
\begin{proof}
By Lemma \ref{lemma:strictly_increasing_sequence}, $r(k)$ is a strictly increasing sequence. Then, it must be true that
\begin{equation}
    r(k)=r\left(2^{\log_2k}\right)\leq r\left(2^{\left\lceil\log_2k\right\rceil}\right).
\end{equation}
According to the recursion (\ref{eq:num_neurons_sequence}), it holds that
\begin{equation}
    \begin{split}
        r\left(2^{\left\lceil\log_2k\right\rceil}\right)&=\frac{3}{2}\sum_{i=1}^{\left\lceil\log_2k\right\rceil}2^i\\
        &=\frac{3}{2}\left(2^{\left\lceil\log_2k\right\rceil+1}-2\right)\\
        &=3\left(2^{\left\lceil\log_2k\right\rceil}-1\right)\\
        &< 3\left(2^{\left(\log_2k\right)+1}-1\right)\\
        &=3\left(2k-1\right).
    \end{split}
\end{equation}
\end{proof}

\subsection{Proof of Lemma \ref{lemma:concat_2ReLU_networks}} \label{proof:concat_2ReLU_networks}
\begin{proof}
Two ReLU networks can be combined in parallel such that the new network shares the same input and the two output vectors from the two ReLU networks are concatenated together. To see this, we show that the weights of the new network can be found by the following operations.
Let $\mathbf{W}^1_i$ and $\mathbf{b}^1_i$ be the weights of the $i$-th layer in $g_{(l_1,n_1,w_1)}$, and $\mathbf{W}^2_i$ and $\mathbf{b}^2_i$ are the weights of the $i$-th layer in $g_{(l_2,n_2,w_2)}$. Let $\mathbf{W}_i$ and $\mathbf{b}_i$ be the weights of the new network. Now, we find the weights for the new network.
In the first layer, we construct
\begin{equation}
    \mathbf{W}_1=\begin{bmatrix}\mathbf{W}^1_1\\\mathbf{W}^2_1\end{bmatrix}
\end{equation}
and
\begin{equation}
    \mathbf{b}_1=\begin{bmatrix}\mathbf{b}^1_1\\\mathbf{b}^2_1\end{bmatrix}.
\end{equation}
For the $i$-th layer such that $1<i\leq\min(l_1,l_2)$, we use
\begin{equation} \label{eq:W_new_network}
    \mathbf{W}_i=\begin{bmatrix}\mathbf{W}^1_i&\mathbf{0}\\\mathbf{0}&\mathbf{W}^2_i\end{bmatrix}
\end{equation}
and
\begin{equation} \label{eq:b_new_network}
    \mathbf{b}_i=\begin{bmatrix}\mathbf{b}^1_i\\\mathbf{b}^2_i\end{bmatrix}.
\end{equation}
If $l_1=l_2$, then the claim is proved. If $l_1\neq l_2$, then we stack a network that implements the identity mapping to the shallower network such that the numbers of layers of the two networks are the same. Because the network $g_{(l_i,n_i,w_i)}$ is shallower than the other network, we append $\abs{l_1-l_2}$ hidden layers to $g_{(l_i,n_i,w_i)}$ such that the procedure in (\ref{eq:W_new_network}) and (\ref{eq:b_new_network}) can be used.
By Lemma \ref{lemma:ReLU_network_represent_identity}, there exists an $\left(\abs{l_1-l_2}+1\right)$-layer ReLU network $g_{(\abs{l_1-l_2}+1,2m_i\abs{l_1-l_2},2m_i)}$ with $2m_i\abs{l_1-l_2}$ hidden neurons and a maximum width bounded from above by $2m_i$ for representing the identity mapping in $\mathbb{R}^{m_i}$. By Lemma \ref{lemma:composition_ReLU_networks}, there exists a network $g_{\left(l_i+\abs{l_1-l_2},n_i+2m_i\abs{l_1-l_2},\max(w_i,2m_i)\right)}$ that represents the composition of $g_{(\abs{l_1-l_2}+1,2m_i\abs{l_1-l_2},2m_i)}$ and $g_{(l_i,n_i,w_i)}$. Now, (\ref{eq:W_new_network}) and (\ref{eq:b_new_network}) can be used to combine $g_{(l_j,n_j,w_j)}$ and $g_{\left(l_i+\abs{l_1-l_2},n_i+2m_i\abs{l_1-l_2},\max(w_i,2m_i)\right)}$ in parallel because the number of layers in network $g_{\left(l_i+\abs{l_1-l_2},n_i+2m_i\abs{l_1-l_2},\max(w_i,2m_i)\right)}$ is equal to $l_j$ according to the fact that $l_i+\abs{l_1-l_2}=\max(l_1,l_2)=l_j$. Such a new network has $\max(l_1,l_2)$ layers and
\begin{equation}
    n_j+n_i+2m_i\abs{l_1-l_2}=n_1+n_2+2m_i\abs{l_1-l_2}
\end{equation}
hidden neurons. The maximum width of the new network is at most
$
    w_j+\max(w_i,2m_i)
$.
\end{proof}

\subsection{Proof of Lemma \ref{lemma:concat_kReLU_networks}} \label{proof:concat_kReLU_networks}
\begin{proof}
The case $k=1$ is trivial. The case $k=2$ is proved by Lemma \ref{lemma:concat_2ReLU_networks}, which gives a tighter bound on the maximum width. The number of layers and hidden neurons of the claim agree with Lemma \ref{lemma:concat_2ReLU_networks} when $k=2$. The claim can be proved by following a similar procedure from the proof of Lemma \ref{lemma:concat_2ReLU_networks}. By Lemma \ref{lemma:ReLU_network_represent_identity}, we can stack an identity mapping realized by an $(l-l_i+1)$-layer ReLU network with $2m_i(l-l_i)$ hidden neurons and a maximum width of $2m_i$ on the $i$-th network for all $i\in[k]$ such that $l_i<l$. In other words, we increase the number of hidden layers for any network whose number of layers is less than $l$ such that the cascade of the network and the corresponding identity mapping has $l$ layers. For all $i\in[k]$ such that $l_i<l$, the extended network has $n_i+2m_i(l-l_i)$ hidden neurons and a maximum width at most $\max(w_i,2m_i)$ according to Lemma \ref{lemma:composition_ReLU_networks}. Because all the networks now have the same number of layers, we can directly combine them in parallel. Hence, the resulting new network has $\max_{i\in[k]}l_i$ layers and
\begin{equation}
    \sum_{i\in[k]}n_i+2m_i(l-l_i)
\end{equation}
hidden neurons and a maximum width at most
\begin{equation}
    \sum_{i\in[k]}\max(w_i,2m_i).
\end{equation}
Algorithm \ref{alg:find_a_concat_ReLU_network} directly follows from the above construction. Its complexity analysis is deferred to Table \ref{tab:find_a_concat_ReLU_network} in Appendix \ref{algorithm}.
\end{proof}

\subsection{Proof of Lemma \ref{lemma:upper_bound_num_feasible_ascending_orders}} \label{proof:upper_bound_num_feasible_ascending_orders}
\begin{proof}
Because $\mathcal{S}_{f_1,f_2,\cdots,f_k}^n$ is a subset of $\mathfrak{S}(k)$ and $\abs{\mathfrak{S}(k)}=k!$ is the number of permutations of $k$ distinct objects, it follows that
\begin{equation} \label{eq:size_S_k_fac}
    \abs{\mathcal{S}_{f_1,f_2,\cdots,f_k}^n}\leq k!.
\end{equation}
On the other hand, the number of hyperplanes, or affine subspaces of $\mathbb{R}^n$ with dimension $n-1$, induced by the distinct intersections between any two different affine functions is bounded from above by
\begin{equation}
    \binom{k}{2}.
\end{equation}
Let the arrangement of these hyperplanes be $\mathcal{A}$, and $\abs{\mathcal{A}}$ be the number of hyperplanes in the arrangement. By Zaslavsky's Theorem \citep{zaslavsky1975facing}, the number of connected components of the set
\begin{equation}
    \mathbb{R}^n \setminus \bigcup_{H\in\mathcal{A}}H
\end{equation}
is bounded from above by
\begin{equation}
    \sum_{i=0}^n\binom{\abs{\mathcal{A}}}{i}
\end{equation}
Because there are at most $\binom{k}{2}$ hyperplanes in $\mathbb{R}^n$, it follows that
\begin{equation} \label{eq:eq:size_S_sum_com}
    \abs{\mathcal{S}_{f_1,f_2,\cdots,f_k}^n}\leq\sum_{i=0}^n \binom{\binom{k}{2}}{i}.
\end{equation}
Combining (\ref{eq:size_S_k_fac}) and (\ref{eq:eq:size_S_sum_com}) proves the claim. Notice that the ascending order does not change within a connected component.
\end{proof}

\subsection{Proof of Lemma \ref{lemma:existence_intersecting_subsets}} \label{proof:existence_intersecting_subsets}
\begin{proof}
Let $\mathcal{X}_1,\mathcal{X}_2,\cdots,\mathcal{X}_I$ be a family of nonempty subsets satisfying Definition \ref{def:continuous_piecewise_linear_fn} for a non-affine function.
We prove the claim by contradiction. Suppose that there exists at least one nonempty closed subset, say $\mathcal{X}_i$, that is disjoint with every other closed subset $\mathcal{X}_j,j\in[I]\setminus i$. It follows that
\begin{equation}
    \mathcal{X}_i\bigcap\bigcup_{j\in[I]\setminus i}\mathcal{X}_j=\emptyset
\end{equation}
which implies
\begin{equation} \label{eq:RN_union_disjoint_nonempty_open_subsets}
    \left(\mathbb{R}^n\setminus\mathcal{X}_i\right)\bigcup\left(\mathbb{R}^n\setminus\bigcup_{j\in[I]\setminus i}\mathcal{X}_j\right)=\mathbb{R}^n.
\end{equation}
Because the union of any finite collection of closed sets is closed, it must be true that $\bigcup_{j\in[I]\setminus i}\mathcal{X}_j$ is closed. Notice that $\mathcal{X}_i$ is never the whole space $\mathbb{R}^n$ because the CPWL function is assumed to be non-affine. $\bigcup_{j\in[I]\setminus i}\mathcal{X}_j$ must be nonempty due to Definition \ref{def:continuous_piecewise_linear_fn}. 
Therefore, both $\mathbb{R}^n\setminus\mathcal{X}_i$ and $\mathbb{R}^n\setminus\bigcup_{j\in[I]\setminus i}\mathcal{X}_j$ are nonempty and open.
Since $\mathbb{R}^n$ is connected, it cannot be represented as the union of two disjoint nonempty open subsets. It follows that the intersection between $\mathbb{R}^n\setminus\mathcal{X}_i$ and $\mathbb{R}^n\setminus\bigcup_{j\in[I]\setminus i}\mathcal{X}_j$ is nonempty. In other words, there exists an element of $\mathbb{R}^n$ that is not in $\mathcal{X}_i$ and $\bigcup_{j\in[I]\setminus i}\mathcal{X}_j$, contradicting Definition \ref{def:continuous_piecewise_linear_fn}.
\end{proof}

\subsection{Proof of Lemma \ref{lemma:merge1}\ref{lemma:intersecting_nonempty_intersection}} \label{proof:intersecting_nonempty_intersection}
\begin{proof}
If the CPWL function is affine, then there are no intersecting closed subsets because the only closed subset satisfying Assumption \ref{assumption:connected_minimum} is $\mathbb{R}^n$. On the other hand, if the CPWL function is non-affine, then there exist at least two intersecting closed subsets according to Lemma \ref{lemma:existence_intersecting_subsets}.
For any two intersecting closed subsets, say $\mathcal{X}_i$ and $\mathcal{X}_j$, we first show that
\begin{equation}
    \{\mathbf{x}\in\mathbb{R}^n\left|\right.f_i(\mathbf{x})=f_j(\mathbf{x})\}\neq\emptyset
\end{equation}
where $f_i$ and $f_j$ are the affine functions corresponding to $\mathcal{X}_i$ and $\mathcal{X}_j$. We prove this statement by contradiction. Suppose that the intersection is empty, i.e., the linear equation $\left(\mathbf{a}_i-\mathbf{a}_j\right)^T\mathbf{x}+b_i-b_j=0$ does not have a solution where $f_i(\mathbf{x})=\mathbf{a}_i^T\mathbf{x}+b_i$ and $f_j(\mathbf{x})=\mathbf{a}_j^T\mathbf{x}+b_j$ for $\mathbf{a}_i,\mathbf{a}_j\in\mathbb{R}^n$ and $b_i,b_j\in\mathbb{R}$. Then, it is necessary that $\mathbf{a}_i=\mathbf{a}_j$ and $b_i\neq b_j$. In other words, the two affine functions are parallel, implying that every point in $\mathcal{X}_i\cap\mathcal{X}_j$ gives two different values, which cannot be true for a valid function.

Next, we prove that there does not exist an intersection that is $\mathbb{R}^n$ by contradiction. Let us assume that there exists at least one intersection that is $\mathbb{R}^n$ between the affine functions corresponding to two intersecting closed subsets, say $\mathcal{X}_i$ and $\mathcal{X}_j$. Then, we can always replace $\mathcal{X}_i$ and $\mathcal{X}_j$ with their union. Such a replacement still satisfies Definition \ref{def:continuous_piecewise_linear_fn} but reduces the number of closed (connected) subsets by at least one, contradicting the fact that the number of closed subsets is a minimum. Because the two affine functions are identical if and only if the intersection is $\mathbb{R}^n$, the two affine functions must be different.
\end{proof}

\subsection{Proof of Lemma \ref{lemma:merge1}\ref{lemma:intersection_affine_subspace}} \label{proof:intersection_affine_subspace}
\begin{proof}
The claim follows from Lemma \ref{lemma:merge1}\ref{lemma:intersecting_nonempty_intersection}. Because the two affine functions have a nonempty intersection, their intersection must be $\mathbb{R}^n$ or an affine subspace of $\mathbb{R}^n$ with dimension $n-1$. However, the two affine functions must be different, implying that $\mathbb{R}^n$ is never the intersection.
\end{proof}

\subsection{Proof of Lemma \ref{lemma:merge1}\ref{lamma:intersection_included_in_affine_subspace}} \label{proof:intersection_included_in_affine_subspace}
\begin{proof}
Let any given two intersecting subsets be $\mathcal{X}_i$ and $\mathcal{X}_j$. The intersection between their corresponding affine functions, say $f_i$ and $f_j$, is given by
$
    \mathcal{H}_{ij}=\{\mathbf{x}\in\mathbb{R}^n\left|\right.f_i(\mathbf{x})=f_j(\mathbf{x})\}.
$
Suppose that there exists a point $\mathbf{a}\in\mathcal{X}_i\bigcap\mathcal{X}_j$ such that $\mathbf{a}\not\in\mathcal{H}_{ij}$, then it follows that $f_i(\mathbf{a})\neq f_j(\mathbf{a})$. Such a result cannot be true for a valid function. We conclude that $\mathcal{X}_i\bigcap\mathcal{X}_j\subseteq\mathcal{H}_{ij}$.
\end{proof}

\subsection{Proof of Lemma \ref{lemma:merge1}\ref{lemma:intersection_is_boundary}} \label{proof:intersection_is_boundary}
\begin{proof}
We prove the statement by contradiction. Suppose there exists a point $\mathbf{c}\in\mathbb{R}^n$ in the intersection of two intersecting closed connected subsets, say $\mathcal{X}_i$ and $\mathcal{X}_j$, such that $\mathbf{c}$ is an interior point of $\mathcal{X}_i$, then there exists an open $\epsilon$-radius ball $B(\mathbf{c},\epsilon)$ such that
$
    \mathbf{x}\in\mathcal{X}_i, \forall \mathbf{x}\in B(\mathbf{c},\epsilon)
$
for some $\epsilon>0$. By Lemma \ref{lemma:merge1}\ref{lemma:intersection_affine_subspace}, the intersection between the two affine functions corresponding to $\mathcal{X}_i$ and $\mathcal{X}_j$ must be an affine subspace of $\mathbb{R}^n$ with dimension $n-1$. Let such an affine subspace be denoted as $\mathcal{H}_{ij}$ and its corresponding linear subspace be denoted as $\mathcal{V}(\mathcal{H}_{ij})$. Then, there exists a nonzero vector $\mathbf{d}\in\mathbb{R}^n$ such that
$
    \alpha\mathbf{d}\perp\mathbf{v}
$
for all $\mathbf{v}\in\mathcal{V}(\mathcal{H}_{ij})$ and any $\alpha\neq 0$. Therefore, it follows that $\alpha\mathbf{d}+\mathbf{a}\not\in\mathcal{H}_{ij}$ for any $\mathbf{a}\in\mathcal{H}_{ij}$ and any $\alpha\neq 0$. According to Lemma \ref{lemma:merge1}\ref{lamma:intersection_included_in_affine_subspace},
$
    \mathcal{X}_i\cap\mathcal{X}_j\subseteq\mathcal{H}_{ij}
$, so we have $\alpha\mathbf{d}+\mathbf{c}\not\in\mathcal{X}_i\cap\mathcal{X}_j$ for any $\alpha\neq 0$. When $\alpha=\frac{\epsilon}{2\norm{\mathbf{d}}_2}$ or $\alpha=\frac{-\epsilon}{2\norm{\mathbf{d}}_2}$, $\alpha\mathbf{d}+\mathbf{c}\in B(\mathbf{c},\epsilon)$. However, one of them must satisfy $\alpha\mathbf{d}+\mathbf{c}\not\in\mathcal{X}_i$, contradicting the existence of a point in $\mathcal{X}_i\cap\mathcal{X}_j$ that is an interior point of $\mathcal{X}_i$. The same procedure can be applied to prove that there does not exist a point in $\mathcal{X}_i\cap\mathcal{X}_j$ such that it is an interior point of $\mathcal{X}_j$. We conclude that every element in $\mathcal{X}_i\cap\mathcal{X}_j$ is not an interior point of $\mathcal{X}_i$ or $\mathcal{X}_j$.
\end{proof}

\subsection{Proof of Lemma \ref{lemma:merge2}\ref{lemma:nonempty_interior}} \label{proof:nonempty_interior}
\begin{proof}
The boundary or frontier of $\mathcal{X}_i$ is given by
\begin{equation} \label{eq:boundary_Xi}
    \begin{split}
        \text{Fr}\mathcal{X}_i&=\overline{\mathcal{X}_i}\bigcap\overline{\mathbb{R}^n\setminus\mathcal{X}_i}\\
        &=\mathcal{X}_i\bigcap\overline{\left(\bigcup_{k\in[I]}\mathcal{X}_k\right)\setminus\mathcal{X}_i}\\
        &=\mathcal{X}_i\bigcap\overline{\bigcup_{k\in[I]\setminus i}\left(\mathcal{X}_k\setminus\mathcal{X}_k\bigcap\mathcal{X}_i\right)}\\
        &=\mathcal{X}_i\bigcap\bigcup_{k\in[I]\setminus i}\overline{\left(\mathcal{X}_k\setminus\mathcal{X}_k\bigcap\mathcal{X}_i\right)}\\
        &=\mathcal{X}_i\bigcap\bigcup_{k\in[I]\setminus i}\overline{\mathcal{X}_k}\\
        &=\mathcal{X}_i\bigcap\bigcup_{k\in[I]\setminus i}\mathcal{X}_k\\
        &=\bigcup_{k\in[I]\setminus i}\mathcal{X}_k\bigcap\mathcal{X}_i\\
    \end{split}
\end{equation}
where $\overline{\mathcal{A}}$ denotes the closure of a subset $\mathcal{A}$. We have used Lemma \ref{lemma:merge1}\ref{lemma:intersection_is_boundary} for the equality between the $4$-th and $5$-th line of (\ref{eq:boundary_Xi}). Now, we prove that the interior of $\mathcal{X}_i$ is nonemtpy by contradiction. Suppose that the interior of $\mathcal{X}_i$ is empty, then it follows that $\mathcal{X}_i=\overline{\mathcal{X}_i}=\text{Fr}\mathcal{X}_i$ because the closure of $\mathcal{X}_i$ is the union of the interior and the boundary of $\mathcal{X}_i$. Combining that with (\ref{eq:boundary_Xi}), we have
$
    \mathcal{X}_i=\bigcup_{k\in[I]\setminus i}\mathcal{X}_k\bigcap\mathcal{X}_i.
$
which implies every element in $\mathcal{X}_i$ is at least covered by one of the other closed subsets $\mathcal{X}_k$ for some $k\in[I]\setminus i$. In this case, we can delete $\mathcal{X}_i$ from $\mathcal{X}_1,\mathcal{X}_2,\cdots,\mathcal{X}_I$; and the remaining $I-1$ closed subsets still satisfy Definition \ref{def:continuous_piecewise_linear_fn}. Such a valid deletion of $\mathcal{X}_i$ contradicts the fact that $I$ is the minimum number of closed subsets. Hence, the interior of $\mathcal{X}_i$ must be nonempty.
\end{proof}

\subsection{Proof of Lemma \ref{lemma:merge2}\ref{lemma:boundary_union_other_subsets}} \label{proof:boundary_union_other_subsets}
\begin{proof}
The statement is proved by (\ref{eq:boundary_Xi}) in Lemma \ref{lemma:merge2}\ref{lemma:nonempty_interior}.
\end{proof}

\subsection{Proof of Lemma \ref{lemma:merge2}\ref{lemma:nonempty_and_disjoint_interiors}} \label{proof:nonempty_and_disjoint_interiors}
\begin{proof}
    By Lemma \ref{lemma:merge2}\ref{lemma:nonempty_interior}, the interior of every subset is nonempty. Next, by Lemma \ref{lemma:merge1}\ref{lemma:intersection_is_boundary}, every point in the intersection between any two subsets is a boundary point of both subsets. It follows that the interiors of any two subsets are disjoint. 
\end{proof}

\subsection{Proof of Lemma \ref{lemma:subsets_cannot_be_included_union_affine_subspaces}} \label{proof:subsets_cannot_be_included_union_affine_subspaces}
\begin{proof}
    By Lemma \ref{lemma:merge2}\ref{lemma:nonempty_interior}, the interior of $\mathcal{X}_i$ is nonempty. Therefore, there exists an open $\epsilon$-radius ball $B(\mathbf{c}_0,\epsilon)$ such that $\mathbf{x}\in\mathcal{X}_i,\forall \mathbf{x}\in B(\mathbf{c}_0,\epsilon)$ for some $\epsilon>0$ and $\mathbf{c}_0\in\mathcal{X}_i$.
    Let us consider the set
    \begin{equation} \label{eq:intersections_open_half_spaces}
        \bigcap_{j\in[k]} \left(B(\mathbf{c}_0,\epsilon)\bigcap\left(\mathcal{H}^+_{j}\bigcup\mathcal{H}^-_{j}\right)\right)
    \end{equation}
    where $\mathcal{H}^+_{j}$ and $\mathcal{H}^-_{j}$ are two open half spaces created by $\mathcal{H}_{j}$.
    It suffices to show the nonemptyness of the set in (\ref{eq:intersections_open_half_spaces}) to prove the claim.
    If $\mathcal{H}_{j}$ and $B(\mathbf{c}_0,\epsilon)$ do not intersect, then $B(\mathbf{c}_0,\epsilon)$ completely belongs to $\mathcal{H}^+_{j}$ or $\mathcal{H}^-_{j}$. Without loss of generality, we can remove all $j$ such that $\mathcal{H}_{j}$ does not intersect $B(\mathbf{c}_0,\epsilon)$ and assume there are $k$ affine subspaces of $\mathbb{R}^n$ with dimension $n-1$ intersecting $B(\mathbf{c}_0,\epsilon)$. Let us sequentially carry out the intersection in (\ref{eq:intersections_open_half_spaces}). Every time before the operation of the $j$-th intersection between $B(\mathbf{c}_{j-1},\frac{\epsilon}{2^{j-1}})$ and $\left(\mathcal{H}^+_{j}\bigcup\mathcal{H}^-_{j}\right)$, there exists an open $\frac{\epsilon}{2^{j}}$-radius ball $B(\mathbf{c}_j,\frac{\epsilon}{2^{j}})$ for some $\mathbf{c}_j\in B(\mathbf{c}_{j-1},\frac{\epsilon}{2^{j-1}})$ such that it does not intersect with $\mathcal{H}_j$. Therefore, at the end of the sequential process, there exists an open ball that does not intersect any of these $k$ affine subspaces of $\mathbb{R}^n$ with dimension $n-1$. The set in (\ref{eq:intersections_open_half_spaces}) is nonempty, implying (\ref{eq:subset_not_included_union_finite}) holds true.
\end{proof}

\subsection{Proof of Proposition \ref{proposition:largest_closed_connected_minimum}} \label{proof:largest_closed_connected_minimum}
\begin{proof}
We prove the claim by contraposition. If the number of closed connected subsets is not a minimum, i.e., Assumption \ref{assumption:connected_minimum} is not satisfied, then such a number can be decreased by merging some of the intersecting closed connected subsets that have the same corresponding affine functions. Therefore, there exist at least two closed connected subsets that can be made larger.

On the other hand, if the closed connected subsets, say $\mathcal{X}_1,\mathcal{X}_2,\cdots,\mathcal{X}_I$, have at least one of the subsets that can be made larger, then there exist at least two intersecting closed connected subsets, say $\mathcal{X}_i$ and $\mathcal{X}_j$, from $\mathcal{X}_1,\mathcal{X}_2,\cdots,\mathcal{X}_I$ such that their corresponding affine functions are the same. Otherwise, any closed connected subset cannot be made larger than itself.
Therefore, $\mathcal{X}_i$ and $\mathcal{X}_j$ can be replaced with $\mathcal{X}_i\bigcup\mathcal{X}_j$ and these $I-1$ closed connected subsets still satisfy Definition \ref{def:continuous_piecewise_linear_fn}, implying that $I$ is not the minimum.
\end{proof}
\section{Algorithms and time complexities} \label{algorithm}
\begin{table}[H]
    \caption{The running time of Algorithm \ref{alg:find_a_network_for_a_piecewise_linear_fn} is upper bounded by $\mathsf{poly}(n,k,q,L)$.}
    \centering
    \begin{tabular}{ccc}
        \toprule
        Line & Operation count & Explanation\\
        \midrule
        1 & $\mathcal{O}\left(nq\max(n^2,q)\right)$ & Algorithm \ref{alg:find_all_distinct_linear_components} (see Table \ref{tab:find_all_distinct_linear_components}).\\
        2 & $\mathcal{O}(q)$ & Repeat Line 3 to Line 9 $q$ times.\\
        3 & $\mathcal{O}(1)$ & Initialize an empty placeholder.\\
        4 & $\mathcal{O}(k)$ & Repeat Line 5 to Line 7 $k$ times.\\
        5 & $\mathsf{poly}\left(n,q,L\right)$ & Solve a linear program \citep{vavasis1996primal}.\\
        6 & $\mathcal{O}(1)$ & Add an index.\\
        7 & - & -\\
        8 & - & -\\
        9 &  $\mathcal{O}\left(k^2\max(k\log_2k,n)\right)$ & Algorithm \ref{alg:find_a_extremum_ReLU_network} (see Table \ref{tab:find_a_extremum_ReLU_network}).\\
        10 & - & -\\
        11 & $\mathcal{O}\left(q\max(n,k)^2\max(n,k,q)\log_2k\right)$ & Algorithm \ref{alg:find_a_concat_ReLU_network} (see Table \ref{tab:find_a_concat_ReLU_network}).\\
        12 & $\mathcal{O}\left(q^3\log_2q\right)$ & Algorithm \ref{alg:find_a_extremum_ReLU_network} (see Table \ref{tab:find_a_extremum_ReLU_network}).\\
        13 & $\mathcal{O}\left(q^3\max(n,k)^3\log_2q\right)$ & Algorithm \ref{alg:find_a_composited_ReLU_network} (see Table \ref{tab:find_a_composited_ReLU_network}).\\
        \bottomrule
    \end{tabular}
    \label{tab:find_a_network_for_a_piecewise_linear_fn}
\end{table}
\begin{algorithm}
\caption{Find a ReLU network that computes the extremum of affine functions}\label{alg:find_a_extremum_ReLU_network}
\begin{algorithmic}[1]
\Require Scalar-valued affine functions $f_1,f_2,\cdots,f_m$ on $\mathbb{R}^n$ and the type of extremum (max or min).
\Ensure Parameters of an $l$-layer ReLU network $g$ computing $g(\mathbf{x})=\max_{i\in[m]}. f_i(\mathbf{x})$ or $g(\mathbf{x})=\min_{i\in[m]}. f_i(\mathbf{x})$ for all $\mathbf{x}\in\mathbb{R}^n$.
\State $\mathbf{A} \gets \begin{bmatrix}-1&1\\1&0\\-1&0\end{bmatrix}, \mathbf{B}\gets\begin{bmatrix}1&1&-1\end{bmatrix},\mathbf{C}\gets\begin{bmatrix}1\\-1\end{bmatrix}$ \Comment{Constant matrices}
\State $\boldsymbol{\Psi}(\mathbf{Y},\mathbf{Z})\gets\begin{bmatrix}\mathbf{Y}&\mathbf{0}\\\mathbf{0}&\mathbf{Z}\end{bmatrix}$ \Comment{A function generating a block diagonal matrix composed of $\mathbf{Y}$ and $\mathbf{Z}$}
\State $\boldsymbol{\Phi}(\mathbf{Y},s)\gets\begin{bmatrix}\mathbf{Y}^{(1)}&\mathbf{0}&\cdots&\mathbf{0}\\\mathbf{0}&\mathbf{Y}^{(2)}&\cdots&\mathbf{0}\\\vdots&\mathbf{\vdots}&\ddots&\vdots\\\mathbf{0}&\mathbf{0}&\cdots&\mathbf{Y}^{(s)}\end{bmatrix}$ \Comment{A block diagonal matrix with $\mathbf{Y}$ repeated $s$ times}
\State $l \gets \lceil\log_2m\rceil+1, k_0 \gets n, k_{l} \gets 1, c_0 \gets m$ \Comment{$l$ is the number of layers of $g$}
\For{$i=1,2,\cdots,l-1$}
    \If{$c_{i-1}$ is even}
        \State $c_i \gets \frac{c_{i-1}}{2}$
        \State $k_i \gets 3c_{i}$ \Comment{Output dimension of the $i$-th layer}
    \Else
        \State $c_i \gets \frac{c_{i-1}+1}{2}$
        \State $k_i \gets 3c_i-1$ \Comment{Output dimension of the $i$-th layer}
    \EndIf
\EndFor
\State $\mathbf{W}_1 \gets \begin{bmatrix}\nabla f_1&\nabla f_2&\cdots&\nabla f_m\end{bmatrix}^\mathsf{T},\mathbf{b}_1 \gets\begin{bmatrix}f_1(0)&f_2(0)&\cdots&f_m(0)\end{bmatrix}^\mathsf{T}$
\If{$l>1$} \Comment{Find the weights of input and output layers, if any}
\If{$c_0$ is even}
\State $\mathbf{W}_1 \gets \boldsymbol{\Phi}\left(\mathbf{A},c_1\right)\mathbf{W}_1,\mathbf{b}_1 \gets \boldsymbol{\Phi}\left(\mathbf{A},c_1\right)\mathbf{b}_1$
\Else
\State $\mathbf{W}_1 \gets \boldsymbol{\Psi}\left(\boldsymbol{\Phi}\left(\mathbf{A},c_1-1\right),\mathbf{C}\right)\mathbf{W}_1,\mathbf{b}_1 \gets \boldsymbol{\Psi}\left(\boldsymbol{\Phi}\left(\mathbf{A},c_1-1\right),\mathbf{C}\right)\mathbf{b}_1$
\EndIf
\State $\mathbf{W}_l \gets \mathbf{B}, \mathbf{b}_l \gets \mathbf{0}_{k_l}$
\EndIf
\If{$l>2$} \Comment{Find the weights of remaining layers, if any}
\For{$i=2,3,\cdots,l-1$}
    \If{$c_{i-1}$ is even}
        \State $\mathbf{T} \gets \boldsymbol{\Phi}\left(\mathbf{A},c_{i}\right)$
    \Else
        \State $\mathbf{T} \gets \boldsymbol{\Psi}\left(\boldsymbol{\Phi}\left(\mathbf{A},c_{i}-1\right),\mathbf{C}\right)$
    \EndIf
    \If{$c_{i-2}$ is even}
        \State $\mathbf{W}_i \gets \mathbf{T}\boldsymbol{\Phi}\left(\mathbf{B},c_{i-1}\right)$
    \Else
        \State $\mathbf{W}_i \gets \mathbf{T}\boldsymbol{\Psi}\left(\boldsymbol{\Phi}\left(\mathbf{B},c_{i-1}-1\right),\mathbf{C}^{\mathsf{T}}\right)$
    \EndIf
    \State $\mathbf{b}_i \gets \mathbf{0}_{k_i}$
\EndFor
\EndIf
\If{type of extremum is the minimum}
\State $\mathbf{W}_1 \gets -\mathbf{W}_1, \mathbf{b}_1 \gets -\mathbf{b}_1$
\State $\mathbf{W}_l \gets -\mathbf{W}_l, \mathbf{b}_l \gets -\mathbf{b}_l$
\EndIf \Comment{See Table \ref{tab:find_a_extremum_ReLU_network} in Appendix \ref{algorithm} for complexity analysis}
\end{algorithmic}
\end{algorithm}

\begin{algorithm}
\caption{Find a ReLU network that concatenates a number of given ReLU networks}\label{alg:find_a_concat_ReLU_network}
\begin{algorithmic}[1]
\Require Weights of $k$ ReLU networks $g_1,g_2,\cdots,g_k$ denoted by $\{\mathbf{W}_i^j,\mathbf{b}_i^j\}_{i=1}^{l_j}$ for $j\in[k]$.
\Ensure Parameters of an $l$-layer ReLU network $g$ computing $g(\mathbf{x})=\begin{bmatrix}g_1(\mathbf{x})\\g_2(\mathbf{x})\\\vdots\\g_k(\mathbf{x})\end{bmatrix},\forall \mathbf{x}\in\mathbb{R}^n$.
\State $l \gets \max_{j\in[k]}l_j$
\State $\mathbf{W}_1 \gets \begin{bmatrix}\mathbf{W}_1^1\\\mathbf{W}_1^2\\\vdots\\\mathbf{W}_1^k\end{bmatrix},\mathbf{b}_1 \gets \begin{bmatrix}\mathbf{b}_1^1\\\mathbf{b}_1^2\\\vdots\\\mathbf{b}_1^k\end{bmatrix}$ \Comment{Weights of the input layer}
\For{$j=1,2,\cdots,k$} 
    \If{$l_j<l$} \Comment{Append an identity mapping network to the network if it is shallower}
    \State $m \gets $ output dimsion of $g_j$
    \State $g_j^c \gets $ run Algorithm \ref{alg:find_an_identity_ReLU_network} with an input dimension $m$ and a number of layers $l-l_j+1$
    \State $g_j' \gets $ run Algorithm \ref{alg:find_a_composited_ReLU_network} with $g_j$ and $g_j^c$
    \State $\{\mathbf{W}_i^j,\mathbf{b}_i^j\}_{i=1}^{l} \gets $ weights of $g_j'$
    \EndIf
\EndFor
\For{$i=2,3,\cdots,l$} \Comment{Find the remaining weights}
    \State $\mathbf{W}_i \gets \begin{bmatrix}\mathbf{W}_i^1&\mathbf{0}&\cdots&\mathbf{0}\\\mathbf{0}&\mathbf{W}_i^2&\cdots&\mathbf{0}\\\vdots&\mathbf{\vdots}&\ddots&\vdots\\\mathbf{0}&\mathbf{0}&\cdots&\mathbf{W}_i^k\end{bmatrix}, \mathbf{b}_i \gets \begin{bmatrix}\mathbf{b}_i^1\\\mathbf{b}_i^2\\\vdots\\\mathbf{b}_i^k\end{bmatrix}$
\EndFor \Comment{See Table \ref{tab:find_a_concat_ReLU_network} in Appendix \ref{algorithm} for complexity analysis}
\end{algorithmic}
\end{algorithm}

\begin{algorithm}
\caption{Find a ReLU network computing a composition of two given ReLU networks}\label{alg:find_a_composited_ReLU_network}
\begin{algorithmic}[1]
\Require Weights of two ReLU networks $g_1$ and $g_2$ denoted by $\{\mathbf{W}_i^1,\mathbf{b}_i^1\}_{i=1}^{l_1}$ and $\{\mathbf{W}_i^2,\mathbf{b}_i^2\}_{i=1}^{l_2}$.
\Ensure Parameters of an $l$-layer ReLU network $g$ computing $g(\mathbf{x})=g_2\left(g_1(\mathbf{x})\right),\forall \mathbf{x}\in\mathbb{R}^n$.
\State $l \gets l_1+l_2-1$
\For{$i=1,2,\cdots,l$}
    \If{$i<l_1$} \Comment{The first $l_1-1$ layers are identical to the corresponding layers in $g_1$}
        \State $\mathbf{W}_i \gets \mathbf{W}_i^1, \mathbf{b}_i \gets \mathbf{b}_i^1$
    \ElsIf{$i=l_1$} \Comment{A composition of affine functions is still an affine function}
        \State $\mathbf{W}_i \gets \mathbf{W}_1^2\mathbf{W}_{l_1}^1, \mathbf{b}_i \gets \mathbf{W}_1^2\mathbf{b}_{l_1}^1+\mathbf{b}_{1}^2$
    \Else  \Comment{The last $l_2-1$ layers are identical to the corresponding layers in $g_2$}
        \State $\mathbf{W}_i \gets \mathbf{W}_{i-l_1+1}^2, \mathbf{b}_i \gets \mathbf{b}_{i-l_1+1}^2$
    \EndIf
\EndFor \Comment{See Table \ref{tab:find_a_composited_ReLU_network} in Appendix \ref{algorithm} for complexity analysis}
\end{algorithmic}
\end{algorithm}

\begin{algorithm}
\caption{Find a ReLU network that computes an identity mapping for a given depth}\label{alg:find_an_identity_ReLU_network}
\begin{algorithmic}[1]
\Require The input dimension $n$ and the number of layers $l$ of the target ReLU network.
\Ensure Parameters of an $l$-layer ReLU network $g$ computing $g(\mathbf{x})=\mathbf{x},\forall \mathbf{x}\in\mathbb{R}^n$.
\State $\mathbf{A} \gets \begin{bmatrix}1\\-1\end{bmatrix}, \mathbf{B}\gets\begin{bmatrix}1&-1\end{bmatrix},\mathbf{C}\gets\begin{bmatrix}1&-1\\-1&1\end{bmatrix}$ \Comment{Constant matrices}
\State $\boldsymbol{\Phi}(\mathbf{Y},s)=\begin{bmatrix}\mathbf{Y}^{(1)}&\mathbf{0}&\cdots&\mathbf{0}\\\mathbf{0}&\mathbf{Y}^{(2)}&\cdots&\mathbf{0}\\\vdots&\mathbf{\vdots}&\ddots&\vdots\\\mathbf{0}&\mathbf{0}&\cdots&\mathbf{Y}^{(s)}\end{bmatrix}$ \Comment{A block diagonal matrix with $\mathbf{Y}$ repeated $s$ times}
\State $k_0 \gets n, k_l \gets n, \mathbf{b}_l\gets\mathbf{0}_{n}$
\For{$i=1,2,\cdots,l-1$}
    \State $k_i \gets 2n$ \Comment{The number of hidden neurons at the $i$-th hidden layer}
    \State $\mathbf{b}_{i} \gets \mathbf{0}_{k_{i}}$
\EndFor
\If{$l=1$} \Comment{Find the weights of input and output layers, if any}
    \State $\mathbf{W}_1 \gets \mathbf{I}_{n\times n}$ \Comment{An identity matrix}
\Else
    \State $\mathbf{W}_1 \gets \boldsymbol{\Phi}\left(\mathbf{A},k_0\right)$
    \State $\mathbf{W}_l \gets \boldsymbol{\Phi}\left(\mathbf{B},k_l\right)$
\EndIf
\If{$l>2$} \Comment{Find the weights of hidden layers, if any}
    \For{$i=2,3,\cdots,l-1$}
        \State $\mathbf{W}_i \gets \boldsymbol{\Phi}\left(\mathbf{C},n\right)$
    \EndFor
\EndIf \Comment{See Table \ref{tab:find_an_identity_ReLU_network} in Appendix \ref{algorithm} for complexity analysis}
\end{algorithmic}
\end{algorithm}

\begin{algorithm}
\caption{Find all distinct linear components of a CPWL function}\label{alg:find_all_distinct_linear_components}
\begin{algorithmic}[1]
\Require An unknown CPWL function $p$ whose output can be observed by feeding input from $\mathbb{R}^n$ to the function. A center $\mathbf{c}_i$ and radius $\epsilon_i>0$ of any closed $\epsilon_i$-radius ball $B(\mathbf{c}_i,\epsilon_i)$ such that $B(\mathbf{c}_i,\epsilon_i)\subset\mathcal{X}_i$ for $i=1,2,\cdots,q$ where $\{\mathcal{X}_i\}_{i\in[q]}$ are all pieces of $p$.
\Ensure All distinct linear components of $p$, denoted by $\mathcal{F}$.
\State $\mathcal{F}\gets\emptyset$ \Comment{Initialize the set of all distinct linear components}
\For{$i=1,2,\cdots,q$}
    \State $\mathbf{x}_0 \gets \mathbf{c}_i$  \Comment{select the center of $B(\mathbf{c}_i,\epsilon_i)$}
    \State $y_0 \gets p(\mathbf{x}_0)$
    \State $\begin{bmatrix}\mathbf{s}_1&\mathbf{s}_2&\cdots&\mathbf{s}_n\end{bmatrix} \gets \epsilon_i\mathbf{I}_{n\times n}$ \Comment{scale the standard basis of $\mathbb{R}^n$}
    \State $\mathbf{S} \gets \begin{bmatrix}\mathbf{s}_1&\mathbf{s}_2&\cdots&\mathbf{s}_n\end{bmatrix}$
    \State $\mathbf{z}\gets\begin{bmatrix}p(\mathbf{s}_1+\mathbf{x}_0)-y_0\\p(\mathbf{s}_2+\mathbf{x}_0)-y_0\\\vdots\\p(\mathbf{s}_n+\mathbf{x}_0)-y_0\end{bmatrix}$
    \State $\mathbf{a} \gets \mathbf{S}^{-\mathsf{T}}\mathbf{z}$ \Comment{Find the linear map by solving a system of linear equations}
    \State $b \gets y_0-\mathbf{a}^{\mathsf{T}}\mathbf{x}_0$  \Comment{Find the translation}
    \State $f \gets \mathbf{x}\mapsto\mathbf{a}^{\mathsf{T}}\mathbf{x}+b$ \Comment{The affine map on $\mathcal{X}_i$}
    \If{$f\not\in\mathcal{F}$} \Comment{Only add the affine map $f$ to the set $\mathcal{F}$ if $f$ is distinct to all elements of $\mathcal{F}$}
        \State $\mathcal{F} \gets \mathcal{F}\bigcup\{f\}$
    \EndIf
\EndFor \Comment{See Table \ref{tab:find_all_distinct_linear_components} in Appendix \ref{algorithm} for complexity analysis}
\end{algorithmic}
\end{algorithm}

\begin{table}[H]
    \caption{The time complexity of Algorithm \ref{alg:find_a_extremum_ReLU_network} is $\mathcal{O}\left(m^2\max(m\log_2m,n)\right)$.}
    \centering
    \begin{tabular}{ccc}
        \toprule
        Line & Operation count & Explanation\\
        \midrule
        1 & $\mathcal{O}(1)$ & Initialize constant matrices.\\
        2 & $\mathcal{O}\left(d_1^2\right)$ & Let $d_1$ be the maximum dimension of $\mathbf{Y}$ and $\mathbf{Z}$.\\
        3 & $\mathcal{O}(s^2d_2^2)$ & Let $d_2$ be the maximum dimension of $\mathbf{Y}$.\\
        4 & $\mathcal{O}(1)$ & Scalar assignments.\\
        5 & $\mathcal{O}(\log_2m)$ & Repeat Line 6 to Line 12 $\lceil\log_2m\rceil$ times.\\
        6 & $\mathcal{O}(1)$ & Check a scalar is even or not.\\
        7 & $\mathcal{O}(1)$ & Compute a scalar.\\
        8 & $\mathcal{O}(1)$ & Compute a scalar.\\
        9 & - & -\\
        10 & $\mathcal{O}(1)$ & Compute a scalar.\\
        11 & $\mathcal{O}(1)$ & Compute a scalar.\\
        12 & - & -\\
        13 & - & -\\
        14 & $\mathcal{O}(mn)$ & Assign a matrix and a vector.\\
        15 & $\mathcal{O}(1)$ & Check a scalar inequality.\\
        16 & $\mathcal{O}(1)$ & Check a scalar is even or not.\\
        17 & $\mathcal{O}(m^2n)$ & Matrix creation and multiplication.\\
        18 & - & -\\
        19 & $\mathcal{O}(m^2n)$ & Matrix creation and multiplication.\\
        20 & - & -\\
        21 & $\mathcal{O}(1)$ & Assign a constant matrix and vector.\\
        22 & - & -\\
        23 & $\mathcal{O}(1)$ & Check a scalar inequality.\\
        24 & $\mathcal{O}(\log_2m)$ & Repeat Line 25 to Line 30 $\lceil\log_2m\rceil-1$ times.\\
        25 & $\mathcal{O}(1)$ & Check a scalar is even or not.\\
        26 & $\mathcal{O}(m^2)$ & Matrix creation.\\
        27 & - & -\\
        28 & $\mathcal{O}(m^2)$ & Matrix creation.\\
        29 & - & -\\
        30 & $\mathcal{O}(1)$ & Check a scalar is even or not.\\
        31 & $\mathcal{O}(m^3)$ & Matrix creation and multiplication.\\
        32 & - & -\\
        33 & $\mathcal{O}(m^3)$ & Matrix creation and multiplication.\\
        34 & - & -\\
        35 & $\mathcal{O}(m)$ & Assign a vector whose length is at most $\left\lceil\frac{3m}{2}\right\rceil$.\\
        36 & - & -\\
        37 & - & -\\
        38 & $\mathcal{O}(1)$ & Check the binary data type.\\
        39 & $\mathcal{O}(mn)$ & Reverse the sign of $\mathbf{W}_1$ and $\mathbf{b}_1$.\\
        40 & $\mathcal{O}(1)$ & Reverse the sign of a constant matrix and a constant bias.\\
        41 & - & -\\
        \bottomrule
    \end{tabular}
    \label{tab:find_a_extremum_ReLU_network}
\end{table}

\begin{table}[H]
    \caption{The time complexity of Algorithm \ref{alg:find_a_concat_ReLU_network} is $\mathcal{O}\left(d^2kl\max(d,k)\right)$ where $d$ is the maximum dimension of all the weight matrices in $g_1,g_2,\cdots,g_k$ and $l=\max_{j\in[k]}l_j$.}
    \centering
    \begin{tabular}{ccc}
        \toprule
        Line & Operation count & Explanation\\
        \midrule
        1 & $\mathcal{O}(k)$ & Find the maximum among $k$ numbers.\\
        2 & $\mathcal{O}(d^2k)$ & Matrix concatenation and assignment.\\
        3 & $\mathcal{O}(k)$ & Repeat Line 4 to Line 9 $k$ times.\\
        4 & $\mathcal{O}(1)$ & Check a scalar inequality.\\
        5 & $\mathcal{O}(1)$ & A scalar assignment.\\
        6 & $\mathcal{O}(d^2l)$ & Algorithm \ref{alg:find_an_identity_ReLU_network} (see Table \ref{tab:find_an_identity_ReLU_network}).\\
        7 & $\mathcal{O}(d^3l)$ & Algorithm \ref{alg:find_a_composited_ReLU_network} (see Table \ref{tab:find_a_composited_ReLU_network}).\\
        8 & $\mathcal{O}(d^2l)$ & Assign weights of the network.\\
        9 & - & -\\
        10 & - & -\\
        11 & $\mathcal{O}(l)$ & Repeat Line 12 $l-1$ times.\\
        12 & $\mathcal{O}(d^2k^2)$ & Assign a matrix and a vector.\\
        13 & - & -\\
        \bottomrule
    \end{tabular}
    \label{tab:find_a_concat_ReLU_network}
\end{table}

\begin{table}[H]
    \caption{The time complexity of Algorithm \ref{alg:find_a_composited_ReLU_network} is $\mathcal{O}\left(d^3\max(l_1,l_2)\right)$ where $d$ is the maximum dimension of all the weight matrices in $g_1$ and $g_2$.
    }
    \centering
    \begin{tabular}{ccc}
        \toprule
        Line & Operation count & Explanation\\
        \midrule
        1 & $\mathcal{O}(1)$ & Assign a constant.\\
        2 & $\mathcal{O}(l)$ & Repeat Line 3 to Line 9 $l$ times.\\
        3 & $\mathcal{O}(1)$ & Check a scalar inequality.\\
        4 & $\mathcal{O}(d^2)$ & Assign a matrix and a vector (at most $d^2+d$ elements).\\
        5 & $\mathcal{O}(1)$ & Check a scalar equality.\\
        6 & $\mathcal{O}(d^3)$ & Matrix multiplication and assignment.\\
        7 & - & -\\
        8 & $\mathcal{O}(d^2)$ & Assign a matrix and a vector  (at most $d^2+d$ elements).\\
        9 & - & -\\
        10 & - & -\\
        \bottomrule
    \end{tabular}
    \label{tab:find_a_composited_ReLU_network}
\end{table}

\begin{table}[H]
    \caption{The time complexity of Algorithm \ref{alg:find_an_identity_ReLU_network} is $\mathcal{O}(n^2l)$.}
    \centering
    \begin{tabular}{ccc}
        \toprule
        Line & Operation count & Explanation\\
        \midrule
        1 & $\mathcal{O}(1)$ & Initialize constant matrices.\\
        2 & $\mathcal{O}(s^2d_1d_2)$ & Create a block diagonal matrix from $\mathbf{Y}\in\mathbb{R}^{d_1\times d_2}$ and $s\in\mathbb{N}$.\\
        3 & $\mathcal{O}(n)$ & Assign two constant scalars and one constant vector of length $n$.\\
        4 & $\mathcal{O}(l)$ & Repeat Line 5 to line 6 $l$ times.\\
        5 & $\mathcal{O}(1)$ & Assign a scalar.\\
        6 & $\mathcal{O}(n)$ & Assign a vector whose length $k_i$ is equal to $2n$.\\
        7 & - & -\\
        8 & $\mathcal{O}(1)$ & Check a scalar equality.\\
        9 & $\mathcal{O}(n^2)$ & Assign an $n$-by-$n$ matrix.\\
        10 & - & -\\
        11 & $\mathcal{O}(n^2)$ & Assign a $2n$-by-$n$ block diagonal matrix.\\
        12 & $\mathcal{O}(n^2)$ & Assign an $n$-by-$2n$ block diagonal matrix.\\
        13 & - & -\\
        14 & $\mathcal{O}(1)$ & Check a scalar inequality.\\
        15 & $\mathcal{O}(l)$ & Repeat Line 16 $l-2$ times.\\
        16 & $\mathcal{O}(n^2)$ & Assign a $2n$-by-$2n$ block diagonal matrix.\\
        17 & - & -\\
        18 & - & -\\
        \bottomrule
    \end{tabular}
    \label{tab:find_an_identity_ReLU_network}
\end{table}

\begin{table}[H]
    \caption{The time complexity of Algorithm \ref{alg:find_all_distinct_linear_components} is $\mathcal{O}\left(nq\max(n^2,q)\right)$.}
    \centering
    \begin{tabular}{ccc}
        \toprule
        Line & Operation count & Explanation\\
        \midrule
        1 & $\mathcal{O}(1)$ & Initialize an empty placeholder $\mathcal{F}$.\\
        2 & $\mathcal{O}(q)$ & Repeat Line 3 to line 13 $q$ times.\\
        3 & $\mathcal{O}(1)$ & Select an interior point. Use the center of the ball.\\
        4 & $\mathcal{O}(1)$ & Evaluate the function on the point.\\
        5 & $\mathcal{O}(n^2)$ & Scale and assign an $n$-by-$n$ matrix.\\
        6 & - & -\\
        7 & $\mathcal{O}(n)$ & Translate, evaluate, and subtract $n$ points.\\
        8 & $\mathcal{O}(n^3)$ & Solve a system of $n$ linear equations with $n$ variables.\\
        9 & $\mathcal{O}(n)$ & Solve the translation term in the affine map\\
        10 & - & -\\
        11 & $\mathcal{O}(nq)$ & Each affine map has $n+1$ parameters and $\mathcal{F}$ has at most $q$ elements.\\
        12 & $\mathcal{O}(1)$ & Add a distinct affine map to $\mathcal{F}$.\\
        13 & - & -\\
        14 & - & -\\
        \bottomrule
    \end{tabular}
    \label{tab:find_all_distinct_linear_components}
\end{table}
\section{Open source implementation and run time of Algorithm \ref{alg:find_a_network_for_a_piecewise_linear_fn}} \label{opensource_and_runtime}
We implement Algorithm \ref{alg:find_a_network_for_a_piecewise_linear_fn} in Python. Figure \ref{fig:run_time} shows that the run time of the algorithm is greatly affected by the number of pieces $q$.

Code is available at \href{https://github.com/kjason/CPWL2ReLUNetwork}{https://github.com/kjason/CPWL2ReLUNetwork}.

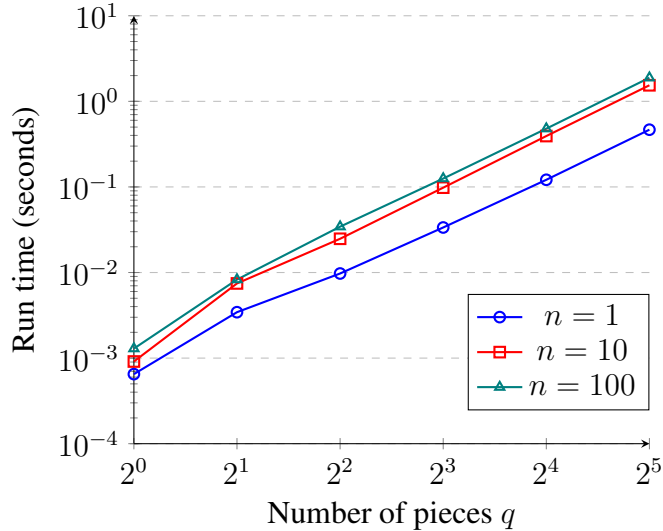
\begin{figure}[h]
\centering
\begin{tikzpicture}
\begin{axis}[
    font=\large,
    axis lines = left,
    xlabel = Number of pieces $q$,
    ylabel = Run time (seconds),
    legend style={at={(1.0,0.35)}},
    ymode=log,
    xmode=log,
    log basis x={2},
    xmax = 32,
    xtick={1,2,4,8,16,32},
    ymajorgrids=true,
    grid style=dashed,
    ymin = 0.0001,
    ymax= 10
]
\addplot [
    color=blue,
    mark=o,
    thick,
]
coordinates {
(1,0.000650125)
(2,0.003439136)
(4,0.009739432)
(8,0.033589473)
(16,0.121194372)
(32,0.466290045)
};
\addlegendentry{$n=1$}
\addplot [
    color=red,
    mark=square,
    thick
]
coordinates {
(1,0.000912652)
(2,0.007454562)
(4,0.02478827)
(8,0.098043351)
(16,0.393176675)
(32,1.540794063)
};
\addlegendentry{$n=10$}
\addplot[
    color=teal,
    mark=triangle,
    thick,
    ]
    coordinates {
(1,0.001291566)
(2,0.00827076)
(4,0.034329739)
(8,0.12526372)
(16,0.479531198)
(32,1.885979624)
    };
\addlegendentry{$n=100$}
\end{axis}
\end{tikzpicture}
\caption{The run time of Algorithm \ref{alg:find_a_network_for_a_piecewise_linear_fn} is an average of $50$ trials. Every trial runs Algorithm \ref{alg:find_a_network_for_a_piecewise_linear_fn} with a random CPWL function whose input dimension is $n$ and number of pieces is $q$. The code provided in the above link is run on a computer (Microsoft Surface Laptop Studio) with the Intel Core i7-11370H.}
\label{fig:run_time}
\end{figure}

\end{document}